\definecolor{goodcolor}{HTML}{FFFFFF} % White
\definecolor{lightgreen}{HTML}{90EE90}
\definecolor{neutralcolor}{HTML}{2ECC71} % Green
\definecolor{badcolor}{HTML}{E74C3C} % Red
\newcommand{\added}[1]{\textcolor{black}{#1}}
\newenvironment{tightitems}{
  \vspace{-5pt}
  \begin{itemize}[leftmargin=10pt, labelsep=2pt, itemsep=-2pt]
}{
  \end{itemize}
}
\definecolor{mildblue}{RGB}{102, 153, 255} % RGB for a soft blue
\newcommand{\appref}[1]{Appendix~\ref{#1}}
\newcommand{\figref}[1]{Figure~\ref{#1}}
\newcommand{\tabref}[1]{Table~\ref{#1}}
\theoremstyle{definition} % turn off pesky italics
\newtheorem{theorem}{Theorem}
\newtheorem{proposition}{Proposition}
\newtheorem{lemma}{Lemma}
\newtheorem{remark}{Remark}
\newcommand{\startpara}[1]{{\noindent{\bf #1.}}} % custom paragraph header
\renewcommand{\url}[1]{{\def~{\char126}\sf#1}}
\newcounter{exampcount}
\def\norm#1{\|#1\|}
\def\E{{\mathbb{E}}}
\def\contextset{\Phi}
\def\context{\phi}
\def\rep{b}
\def\repset{\mathcal{B}}
\def\mdp{\mathcal{M}}
\def\prob{\mathbb{P}}
\def\reward{R}
\def\cost{C}
\icmltitlerunning{Adaptive Shielding for Safe Reinforcement Learning under Hidden-Parameter Dynamics Shifts}
\begin{document}
\twocolumn[
  \icmltitle{Adaptive Shielding for Safe Reinforcement Learning \\ under Hidden-Parameter Dynamics Shifts}
  \begin{icmlauthorlist}
    \icmlauthor{Minjae Kwon}{uva}
    \icmlauthor{Tyler Ingebrand}{uta}
    \icmlauthor{Ufuk Topcu}{uta}
    \icmlauthor{Lu Feng}{uva}
    \icmlcorrespondingauthor{Lu Feng}{lu.feng@virginia.edu}
  \end{icmlauthorlist}
  \icmlaffiliation{uva}{The University of Virginia}
  \icmlaffiliation{uta}{The University of Texas at Austin}
  \icmlkeywords{Safe Reinforcement Learning, Runtime Safety, Distribution Shift}
  \vskip 0.3in
]
\printAffiliationsAndNotice{}  % no special notice (required even if empty)

\begin{abstract}
Unseen shifts in environment dynamics, driven by hidden parameters such as friction or gravity, create a challenge for maintaining safety. 
We address this challenge by proposing Adaptive Shielding, a framework for safe reinforcement learning in constrained hidden-parameter Markov decision processes.
A function encoder infers a low-dimensional representation of the underlying dynamics online from transition data, allowing the shield to adapt. To ensure safety during this process, we use a two-layer strategy. First, we introduce safety-regularized optimization that proactively trains the policy away from high-cost regions. Second, the adaptive shielding reactively uses the inferred dynamics to forecast safety risks and applies uncertainty-aware bounds using conformal prediction to filter unsafe actions. We prove that prediction errors in the shielding connect with bounds on the average cost rate. Empirically, across Safe-Gym benchmarks with varying hidden parameters, our approach outperforms baselines on the return-safety trade-off and generalizes reliably to unseen dynamics, while incurring only modest execution-time overhead. Code is available at \href{https://github.com/safe-autonomy-lab/AdaptiveShieldingFE}{adaptive shield}.
% Project page: \small\textcolor{mildblue}{\url{https://github.com/safe-autonomy-lab/AdaptiveShieldingFE}}
% \href{}{link}
%  https://kmj1122.github.io/adaptive-shielding-project-page/    
\end{abstract}

\section{Introduction}
Robots and other autonomous systems must operate safely in real-world environments where the underlying dynamics can vary due to hidden parameters such as mass distribution, friction, or terrain compliance. These parameters often change across episodes and remain unobserved, introducing safety risks and challenging the generalization capabilities of reinforcement learning (RL) systems~\citep{kirk2023survey, benjamins2023contextualize}. 
Ensuring robust and safe behavior under such uncertainty is essential in domains like autonomous driving and robotic manipulation, where failures can have serious real-world consequences.

Recent work has advanced along two largely separate directions. On one hand, adaptive methods such as hypernetworks, contextual policies, and mixtures-of-experts~\citep{rezaei2023hypernetworks_zero_shot, beukman2023dynamics, 2024gelik_curriculum_crl_moe} enable adaptation to changing dynamics by inferring latent environmental context, but typically do not provide explicit mechanisms for enforcing safety during this process. On the other hand, safe RL frameworks based on constrained Markov decision processes (CMDPs)~\citep{achiam2017constrained, Chen2018, wachi2020safe, yang2020PCPO, yang2022cup} enforce safety by imposing constraints on cumulative costs. Methods such as state augmentation~\citep{li2022saute} and shielding~\citep{Mohammed2017, yang2023safe} enhance safety while maintaining compatibility with a wide range of safe RL algorithms. However, these approaches often assume stationary dynamics and do not perform online adaptation under parameter shift.

To address this gap, we propose an adaptive shielding framework for RL that adapts online to varying hidden parameters while offering safety bounds that are connected to the dynamics model's prediction error. Central to our approach is the use of function encoders~\citep{tyler2024zero_shot, ingebrand2025functionencodersprincipledapproach}, a compact and expressive model class that infers environment dynamics from transition data by projecting them onto neural basis functions. This low-dimensional representation enables reliable online adaptation of both the policy and shield without retraining.

To ensure safe learning and adaptation, our approach combines two complementary mechanisms that operate proactively during training and reactively at execution. First, we introduce a safety-regularized optimization that augments rewards with a cost-sensitive value estimate, encouraging the policy to sample actions leading to long-term unsafe behavior during training. However, this optimization alone cannot ensure safety under distribution shift at execution. To address this, we augment policy execution with an adaptive shield that samples candidate actions from the policy, predicts future states using a function encoder, and applies conformal prediction to construct uncertainty-aware safety margins. Actions that fall below this margin are filtered out, ensuring a conservative runtime safety filter. Empirical evaluations on Safe-Gym benchmarks~\citep{ji2023safety}, including out-of-distribution scenarios with unseen hidden parameters, demonstrate that our method reduces safety violations relative to baselines, while achieving
robust generalization with minimal runtime overhead.

In summary, our main contributions are:
\begin{tightitems}
    \item \textbf{Safety-Regularized Optimization}: We introduce a training-time objective that integrates a cost-sensitive value estimate into reward maximization, optimizing the policy toward low-violation behavior.
    
    \item \textbf{Online Hidden-Parameter Adaptation}: We use function encoders to infer a low-dimensional representation of hidden dynamics from transition data, enabling adaptation of both the policy and the shield without retraining.    
    
    \item \textbf{Adaptive Shield with Uncertainty-Awareness}: We develop an adaptive, uncertainty-aware
shield that filters unsafe actions using conformal prediction, ensuring safety during
execution. We analyze how prediction errors connect with the average cost rate.
\end{tightitems}

\subsection{Related Work}
\startpara{Safe Reinforcement Learning}
Safe RL methods often employ constrained MDP formulations to ensure compliance with safety constraints. Constrained policy optimization (CPO) remains foundational, effectively balancing performance and safety \citep{achiam2017constrained, wachi2020safe}. Further techniques has proposed projection-based updates, recovery policies, 
efficient sample-manipulation, and model-based updates~\citep{zhang2020focops, yang2020PCPO, yang2022cup, gu2024enhancing, as2025actsafe}. Recent zero-violation policy methods in RL aim to minimize safety violations using techniques like genetic cost function search, energy-based action filtering, primal-dual, and primal algorithms \citep{hu2023autocost, zhao2021model, liu2021learning, bai2023achieving, ma2024learn}. However, these approaches often face scalability issues, rely on restrictive assumptions, or are limited to simple environments. Unlike these approaches, we introduce the safety-regularized optimization that can be integrated into the training process of any CMDP-based RL algorithms. Shielding frameworks proactively filter unsafe actions, selectively sampling safe actions~\citep{Mohammed2017, carr2023shielding, yang2023safe}. Recent developments on shielding integrate adaptive conformal prediction into safety frameworks, enhancing uncertainty quantification for safety-critical planning~\citep{lindemann2023safe_planning, sheng2024pomdp_online_shield, sheng2024daptive_conformal, scarbo2025conformal}.
\added{
Control barrier functions (CBFs) offer an alternative certificate-based safety mechanism. However, learning a valid barrier certificate is often difficult under uncertain or varying dynamics, as it typically requires explicit model knowledge or robust bounds on the hidden parameters~\citep{choi2020reinforcement, cheng2023safe, ganai2023iterative, wang2023enforcing, xiao2023bnet}. For details on how this connects to our adaptive shielding mechanism, see Appendix \ref{apx: connection to control theory}.} 
However, unlike existing methods, which are not designed to address varying hidden dynamics, our approach concurrently enhances safety through a safety-regularized optimization and adaptive shielding while adapting to dynamic hidden parameters using function encoders.

\startpara{Contextual or Hidden-Parameter Reinforcement Learning}
Hidden parameters, often termed context, have been studied in recent context-aware reinforcement learning approaches, demonstrating their importance for generalization \citep{benjamins2023contextualize}.
When algorithms are provided with knowledge of the hidden parameters, they are often directly integrated into the model. For example, contextual recurrent state-space models explicitly incorporate known contextual information to enable zero-shot generalization~\citep{prasanna-rlc24a}. Contextualized constrained MDPs further integrate context-awareness into safety-prioritizing curricular learning~\citep{koprulu2025safetyprioritizing}. 
A common approach to handle unknown context information is to infer it from observational history using transformer models \citep{chen2021context}. Hypernetwork-based methods utilize adapter modules to adjust policy networks based on inferred contexts~\citep{beukman2023dynamics}. Mixture-of-experts architectures leverage specialized experts, using energy-based models to handle unknown contexts probabilistically~\citep{2024gelik_curriculum_crl_moe}. However, these works primarily focus on enhancing generalization to varying dynamics without incorporating safety mechanisms during adaptation in contrast to our method.

\startpara{Generalization in Reinforcement Learning}
Generalization in RL, including zero-shot transfer and meta learning, is crucial for robust policy adaptation to varying dynamics.
For example, meta-learning approaches, such as MAML \citep{finn2017model}, allow rapid parameter adaptation from minimal interaction data. Safe meta RL \citep{khattar2023a, guan2024cost, xu2025efficient} extends meta-reinforcement learning to adapt to new tasks while adhering to safety constraints. However, meta-learning approaches involve parameter updates during adaptation, whereas our framework focuses on rapid, online inference of hidden parameters without requiring such updates. Hypernetwork-based zero-shot transfer methods explicitly condition policies on task parameters \citep{rezaei2023hypernetworks_zero_shot}. 
Function encoders, i.e. neural network basis functions, have demonstrated strong zero-shot transfer by using the coefficients of the basis functions as a fully-informative, linear representation of the dynamics ~\citep{tyler2024zero_shot, ingebrand2024zeroshot_ode}.
Single-episode policy transfer and adaptive methods effectively handle environment changes by encoding historical context \citep{yang2019single, chen2022an}. Advanced context encoder designs further improve robustness and fast adaptation capabilities \citep{luo2022context}. While these methods excel at adapting to varying dynamics, they do not address safety constraints during adaptation, leaving agents vulnerable to unsafe actions in unseen environments.

% \input{2_related_work}
% \section{Problem Formulation \& Background}
% % -------------------- Contextualized Constrained MDP -------------------- %
% \vspace{-0.15cm}
% \subsection{Problem Formulation}
\section{Problem Formulation}
% \startpara{Constrained Hidden Parameter MDPs}
Constrained hidden-parameter MDPs (CHiP-MDPs) model environments with varying transition dynamics, where a cost function is introduced alongside a reward function to address safety constraints. A CHiP-MDP extends the HiP-MDP framework~\citep{konidaris2014hidden} and is defined by the tuple $\mathcal{M}=(S, A, \contextset, T, R, C, \gamma, P_{\contextset})$, where $S$ and $A$ are the state and action spaces, $R: S \times A \times S \rightarrow \mathbb{R}$ is a reward function, $C: S \times A \times S \rightarrow [0, 1]$ is a cost function, and $\gamma \in(0,1)$ is the discount factor. The transition dynamics $T: S \times A \times \contextset \rightarrow S$ depend on a hidden parameter $\context \in \contextset$.
For a specified hidden parameter $\context \in \contextset$, we denote the transition dynamics as $T_\context: S \times A \rightarrow S$.
The prior $P_{\contextset}(\context)$ over the parameter space $\contextset$ represents the distribution of these hidden parameters. We denote the initial state distribution as $\mu_0$.

Since the hidden parameters $\context$ are unknown to the agent, it must infer changes in the environment dynamics from observations. To this end, the agent follows a policy $\pi: S \times \repset \rightarrow A$, where $\repset$ denotes the set of learned representations of the transition dynamics $T_\context$. We denote the resulting representation by $\rep_\context$ for each $\context$. The objective is to maximize expected cumulative discounted reward while satisfying safety constraints in a CHiP-MDP $\mdp$. To formalize this objective, we define the reward action-value function, for a parameter $\context$, initial state $s$, and action $a$ as:
\begin{equation}\label{eqn: q function def}
Q_R^\pi(s, a, \rep_\context) = \mathbb{E}_{\pi, T_\context} \left[ \sum_{t=0}^{\infty} \gamma^t R(s_t, a_t, s_{t+1}) \mid s, a, \context \right].
\end{equation}
% \begin{equation}\label{eqn: q function def}
% \begin{split}
%     Q_R^\pi(s, a, \rep_\context) = \mathbb{E}_{\pi, T_\context} \Big[ &\sum_{t=0}^{\infty} \gamma^t R(s_t, a_t, s_{t+1}) \\
%     &\mid s_0 = s, a_0 = a, \context \Big].
% \end{split}
% \end{equation}

The corresponding reward state-value function, which averages $Q_R^\pi$ over actions, is:
\begin{equation}\label{eqn: v function def}
V_R^\pi(s, \rep_\context) = \mathbb{E}_{a \sim \pi(\cdot \mid s, \rep_\context)} \left[ Q_R^\pi(s, a, \rep_\context) \right].    
\end{equation}
Finally, the reward objective is defined as :
\begin{equation}
J_R(\pi) = \mathbb{E}_{\context \sim P_{\context}, s_0 \sim \mu_0(\cdot \mid \context), a_0 \sim \pi(\cdot \mid s_0, \rep_\context)} \left[ Q_R^\pi(s_0, a_0, \rep_\context) \right].    
\end{equation}
\added{
Likewise, the cost objective is defined the same way, replacing the reward function $R$ with the cost function $C$:
$J_C(\pi) = \mathbb{E}_{\context \sim P_{\context}, s_0 \sim \mu_0(\cdot \mid \context), a_0 \sim \pi(\cdot \mid s_0, \rep_\context)} \left[ Q_C^\pi(s_0, a_0, \rep_\context) \right].$
}
The safety constraints aim to minimize the average cost rate. 
To this end, we state our problem below.

% -------------------- Problem Statement --------------------
\startpara{Problem} Given a CHiP-MDPs $\mathcal{M}=(S, A, \contextset, T, R, C$$,\\ \gamma, P_{\contextset})$ where the transition dynamics $T_\context$ are fully unknown and vary with a hidden parameter $\context$, find an optimal policy $\pi^*$ that maximizes the expected cumulative discounted reward $J_R(\pi^*)$ while satisfying the safety constraints on the average cost rate,
\begin{align}
\xi^{\pi^*}(s, \context) 
&= \lim_{H \to \infty} \frac{1}{H} \mathbb{E}_{\pi^{*}, T_\context} \left[ \sum_{t=0}^{H-1} C(s_t, a_t, s_{t+1}) \mid s, \context \right] \nonumber \\
&\leq \delta,
\end{align}
where $\delta \in (0, 1)$ is a failure probability and initial state is $s$. Note that the transition dynamics depend on the hidden parameter $\context$, but the policy depends on a representation of the hidden parameter, $\rep_\context$, derived from any previously observed transitions by $T_\context$. 

% -------------------- Justification for Lagrangian Method --------------------
% \startpara{Lagrangian Approach}\textbf{}
To enforce the safety constraint, we define the cost action-value function $Q_C^\pi$ and cost state-value function $V_C^\pi$ by replacing the reward function $R$ with the cost function $C$ from Equations \ref{eqn: q function def} and \ref{eqn: v function def}. Minimizing the average cost rate can be achieved by minimizing the cost-value function $V^\pi_C$ (see Appendix \ref{apx: avg cost}).

\section{Background}
We introduce key concepts essential for understanding our methods. First, function encoders have demonstrated robust performance in estimating varying underlying dynamics~\citep{tyler2024zero_shot, ingebrand2024zeroshot_ode, ingebrand2025functionencodersprincipledapproach}. Second, conformal prediction provides a rigorous framework for quantifying uncertainty~\citep{vovk2005algorithmic, tibshirani2019conformal, gibbs2024online}.

% -------------------- Function Encoder -------------------- %
\startpara{Function Encoder}
A function encoder (FE) offers a compact and computationally efficient framework for representing functions in terms of neural network basis functions. Consider a set of functions $\mathcal{F}=\{f \mid f: \mathcal{X} \rightarrow \mathbb{R}\}$, where $\mathcal{X} \subset \mathbb{R}^n$ is an input space with finite volume. When $\mathcal{F}$ forms a Hilbert space with the inner product $\langle f, g\rangle=\int_{\mathcal{X}} f(x) g(x) d x$, any $f \in \mathcal{F}$ can be expressed using a basis $\left\{g_1, g_2, \ldots, g_k\right\}$ as $f(x)=\sum_{i=1}^k b_i g_i(x)$,
where $b_i$ are unique coefficients. To determine the coefficients, we solve the following least-squares optimization problem: 
\begin{equation}\label{eqn: least square}
(b_1, b_2, \cdots, b_k):=\underset{(b_1, b_2, \cdots, b_k) \in \mathbb{R}^k}{\arg \min }\left\|f-\sum_{j=1}^k b_j g_j\right\|_2^2. 
\end{equation}
For more information on how to train the neural network basis functions, see \citet{ingebrand2025functionencodersprincipledapproach}.

% -------------------- Conformal Prediction -------------------- %
\startpara{Conformal Prediction}  
Conformal Prediction (CP) allows for the construction of prediction intervals (or regions) that are guaranteed to cover the true outcome with a user-specified probability, under minimal assumptions. For exchangeable random variables $\left\{Z_i\right\}_{i=1}^{t+1}$, CP constructs a region satisfying:
$\prob(Z_{t+1} \leq \Gamma_t) \geq 1 - \delta,$
where $\delta \in (0,1)$ is the failure probability, and the threshold $\Gamma_t = Z_{(q)}$ is the $q$-th order statistic of $\left\{Z_1, \ldots, Z_t\right\}$, with $q = \lceil (t+1)(1-\delta) \rceil$. Adaptive Conformal Prediction (ACP) extends this to non-stationary settings by making the threshold learnable. For more information on conformal prediction, see~\citet{shafer2008tutorial, gibbs2021adaptive}.

\section{Approach}
Our approach has three main components. First, we introduce a \emph{safety-regularized optimization (SRO}).
Through this optimization, the policy is trained to converge toward a zero-violation policy. Second, we use a function encoder to represent underlying dynamics $T_\context$, enabling {\em online adaptation}.
Finally, we use this dynamics representation to construct an {\em adaptive shield}. The shield adjusts safe regions by conformal prediction and blocks unsafe actions online. 

\subsection{Safety-Regularized Optimization}\label{subsec:sro}
To promote safe policy learning, we introduce a safety measure, $Q^\pi_{\text{safe}}(s, a, \rep_\context)$, which defines a local cost-sensitivity score for an action $a \sim \pi(\cdot | s, \rep_\context)$, given the learned representation $\rep_\context \in \mathbb{R}^k$. 
The score measures how strongly the expected cumulative cost varies in a neighborhood of $a$, weighted by the policy’s local action density. Higher values of $Q^\pi_{\text{safe}}$ indicate actions with lower long-term costs under policy $\pi$. Since we aim to minimize the cost action-value function $Q^\pi_C$, higher $Q^\pi_{\text{safe}}$ values correspond to lower $Q^\pi_C$ values.
% (or higher $-Q^\pi_C$). 
Based on this intuition, we define $Q^\pi_{\text{safe}}(s, a, \rep_\context)$ for an action $a \sim \pi(\cdot |s, \rep_\context)$ as:
\begin{equation}\label{eqn: q_safe}
Q^\pi_{\text{safe}}(s, a, \rep_\context)=-\frac{\int_{B(a, \epsilon) \cap A}\pi(x \mid s, \rep_\context) Q^{\pi}_\cost(s, x, \rep_\context) dx}{V^{\pi}_\cost(s, \rep_\context) + \epsilon}
\end{equation}
where $\epsilon > 0$ is a small constant ensuring numerical stability, and $B(a, \epsilon)$ denotes a small ball of radius $\epsilon$ centered at $a$.

This formulation bounds the value in $(-1, 0]$ by its design. For continuous action spaces, the probability of taking a specific action is always $0$, so we integrate the value over a small interval including that action. For practical implementation, we use a Monte Carlo approximation to the integral by sampling several values around an action $a$ and then aggregating them. See Appendix~\ref{apx: pseudo} for details.

We interpret $Q^\pi_{\text{safe}}(s, a, \rep_\context)$ as a scalar value during optimization (i.e., we apply a stop-gradient to this term). This ensures that the policy updates focus on shifting probability mass toward safer regions. The resulting gradient for the safety objective $J_{\text{safe}}(\pi) = \E[Q^\pi_{\text{safe}}(s, a, \rep_\context)]$ is approximated as:
\begin{equation}
\nabla_\theta J_{\text {safe}}(\pi)
\approx
\mathbb{E}\left[
\nabla_\theta \log \pi_\theta \left(a \mid s, \rep_\context\right)
Q_{\text {safe}}^\pi\left(s, a, \rep_\context\right)
\right].
\label{eq:sro_signal}
\end{equation}

A value of $Q^\pi_{\text{safe}}(s, a, \rep_\context)$ near $0$ implies that $\|\nabla_\theta J_{\text{safe}}(\pi)\|$ is small, so the regularizer contributes little to the policy update. This occurs when either (i) the predicted cumulative cost in a local neighborhood of $a$ is small, or (ii) the policy assigns negligible probability mass to that neighborhood, leading to a small numerator in~\eqref{eqn: q_safe}. 
In contrast, values near $-1$ indicate that regions of high policy probability are associated with large long-term cost, producing a large gradient magnitude that discourages the policy from concentrating on locally high-risk action regions.
See Appendix \ref{apx:design choice} for details on the design choice.

To integrate this safety measure into policy optimization, we define an augmented action-value function, $Q^\pi_{\text{aug}}(s, a, \rep_\context) = Q_\reward^{\pi}(s, a, \rep_\context) + \alpha Q_{\text{safe}}^{\pi}(s, a, \rep_\context)$, where $Q_R^{\pi}(s, a, \rep_\context)$ is the reward action-value function, and
$\alpha \geq 0$ is a hyperparameter balancing safety and reward.
Our {\em safety-regularized optimization} (SRO) updates the policy by maximizing the corresponding
augmented objective:
\begin{equation}
J_\text{aug}(\pi)
=
\mathbb{E}_{\context \sim P_{\context}, \, s_0 \sim \mu_0(\cdot \mid \context), \, a_0 \sim \pi(\cdot \mid s_0, \rep_\context)}
\left[
Q^\pi_{\text{aug}}(s_0, a_0, \rep_\context)
\right].
\end{equation}
A larger $\alpha$ encourages the policy to prioritize safe actions that result in low long-term costs or to select under-explored actions with lower assigned probabilities. Next, we introduce a proposition which justifies this choice of objective.

\begin{proposition}[Reward Consistency within Zero-Violation Policies]\label{thm:prop}
Let $\Pi_{\text{zero-violation}}$ denote the set of zero-violation policies, defined as $\{\pi \mid J_C(\pi) = 0\}$. Then, for any $\alpha \geq 0$, the optimal policy obtained by maximizing the augmented objective function $J_{\text{aug}}(\pi)$ within $\Pi_{\text{zero-violation}}$ is equivalent to the optimal policy obtained by maximizing the standard reward objective $J_\reward(\pi)$ within the same set of policies.        
\end{proposition}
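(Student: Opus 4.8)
The plan is to show that on the restricted domain $\Pi_{\text{zero-violation}}$, the safety-regularized objective $J_{\text{safe}}$ and the standard reward objective $J_R$ differ by a constant (in fact, they coincide), so that they induce the same ordering of policies and therefore the same maximizer. The key observation is that $J_{\text{safe}}(\pi) = J_R(\pi) + \alpha\, \mathbb{E}_{\theta \sim P_\Theta,\, s_0 \sim \mu_0(\cdot\mid\theta),\, a_0 \sim \pi(\cdot\mid s_0, b)}[Q^\pi_{\text{safe}}(s_0, a_0, \theta)]$ by linearity of expectation applied to the definition $Q^\pi_{\text{aug}} = Q^\pi_R + \alpha Q^\pi_{\text{safe}}$, so it suffices to control the second term on $\Pi_{\text{zero-violation}}$.

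First I would unpack what $J_C(\pi) = 0$ implies. Since $C$ takes values in $[0,1]$ and $\gamma \in (0,1)$, the cost action-value function $Q^\pi_C(s,a,\theta) = \mathbb{E}_{\pi,T_\theta}[\sum_t \gamma^t C(s_t,a_t,s_{t+1})\mid s_0=s,a_0=a,\theta] \geq 0$ pointwise, and likewise $V^\pi_C(s,\theta) \geq 0$. The condition $J_C(\pi) = \mathbb{E}_{\theta, s_0}[V^\pi_C(s_0,\theta)] = 0$ together with nonnegativity forces $V^\pi_C(s_0,\theta) = 0$ for $P_\Theta$-almost every $\theta$ and $\mu_0(\cdot\mid\theta)$-almost every $s_0$. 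Next I would propagate this: for such $(s_0,\theta)$, $V^\pi_C(s_0,\theta) = \mathbb{E}_{a\sim\pi(\cdot\mid s_0,b)}[Q^\pi_C(s_0,a,\theta)] = 0$ with a nonnegative integrand forces $Q^\pi_C(s_0,a,\theta) = 0$ for $\pi$-almost every $a$. Now plug into the definition of the safety measure: whenever $a$ is in the support of $\pi(\cdot\mid s_0,b)$ at such a state,
\begin{equation}
Q^\pi_{\text{safe}}(s_0,a,\theta) = 1 - \frac{\pi(a\mid s_0,b)\, Q^\pi_C(s_0,a,\theta)}{V^\pi_C(s_0,\theta)+\epsilon} = 1 - \frac{\pi(a\mid s_0,b)\cdot 0}{0+\epsilon} = 1.
\end{equation}
Hence the regularization term equals $\alpha \cdot 1 = \alpha$, a constant independent of $\pi$, so $J_{\text{safe}}(\pi) = J_R(\pi) + \alpha$ for every $\pi \in \Pi_{\text{zero-violation}}$, and therefore $\arg\max_{\pi\in\Pi_{\text{zero-violation}}} J_{\text{safe}}(\pi) = \arg\max_{\pi\in\Pi_{\text{zero-violation}}} J_R(\pi)$.

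The main obstacle I anticipate is handling the measure-zero subtleties cleanly: $J_C(\pi)=0$ only gives $V^\pi_C = 0$ and then $Q^\pi_C = 0$ on a set of full measure with respect to the relevant distributions, and the outer expectation in $J_{\text{safe}}$ integrates precisely against those same distributions, so the "almost everywhere" qualifier is harmless — but I would state this carefully rather than gloss over it. A secondary point worth a sentence is the degenerate case $V^\pi_C(s_0,\theta) = 0$ making the denominator $\epsilon$ rather than zero; this is exactly why the $\epsilon>0$ stabilizer matters, and it makes $Q^\pi_{\text{safe}}$ well-defined and equal to $1$ on the zero-violation set without any division-by-zero issue. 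If the "optimal policy" is not unique, I would phrase the conclusion as equality of the two $\arg\max$ sets, which follows immediately since adding a constant preserves the set of maximizers.
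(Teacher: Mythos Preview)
Your proposal is correct and follows the same route as the paper: show that $J_C(\pi)=0$ forces $Q^\pi_C=0$ on the support of $\pi$, hence $Q^\pi_{\text{safe}}\equiv 1$ and $J_{\text{safe}}(\pi)=J_R(\pi)+\alpha$, so the two objectives share the same maximizers on $\Pi_{\text{zero-violation}}$. If anything, you are more careful than the paper about the almost-everywhere qualifiers and the role of the $\epsilon$ stabilizer; the paper simply asserts $Q^\pi_C(s,a,\theta)=0$ for all $s,\theta$ and supported $a$ without spelling out the nonnegativity-plus-zero-expectation step.
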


\begin{proof}[Proof Sketch]
For any policy within the zero-violation set, all actions sampled from the policy lead to $Q_C^\pi(s, a, \rep_\context)=0$. By our design of the safety term, this condition implies $Q_{\text {safe}}^\pi(s, a, \rep_\context)=0$. Substituting this into our augmented objective, $J_{\text{aug}}(\pi)$ simplifies to $J_R(\pi)$. 
\end{proof}

Proposition \ref{thm:prop} demonstrates that SRO does not degrade performance unnecessarily
when an agent already behaves safely. Specifically, it guarantees that if we focus only on the set of
policies that satisfy all safety constraints, maximizing the safety-regularized objective is equivalent to maximizing the standard reward objective. SRO preserves the standard monotonic improvement and worst-case constraint bounds of TRPO and CPO, see Appendix~\ref{apx: trpo_sro} for details.

\subsection{Inferring Hidden Parameters Online}\label{subsec:infer hidden}
To infer the underlying dynamics $T_\context$ and predict the next state $s_{t+1}$ based on transition samples and $(s_t, a_t)$, we use a function encoder, denoted by $\hat{f}_{\text{FE}}$. 
Given observed transition samples $\{(s_i, a_i, s_{i+1})\}_{i=1}^{t-1}$ and the current state-action pair $(s_t, a_t)$, the function encoder predicts the next state $\hat{s}_{t+1}$ as:
$\hat{s}_{t+1} = \hat{f}_{\text{FE}}(s_t, a_t) = \sum_{i=1}^k b_i \cdot g_i(s_t, a_t),    $

where $g_i(s_t, a_t)$ are pretrained basis functions, and $\rep_i$ are coefficients derived from a subset of transition samples. These coefficients $\rep_i$ additionally serve as a representation for $T_\context$. Due to the properties of basis functions, these coefficients provide a linear, low-dimensional representation of the underlying dynamics~\citep{tyler2024zero_shot}. We concatenate them with the state to form an augmented input $\left(s_t, \rep_1, \ldots, \rep_k\right)$, which the policy uses as input. As the agent interacts with the environment, collecting new transitions $(s_t, a_t, s_{t+1})$, we refine the coefficients $\rep_i$ by solving Equation \ref{eqn: least square} with updated transition samples. Consequently, the agent receives an online representation of the dynamics. Note that with a fixed number of basis functions $k$, the computation, involving the inverse of a $k \times k$ matrix, remains efficient even for large samples. We denote the coefficients $(\rep_1, \cdots, \rep_k)$ as $\rep_\context$ for the dynamics $T_\context$. We interpret this procedure as a lightweight online system identification, where short transition histories are mapped to a low-dimensional representation of the environment dynamics without retraining the policy.

\subsection{Adaptive Shielding Mechanism}\label{subsec: adaptive shielding}
To ensure safety during policy execution, we propose an adaptive shielding mechanism that dynamically intervenes based on uncertainty in model predictions. This shield wraps any underlying policy $\pi$, adjusting actions to prevent unsafe outcomes. We illustrate the shielding process at timestep $t$. 

% ---------------------- need to use to generalize the argument. ---------
We first introduce the necessary settings.
The cost function is defined \added{using an indicator function $\mathbb{I}$} as \( C(s_t, a_t, s_{t+1}) = \mathbb{I}\left\{ \nu(e(s_{t+1}), E_{t+1}) \leq 0 \right\} \),
where \( e: S \to \mathbb{R}^{n_1} \) extracts agent-centric safety features, \( E_{t+1} \in \mathbb{R}^{n_2} \) captures environment features, and \( \nu: \mathbb{R}^{n_1} \times \mathbb{R}^{n_2} \to \mathbb{R} \) is Lipschitz continuous \added{with Lipschitz constant $L_\nu$. We assume the agent-centric safety features change smoothly, i.e., $\left\|e\left(s_{t+1}\right)-e\left(s_t\right)\right\| \leq \Delta_{\max },$
where $\Delta_{\max }$ is a bound on the per-step feature change.}
By the Lipschitz property, the equation \( \| e(s_{t+1}) - e(s_t) \| \leq \Delta_{\max} \) implies:
\begin{equation}
\nu(e(s_{t+1}), E_{t+1}) \geq \nu(e(s_t), E_t) - L_\nu \Delta_{\max}.    
\end{equation}
Thus, if \( \nu(e(s_t), E_t) > L_\nu \Delta_{\max} \), then \( C(s_t, a_t, s_{t+1}) = 0 \) for all \( a_t \in A \). 
Since the value $\nu\left(e\left(s_t\right), E_t\right)$ can be computed at state $s_t$ before selecting action $a_t$ to assess its safety, we call it as the pre-safety indicator.
% ---------------------- need to use to generalize the argument. ---------

\setlength{\leftmargini}{10pt}
\setlength{\labelsep}{4pt}
\setlength{\itemsep}{2pt}
\begin{enumerate} 
\item \textbf{Pre-Safety Check}: To minimize intervention, we evaluate the pre-safety indicator:
\begin{equation}
\nu(e(s_t), E_t) > L_\nu \Delta, 
\end{equation}
where $\Delta$ is a predefined value larger than $\Delta_\text{max}$. If this condition is violated, full safety verification is triggered; otherwise, the policy executes directly. This pre-safety check step improves computational efficiency when full safety verification is excessive. 

\item \textbf{Action Generation}: The policy $\pi$ generates $N$ candidate actions \( \{a^{(i)}_t\}_{i=1}^N \) by sampling from its action distribution \( \pi(\cdot \mid s_t, \rep_\context) \), where $\rep_\context$ derived by a subset of transition samples up to time step $t$ explained in Section \ref{subsec:infer hidden}.

\item \textbf{Transition Prediction}: For each candidate action \( a^{(i)}_t \), a function encoder $\hat{f}_{\text{FE}}$ predicts the next state: $\hat{s}_{t+1}^{(i)} = \hat{f}_{\text{FE}}(s_t, a^{(i)}_t)$. Note that any pre-trained forward dynamics model $\hat{f}$ can be used for prediction. However, the function encoder enables inference of varying underlying dynamics and next-state prediction at once.

\item \textbf{Safety Verification}: Using ACP, we compute uncertainty-aware safety margins for each action:
\begin{equation}
\text{SafetyScore}(a^{(i)}_t) = \nu\left(e(\hat{s}_{t+1}^{(i)}), \hat{E}_{t+1}\right) - 2 L_{\nu} \Gamma_t,
\end{equation}
where $\hat{E}_{t+1}$ represents predicted environment features and $\Gamma_t$ is the adaptive conformal prediction bound for $\hat{s}_{t+1}^{(i)}$ and $\hat{E}_{t+1}$ calibrated to target an empirical miscoverage level $\delta$ for the next-state prediction error. Actions are ranked by their safety scores, with positive scores indicating safety compliance. We treat $\Gamma_t$ as an adaptive uncertainty estimate to recent observations, showing a time-varying uncertainty estimate.

\item \textbf{Action Selection}: Define the safe action set at state $s_t$ as \( \hat{A}_{\text{safe}}(s_t) = \{a^{(i)}_t : \text{SafetyScore}(a^{(i)}_t) > 0 \} \) and sampled action set as 
\( \hat{A}_{\text{sample}} = \{a^{(i)}_t\}_{i \in [N]} \)
. The shield executes the following selection rule:
\begin{equation}
    a_t^* = 
    \begin{cases}
        a \sim \mathcal{U}(\text{Top}_k(\hat{A}_{\text{safe}}(s_t))), & \text{if } \hat{A}_{\text{safe}} \neq \emptyset, \\
        \arg\max_{a \in \hat{A}_{\text{sample}}} \text{SafetyScore}(a), & \text{otherwise},
    \end{cases}
\end{equation}
where \( \mathcal{U}(\text{Top}_k(\cdot))\) denotes a uniform distribution over the top \( k \) actions ranked by their safety scores.
\end{enumerate}
When the shield predicts multiple steps $h$ ahead, we repeat the procedure for steps 2, 3, 4, aggregating the safety score over future steps. However, long-term predictions often increase compounding errors and runtime. Thus, we typically use a shorter prediction horizon such as $h=1$ or $h=2$.

The following theorem characterizes how prediction errors in the learned dynamics model connects with a bound on the average cost rate of a policy augmented with the adaptive shield. See \appref{apx: proof} for details. 
\begin{theorem}\label{thm:main}
Given a Constrained Hidden Parameter MDP $\mathcal{M}=(S, A, \contextset, T, R, C, \gamma, P_{\contextset})$ with initial state $s_0 \in S$ and failure probability $\delta \in (0,1)$, assume the prediction error of the learned dynamics model satisfies
$\mathbb{P}(\|s_{t+1} - \hat{s}_{t+1}\| \leq \Gamma_t) \geq 1 - \delta$.
Then, for any policy $\pi$ augmented with the adaptive shield and $\context \sim P_{\contextset}$, there exists $0 \leq \bar{\epsilon} \leq 1$ such that the average cost rate
 constraint: for $\context \sim P_{\contextset}$ and some $0 \leq \bar{\epsilon} \leq 1$,
\begin{align*}
\xi^{\pi}\left(s, \context\right)& =\lim _{H \rightarrow \infty} \frac{1}{H} \mathbb{E}_{\pi, T_\context}\left[\sum_{t=0}^{H-1} C(s_t, a_t, s_{t+1}) \mid s_0 = s, \context \right] \\
& \leq \delta +  \bar{\epsilon}(1 - \delta).    
\end{align*}
\end{theorem}
Given this bound, if safe actions exist at each step, this theorem proves that our algorithm achieves a low average cost rate constraint, governed by the ACP failure probability, i.e., $\xi^{\pi^*}\left(s, \context \right) \leq \delta$.

\begin{proof}[Proof Sketch]
The ACP provides an online-calibrated bound between the predicted state 
$\hat{s}_{t+1} = \hat{f}(s_t, a_t)$ and the true state $s_{t+1}$:
$
\mathbb{P}\left(\|s_{t+1} - \hat{s}_{t+1}\| \leq \Gamma_t\right) \geq 1 - \delta,
$
where $\Gamma_t$ is the confidence region at time $t+1$. 
This bound holds under the assumption that the adaptive conformal procedure maintains calibration of the prediction error, such that $\|s_{t+1} - \hat{s}_{t+1}\| > \Gamma_t$ remains close to the ACP failure probability $\delta$ over time. Since $\nu$ is Lipschitz continuous with constant $L_{\nu}$, we bound the difference in the safety margin between the true and predicted states: 
$
\nu\left(e\left(s_{t+1}\right), E_{t+1}\right) \geq \nu(e\left(\hat{s}_{t+1}\right), \hat{E}_{t+1})-
L_{\nu}\left\|e\left(s_{t+1}\right)-e\left(\hat{s}_{t+1}\right)\right\|-L_{\nu}\left\|E_{t+1}-\hat{E}_{t+1}\right\|.
$
Given that $e$ and $E_{t+1}$ depend on the state prediction, their errors are bounded with high probability: if $e$ is Lipschitz with constant $L_e$, then $\left\|e\left(s_{t+1}\right)-e\left(\hat{s}_{t+1}\right)\right\| \leq L_e \Gamma_t$; similarly, the error in $E_{t+1}$ is bounded by $\Gamma_E \leq L_E \Gamma_t$. For simplicity, we take a uniform bound $\Gamma_t$ when $L_e, L_E \leq 1$. The set of safe actions is defined as:
$\hat{A}_{\mathrm{safe}}\left(s_t\right)=\{a \in A \mid \nu(e(\hat{f}\left(s_t, a\right)), \hat{E}_{t+1})>2 L_{\nu} \Gamma_t\}.
$
If an action is selected from $\hat{A}_{\text {safe }}$, we guarantee $\nu\left(e\left(s_{t+1}\right), E_{t+1}\right)>0$, ensuring a safe state at $t+1$ . The final bound depends on the failure probability of state prediction and the probability of selecting safe actions.
\end{proof}
Guaranteeing $\bar{\epsilon}=0$ requires complete knowledge of dynamics, which is impossible in many settings. Instead, the theorem distinguishes the uncertainty we can reduce through learning from the uncontrollable risks of environments.
\section{Experiments}
We empirically evaluate our approach to assess its safety, generalization, and efficiency across diverse RL tasks. We compare against six safe RL algorithms and analyze three variants of our method: using only safety-regularized optimization (SRO), only the adaptive shield, and their combination. Our experiments are guided by the following research questions:
\begin{tightitems}
    \item \textbf{RQ1:} How does our approach balance safety and task performance during training without being informed of changing hidden parameters?
    \item \textbf{RQ2:} How well does our approach generalize to out-of-distribution test environments by inferring varying hidden parameters online?
    \item \textbf{RQ3:} What is the execution-time overhead introduced by the adaptive shielding mechanism?
\end{tightitems}

\subsection{Experimental Setup}
\startpara{Environments}
We conduct experiments using the Safe-Gym benchmark~\citep{ji2023safety} for safe RL, with two robot types: \emph{Point} and \emph{Car}. Each robot performs four tasks:
(1) \emph{Goal}: navigate to a target while avoiding obstacles;
(2) \emph{Button}: activate a button while avoiding hazards;
(3) \emph{Push}: push an object to a goal under contact constraints;
(4) \emph{Circle}: follow a circular path while staying within safe boundaries.
Robot-task combinations are denoted as {robot}-{task} (e.g., Point-Goal, Car-Circle). Each task includes a safety constraint (e.g., obstacle avoidance or region adherence). Episode-level randomness is introduced by sampling gravity, and four hidden dynamics parameters: damping, mass, inertia, and friction.

\startpara{Baselines}
We compare our approach to six safe RL algorithms:
\begin{tightitems}
\item \textbf{Saute}: A state augmentation technique with safety budgets for almost sure constraint satisfaction, applicable to a wide range of RL algorithms such as PPO or RCPO~\citep{li2022saute}.
\item \textbf{PPO-Lag}: Proximal Policy Optimization with Lagrangian updates for both reward and constraint~\citep{schulman2017ppo, ray2019benchmarking}.
\item \textbf{RCPO}: Reward Constrained Policy Optimization, which uses policy gradients to optimize a reward function penalized by safety violations~\citep{Chen2018}.
\item \textbf{CPO}: Constrained Policy Optimization with joint second-order updates to enforce linearized cost constraints~\citep{achiam2017constrained}.
\item \textbf{CUP}: Constrained Update Projection, a policy optimization method that projects updates to satisfy safety constraints with theoretical guarantees~\citep{yang2022cup}.
\item \textbf{USL}: Unrolling Safety Layer, which re-weights the policy loss for safety and projects unsafe actions into a feasible set at execution~\citep{zhang2023evaluating}.
\end{tightitems}

We compare against oracle baselines that receive the ground-truth hidden parameters $\phi$ as input. This represents a theoretical upper bound for any history-based inference method (e.g., Transformer), which can at best recover $\phi$. In contrast, our approach uses $\hat{f}_{\text{FE}}$ to infer hidden parameters online and is evaluated without this oracle information, demonstrating robustness under limited parameter awareness. \added{We use RCPO without access to the hidden parameters $\phi$ as the base RL algorithm. On top of this, we evaluate RCPO combined with SRO, RCPO combined with Shield, and RCPO combined with both SRO and Shield. For brevity, we refer to these as SRO, Shield, and SRO + Shield, respectively.} We also provide results using PPO-Lag combined with our methods in Appendix \ref{apx: adaptive shielding with ppo}.

\startpara{Hyperparameters}
All methods use the default hyperparameters provided by their respective implementations: Omni-Safe~\citep{omnisafe} for Saute, PPO-Lag, RCPO, CPO, and CUP, and Safe-RL-Kit~\citep{zhang2023evaluating} for USL. When evaluating our approach on top of each base algorithm (e.g., RCPO), we adopt the same hyperparameters as the corresponding baseline to ensure a fair comparison. Each method is trained for 2 million environment steps using 3 random seeds. Each trained policy is evaluated over 100 episodes at test time. 

We set the pre-safety distance to 0.275 and the ACP failure probability to 2\%. The function encoder $\hat{f}_{\text{FE}}$ is pre-trained on 1000 episodes (1000 steps each) collected by a trained PPO policy. The $\hat{f}_{\text{FE}}$ remains fixed during policy training, introducing realistic prediction error that is managed by ACP. All agents are trained under a strict safety constraint, with a cost limit of zero.

For training, environment parameters $\context$ (gravity, damping, mass, inertia, friction) are sampled uniformly from the interval $[0.3, 1.7]$. For out-of-distribution evaluation, the parameters are sampled from the interval $[0.15, 0.3] \cup [1.7, 2.5]$, and the number of obstacles is increased to stress generalization.

\startpara{Metrics}
We evaluate each method using per-episode averages for the following metrics, each capturing a different aspect of performance:
(1) \textit{Return}, measuring task performance as the cumulative reward per episode;
(2) \textit{Cost Rate}, reflecting safety by measuring the frequency of constraint violations per timestep.
(3) \textit{Runtime}, quantifying execution-time efficiency as the wall-clock time per episode; and
(4) \textit{Shielding Rate}, indicating how often the shield is triggered during an episode (i.e., when the pre-safety check fails).

\subsection{Results Analysis}
% -------------------------- RQ1: --------------------------
% systematic way to plot the table
\begin{figure}[!t]
    \centering
    \includegraphics[width=1\columnwidth]{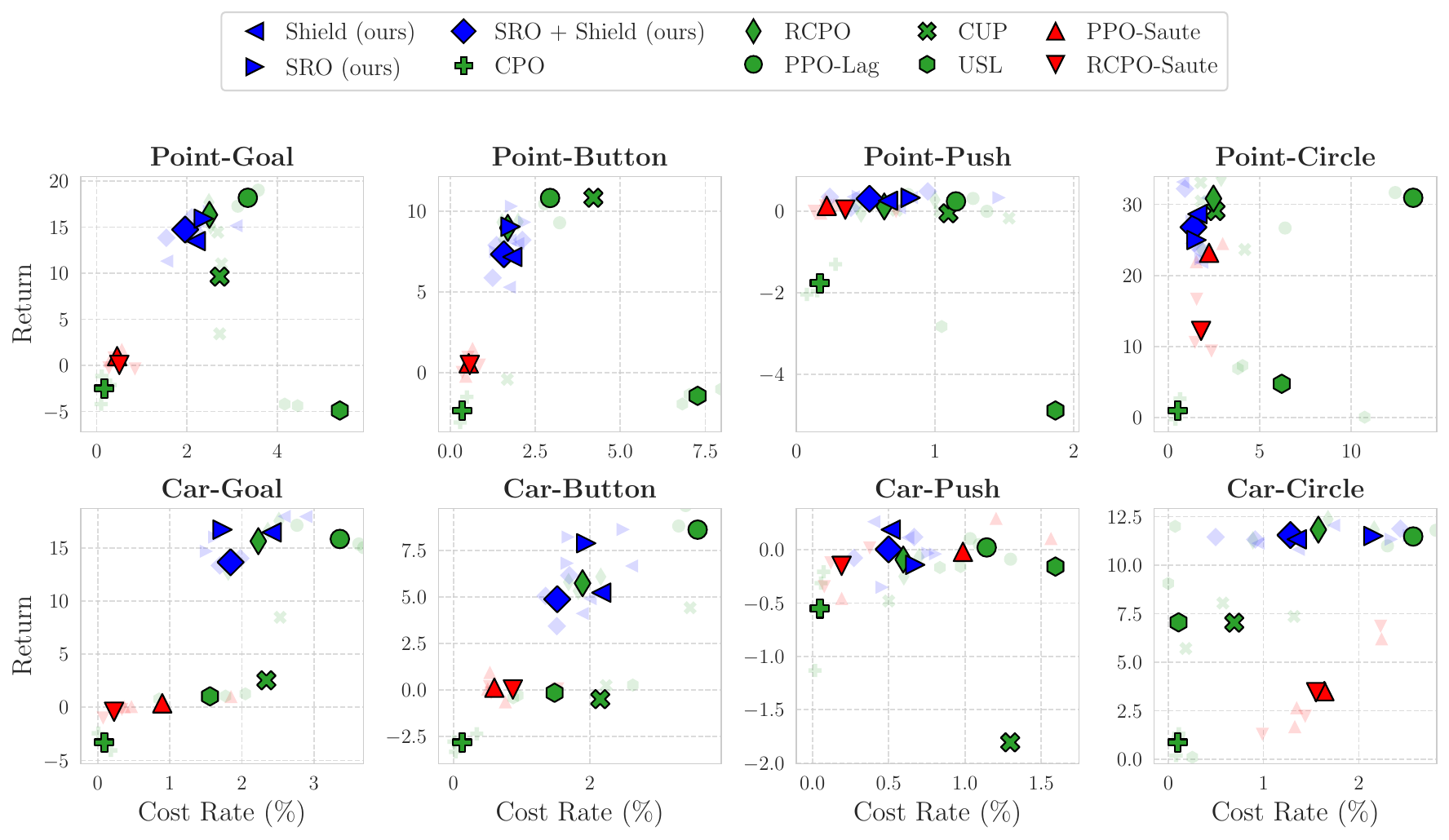}
    \caption{\textbf{Training Dynamics.} Results display the mean reward and cost rate (\%) over the last 20 epochs across seeds. The top-left position is desirable, indicating higher returns with lower cost rates. Solid points represent mean return and cost rate, while transparent points depict individual seed results.}
    \label{fig:rq1}
\end{figure}

\startpara{RQ1: Trade-offs Between Safety and Return}
Figure \ref{fig:rq1} shows the episodic return and cost rate across four tasks during training.
Baseline methods exhibit a range of trade-offs. PPO-Lag tends to achieve high returns but incur higher cost rates. CPO, PPO-Saute, and RCPO-Saute enforce strict safety via a zero-violation constraint, often sacrificing reward learning, which leads to suboptimal policies in multiple tasks. USL, dependent on cost-Q-value estimation, underperforms across all tasks, particularly in environments with randomized obstacle configurations. CUP preserves task-solving performance but frequently violates strict cost limits. RCPO maintains a reasonable balance between safety and return but struggles to meet lower cost thresholds.
These results highlight the challenge of balancing safety and returns, even when hidden parameters are provided as inputs, in the presence of varying hidden parameters. To assess the impact of providing fixed parameters, we compare performance with and without fixed parameters. See \appref{apx: hidden_vs_fixed} for details.

In contrast, our methods (using RCPO as the base RL algorithm) consistently achieve lower cost rates while maintaining competitive returns, demonstrating their ability to balance safety and task performance during training. Variants using only SRO or only adaptive shield also reduce cost violations compared to baselines, but are less effective than the combined method. We observe similar results when using PPO-Lag as the base RL algorithm for our methods (see Appendix~\ref{apx: adaptive shielding with ppo}).

\textit{Takeaway: The combined method (SRO + Shield) achieves a favorable reward-cost trade-off during training and maintains stable performance under unseen variations in environmental parameters.}
% -------------------------- RQ2: --------------------------
\begin{figure}[!t]
    \centering
    \includegraphics[width=1\columnwidth]{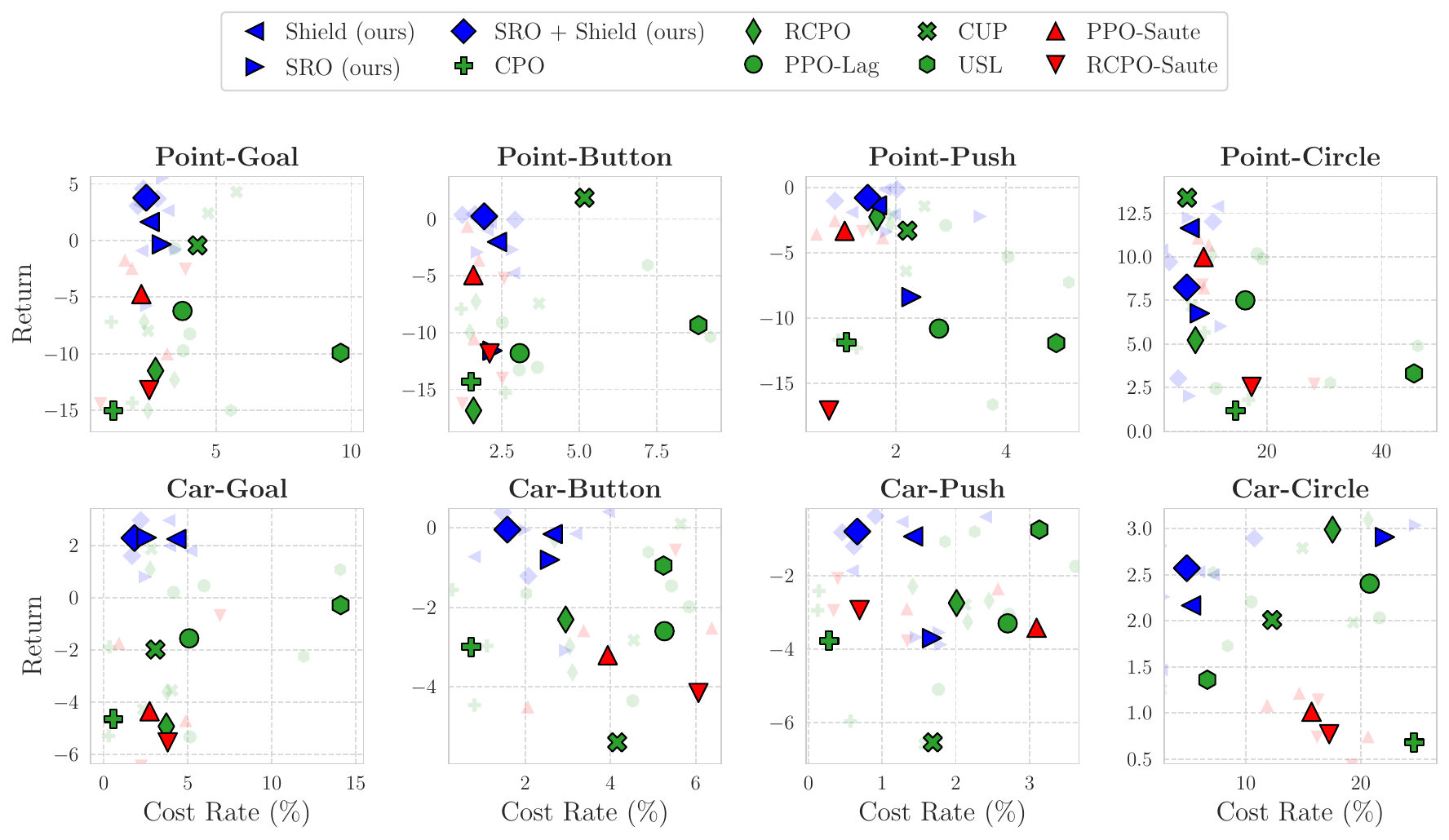}
    \caption{\textbf{OOD Evaluation.} Trade-off between average episodic return and cost rate in out-of-distribution domains. The top-left position is desirable, indicating higher returns with lower cost rates. Solid points represent mean return and cost rate, while transparent points depict individual seed results.}
    \label{fig:rq2_level_2}
\end{figure}

\startpara{RQ2: Generalization to Out-of-Distribution Environments}
\figref{fig:rq2_level_2} illustrates the trade-off between average episodic return and cost rate in out-of-distribution test environments across all tasks. Each marker corresponds to the mean performance across three trained policies (one per seed) for a given method, evaluated separately on Point and Car robots. 

Our combined method (SRO + Shield) consistently appears near the desirable position (high return and low cost rate) across all tasks, indicating robust adaptation to previously unseen dynamics. SRO-only and Shield-only variants also perform well but tend to deviate more from the desirable position. Shield typically remains stable because its logic, which filters unsafe actions based on predicted states and safety measures like proximity to obstacles, holds regardless of OOD conditions. This trend highlights the complementary effect of combining proactive (SRO) and reactive (Shield) safety mechanisms. 

Among the baselines, CPO and Saute maintain low cost rates but sacrifice return, often positioning them outside the desirable region. The remaining algorithms exhibit high variance and sensitivity to dynamics shifts, failing to maintain a consistent safety-performance trade-off across environments.

\textit{Takeaway: The combined method effectively bridges the gap between safety and adaptation, maintaining a favorable return-safety trade-off under out-of-distribution dynamics where non-adaptive baselines fail.}

\startpara{RQ3: Execution-Time Efficiency}
\tabref{tab:rq3_runtime} compares the average runtime per episode between the baseline method (RCPO) and our combined approach (SRO + Shield). Across all tasks and both robot types, the proposed method introduces a modest execution-time overhead relative to the base policy. Additionally, the shield trigger rate remains moderate, indicating that safety shielding is invoked selectively and does not dominate execution time.

\textit{Takeaway: The combined method incurs limited additional computational cost over the base policy while providing consistent runtime safety filtering.}

\setlength{\textfloatsep}{3pt} 
\begin{table}[t]
    \centering
    
    \resizebox{\columnwidth}{!}{% % Starts resize to column width
    \begin{tabular}{l|l|c|c|c|c}
        \toprule
        \textbf{Robots} & \textbf{Methods \& Metrics} & \textbf{Goal} & \textbf{Button} & \textbf{Push} & \textbf{Circle} \\
        \midrule
        \multirow{3}{*}{Car}
        & RCPO Runtime (s) & 3.94$\pm$0.03 & 4.18$\pm$0.04 & 4.72$\pm$0.01 & 2.58$\pm$0.02 \\
        & Ours Runtime (s) & 5.05$\pm$0.09 & 5.63$\pm$0.10 & 6.54$\pm$0.13 & 3.04$\pm$0.03 \\
        & Ours Shield Triggers (\%) & 18.10$\pm$2.16 & 27.51$\pm$2.24 & 25.72$\pm$1.37 & 7.32$\pm$2.95 \\
        \midrule
        \multirow{3}{*}{Point}
        & RCPO Runtime (s)  & 3.81$\pm$0.01 & 4.14$\pm$0.03 & 3.48$\pm$0.01 & 2.49$\pm$0.16 \\
        & Ours Runtime (s) & 4.32$\pm$0.66 & 5.60$\pm$0.11 & 5.03$\pm$0.21 & 2.61$\pm$0.04 \\
        & Ours Shield Triggers (\%) & 15.51$\pm$3.82 & 29.07$\pm$2.49 & 7.53$\pm$0.54 & 7.25$\pm$2.84 \\
        \bottomrule
    \end{tabular}
    } % Ends resizebox
    \vspace{0.1cm}
    \caption{Runtime (in seconds) and shielding rate (in percent) across tasks for each robot type. “Ours” refers to the combined method using SRO and the adaptive shield.}
    \label{tab:rq3_runtime}
\end{table}

\startpara{Ablation Studies}
We provide additional analyses in the Appendix to assess the robustness of our proposed shielding framework. Key findings include: (1) Compatibility to other Algorithms: We demonstrate that the proposed runtime shielding mechanism improves safety across different base RL algorithms (Appendix~\ref{apx: adaptive shielding with ppo}). This demonstrate that our adaptive shielding does not depend on a specific base algorithm. (2) Dynamics Representation: We show that Function Encoder shows more stable dynamics representations than a Transformer-based alternative, and achieves next-state prediction accuracy comparable to oracle-informed predictors under distribution shift (Appendices~\ref{apx: representation} and \ref{apx:alternative dynamics predictors}). (3) Transfer to Higher-Dimensional Control: We extend evaluation to HalfCheetah-Velocity task with multiple hidden parameters, providing evidence that the proposed approach performs well in more complex task environments (Appendix~\ref{apx: half cheetah}). Additional sensitivity analyses on safety-bonus weight and action-sampling size are shown in Appendix~\ref{apx: sb and ss}. We derive how SRO in theory connects to TRPO or CPO algorithms in Appendix~\ref{apx: trpo_sro}. We provide supporting evidence for our SRO design choice in Appendix~\ref{apx:design choice}. We provide the relationship between our proposed adaptive shield and classical control theory in Appendix~\ref{apx: connection to control theory}.

\section{Conclusion}
We present an adaptive shielding framework for safe and generalizable reinforcement learning that addresses the challenge of maintaining safety while adapting to varying hidden dynamics. By using function encoders, the shield performs inference of hidden dynamics. During training, safety-regularized optimization promotes low-violation behavior in the policy. At execution time, the adaptive shield uses conformal prediction to filter unsafe actions based on uncertainty. Across out-of-distribution Safe-Gym benchmarks, we empirically demonstrate consistent reductions in safety violations relative to baselines while preserving competitive task performance and incurring only modest computational overhead. Our theoretical analysis characterizes how prediction errors in the learned dynamics model connect with bounds on the average cost rate of the shielded policy.

Despite its effectiveness, our approach has several limitations. First, the safety guarantees rely on
assumptions about the structure of the cost function, although these apply to a broad range of practical
scenarios. Second, the method depends on an offline dataset to train the function encoder, which
may limit applicability in settings without prior data. Third, our evaluation has so far been limited
to simulated environments. Future work will aim to address these limitations by relaxing modeling
assumptions, reducing reliance on offline data, and extending evaluations to physical robotic platforms
to assess scalability and real-world applicability.

\section*{Reproducibility Statement}
To ensure the reproducibility of our results, we provided the complete \href{https://github.com/safe-autonomy-lab/AdaptiveShieldingFE}{source code} including a Bash script to execute the entire training and evaluation pipeline. This streamlined approach facilitates use by the research community and provide a reliable baseline for Safe and generalizable RL.

% the complete source code: \href{https://osf.io/fq2zn/overview?view_only=8a1a021c8f73458e9bda03547fc13761}{anonymous link}. 

\section*{Broader Impacts}
This work addresses how to maintain safety for RL agents under unseen distribution shifts in environment dynamics caused by hidden parameters (e.g., friction, mass, gravity). We propose an adaptive, uncertainty-aware shielding framework that infers a low-dimensional dynamics representation online and filters unsafe actions at execution time.

\textbf{Benefits.} Runtime shielding can reduce constraint violations without retraining, improving reliability of learning-based controllers in safety-critical domains such as robotics and autonomous systems. The use of uncertainty-aware bounds encourages conservative behavior when predictions are unreliable.

\textbf{Risks.} A primary risk is \emph{false confidence}: probabilistic safety guarantees may be over-trusted outside our modeling assumptions or when the offline data used to train the function encoder poorly covers extreme dynamics. In such cases, safety filtering may degrade or fail.

\textbf{Mitigations.} We recommend treating shielding as one layer in a safety stack, and stress testing under harder distribution shifts, rare events, and adversarial parameter settings. Reporting trigger rates and failure cases can help detect operation outside the method’s reliable guarantees.
% Adding Reproducibility Statement section

\bibliographystyle{icml2026}
\bibliography{references} % Points to references.bib
\appendix
% Currently commented out due to compiling speed
\section{Proofs}\label{apx: proof}
This section presents our main theoretical results, including proofs. We first introduce the necessary notations.

\startpara{Notations}
We introduce the notation for dimensions and sets as follows. For $n, m, k_1, k_2 \in \mathbb{N}$, let $n$ denote the state dimension, $m$ the action dimension, $k_1$ the hidden parameter dimension, and $k_2$ the dimension of the function encoder’s learned representation. Note that the dimension $k_2$ of the learned representation may differ from $k_1$, depending on the number of chosen basis functions. The state space is $S \subseteq \mathbb{R}^n$, the action space is $A \subseteq \mathbb{R}^m$, the hidden parameter space is $\contextset \subseteq \mathbb{R}^{k_1}$, and the learned representation space is $\repset \subseteq \mathbb{R}^{k_2}$, where $\repset$ is the coefficient space of basis functions for the function encoder.

\begin{itemize}
    \item $\mdp$: Constrained Hidden Parameter Markov Decision Process.
    \item $s_t \in \mathbb{R}^n$: State at time step $t$.
    \item $\hat{s}_{t} \in \mathbb{R}^n$: Predicted State for time step $t$.
    \item $X_{i,t} ~(X_t) \in \mathbb{R}^2$: Position of the $i$-th obstacle at time step $t$. The index $i$ is omitted when referring to a single obstacle without ambiguity.
    \item $\hat{X}_{i, t} ~(\hat{X}_t)\in \mathbb{R}^2$: Predicted position of the $i$-th obstacle at time step $t$. The index $i$ is omitted when referring to a single obstacle without ambiguity.
    \item $a_t \in \mathbb{R}^m$: Action at time step $t$.
    \item $\hat{f}: S \times A \rightarrow S$: Continuous transition dynamics predictor. 
    \item $\context \in \mathbb{R}^{k_1}$: hidden parameter.
    \item $\rep_\context \in \mathbb{R}^{k_2}$: learned representation to $T_\context$.
    \item $s' \sim T(\cdot \mid s, a, \context)$: Transition dynamics given parameter $\context$. When $s, a, s'$ are unspecified, we denote this by $T_\context$.
    \item $\cost: S \times A \times S \rightarrow [0, 1]$: Cost function bounded in $[0, 1]$.
    \item $\mathcal{S}_{\text{safe}}(s_t, a_t)$: Safe state set, defined as $\{s_t \in \mathcal{S} \mid C(s_t, a_t, s_{t+1}) = 0 \text{ s.t } T(s_{t+1} \mid s_t, a_t, \context)> 0\}$ where $\context$ is a parameter sampled per episode.
    
    \item $Q_R^{\pi}(s, a, \rep_\context)$: State-action value function for reward under policy $\pi$ defined as
    $$
    \mathbb{E}_{\pi, T_\context} \left[ \sum_{t=0}^{\infty} \gamma^t R(s_t, a_t, s_{t+1}) \mid s_0 = s, a_0 = a, \context \right].
    $$
    We aim to maximize this value.
    \item $Q^{\pi}_{C}(s, a, \rep_\context)$: State-action value function for cost under policy $\pi$ defined as
    $$
    \mathbb{E}_{\pi, T_\context} \left[ \sum_{t=0}^{\infty} \gamma^t C(s_t, a_t, s_{t+1}) \mid s_0 = s, a_0 = a, \context \right].
    $$
    We aim to minimize this value.
    \item $Q^\pi_{\text{aug}}(s, a, \rep_\context)$: Safety-regularized state-action value function defined as
    $Q_\reward^{\pi}(s, a, \rep_\context) + \alpha Q_{\text{safe}}^{\pi}(s, a, \rep_\context)$
    where 
    $\alpha$ is a positive constant and
    $$Q^\pi_{\text{safe}}(s, a, \rep_\context)=-\frac{\int_{B(a, \epsilon) \cap A}\pi(x \mid s, \rep_\context) Q^{\pi}_\cost(s, x, \rep_\context)dx}{V^{\pi}_\cost(s, \rep_\context) + \epsilon}$$
    where $\epsilon > 0$ is a small constant ensuring numerical stability, and $B(a, \epsilon)$ denotes a small ball of radius $\epsilon$ centered at $a$.
    We aim to maximize this value, whose maximum value is $0$.

    \item $J_{\text{safe}}(\pi)$: Safety-regularizer for policy $\pi$ defined as $$\mathbb{E}_{\context \sim P_{\context}, s_0 \sim \mu_0(\cdot \mid \context), a_0 \sim \pi(\cdot \mid s_0, \rep_\context)}\left[Q^\pi_{\text{safe}}(s_0, a_0, \rep_\context)\right].$$
    The policy $\pi$ aims to maximize this value.

    \item $J_{\text{aug}}(\pi)$: Safety-regularized objective function for policy $\pi$ defined as $$\mathbb{E}_{\context \sim P_{\context}, s_0 \sim \mu_0(\cdot \mid \context), a_0 \sim \pi(\cdot \mid s_0, \rep_\context)}\left[Q^\pi_{\text{aug}}(s_0, a_0, \rep_\context)\right].$$
    The policy $\pi$ aims to maximize this value.
    
    \item $J_R(\pi)$: Standard reward objective function for policy $\pi$ defined as
    $$\mathbb{E}_{\context \sim P_{\context}, s_0 \sim \mu_0(\cdot \mid \context), a_0 \sim \pi(\cdot \mid s_0, \rep_\context)}\left[Q^\pi_{R}(s_0, a_0, \rep_\context)\right].$$
    The policy $\pi$ aims to maximize this value.

    \item $J_C(\pi)$: Standard cost objective function for policy $\pi$ defined as
    $$\mathbb{E}_{\context \sim P_{\context}, s_0 \sim \mu_0(\cdot \mid \context), a_0 \sim \pi(\cdot \mid s_0, \rep_\context)}\left[Q^\pi_{C}(s_0, a_0, \rep_\context)\right].$$
    The policy $\pi$ aims to minimize this value.
    \item $\Pi_{\text{zero-violation}}$: Set of zero-violation policies defined as $\{ \pi \mid J_C(\pi) = 0\}.$

    \item $\delta \in (0, 1)$: Failure probability.
    \item $\Gamma_t \in \mathbb{R}^{+}$: Adaptive Conformal Prediction (ACP) threshold at time step $t$.
     \item $\xi^{\pi}(s, \context)$: Average cost under policy $\pi$ defined as 
     $$
     \xi^{\pi^*}\left(s, \context\right)=\lim _{H \rightarrow \infty} \frac{1}{H} \mathbb{E}_{\pi^*, T_\context}\left[\sum_{t=0}^{H-1} C(s_t, a_t, s_{t+1}) \mid s, \context \right],$$ starting from an initial state $s$ and parameter $\context$.
    \item $\pi^*$: Optimal policy satisfying constraint on the average cost rate.
\end{itemize}
% -------------------- Proposition --------------------
We restate our proposition and theorems, then provide detailed proofs.
\vspace{0.01cm}

\startpara{Proposition \ref{thm:prop}}
Let $\Pi_{\text{zero-violation}}$ be the set of zero-violation policies. Then, for any $\alpha \geq 0$, the optimal policy obtained by maximizing the safety-regularized objective function $J_{\text{aug}}(\pi)$ within $\Pi_{\text{zero-violation}}$ is equivalent to the optimal policy obtained by maximizing the standard reward objective $J_\reward(\pi)$ within the same set of policies.        

\begin{proof}
By definition, if $\pi \in \Pi_{\text{zero-violation}}$, then for any state $s$, parameter $\context$, and action $a$ with $\pi(a|s, \rep_\context) > 0$, the state-action value function for the cost is zero: $Q^{\pi}_{C}(s, a, \rep_\context) = 0$. Hence, any action sampled around $a$ by the policy $\pi$ will have $Q^{\pi}_{C}(s, a, \rep_\context) = 0$ leading to $$\int_{B(a, \epsilon) \cap A}\pi(x \mid s, \rep_\context) Q^{\pi}_\cost(s, x, \rep_\context)dx=0.$$
This implies that for such policies, the safety term $Q^\pi_{\text{safe}}(s, a, \rep_\context)$ in the regularized objective is a constant value of $0$.

Therefore, for any policy $\pi \in \Pi_{\text{zero-violation}}$, the safety-regularized objective function becomes:
\begin{align*}
  & J_{\text{aug}}(\pi) \\
  & = \mathbb{E}_{\context \sim P_{\context}, s_0 \sim \mu_0(\cdot \mid \context), a_0 \sim \pi(\cdot \mid s_0, \rep_\context)}\left[Q_R^{\pi}(s_0, a_0, \rep_\context) + \alpha \cdot Q^\pi_{\text{safe}}(s_0, a_0, \rep_\context) \right] \\
  & = \mathbb{E}_{\context \sim P_{\context}, s_0 \sim \mu_0(\cdot \mid \context), a_0 \sim \pi(\cdot \mid s_0, \rep_\context)}\left[Q_R^{\pi}(s_0, a_0, \rep_\context) + \alpha \cdot 0\right] \\
  & = J_R(\pi)
\end{align*}
Thus, maximizing $J_{\text{aug}}(\pi)$ within $\Pi_{\text{zero-violation}}$ is equivalent to maximizing $J_R(\pi)$ within the same set. 
\end{proof}

% -------------------- Setup --------------------
Next, we will prove our main theorem. To prove the main theorem, we first establish the necessary notation and settings. 

We define the safe state set for a given state $s \in S$, action $a \in A$, and hidden parameter $\context \in \contextset$ as:
$
\mathcal{S}_{\mathrm{safe}}(s, a) = \left\{ s' \in S \mid C(s, a, s') = 0  \text { s.t. } T\left(s^{\prime} \mid s, a, \context\right)>0 \right\}.
$
Thus, $\mathcal{S}_{\text{safe}}(s, a)$ contains next states $s'$ where the safety condition is satisfied. 
Throughout the theorem, we address a cost function defined by safe distance
$$\cost(s_t, a_t, s_{t+1}) = 
    \begin{cases} 
    1 & \text{if } \min_{X_{t+1}} \norm{pos(s_{t+1}) - X_{t+1}} \leq d \\ 
    0 & \text{otherwise},
\end{cases}
$$
where $X_{t+1}$ denotes the positions of obstacles at time step $t+1$, and $\text{pos}(s_{t+1})$ represents the agent's position in state $s_{t+1}$. As our theorem applies to any norm satisfying the triangle inequality, we do not specify a particular norm.

We assume the state includes the agent's position, a natural choice for navigation tasks. Generally, the state contains critical information needed to evaluate the cost function. Hence, a function $\text{pos}: S \to \mathbb{R}^2$ is a projection mapping, which is 1-Lipschitz continuous, meaning that 
$$\norm{\text{pos}(s) - \text{pos}(s')} \leq \norm{s - s'}$$ 
for all $s, s' \in S$. 

We assume that the obstacle position $X_{t}$ can be derived from the state $s_{t}$. This assumption is reasonable, as the agent’s state typically includes safety-critical information. For instance, robots in navigation tasks use sensors to detect nearby obstacles, with this information integrated into the agent’s state. This setup applies to all navigation environments in Safety Gymnasium. Formally, we assume a 1-Lipschitz continuous function $\text{sensor}: S \to \mathbb{R}^{2M}$, defined as $\text{sensor}(s_t) = [X_{1,t}, X_{2,t}, \ldots, X_{M,t}]$, where $X_{i,t} \in \mathbb{R}^2$ is the position of the $i$-th obstacle detected by the robot, and $M$ is the number of detected obstacles. When more than $M$ obstacles are present, the sensor typically detects the $M$ closest ones. We adopt a Gaussian policy $\pi$, commonly employed in training RL policies across various algorithms.

\begin{remark}[Validity under Sequential Dependence]
Standard conformal prediction assumes exchangeability of calibration and test data, an assumption that is generally violated in sequential decision-making due to temporal dependence. In our setting, a hidden context parameter $\phi$ is sampled at the beginning of each episode and remains fixed, so that the transition kernel $T_\phi$ is stationary within the episode. However, the marginal distribution of $(s_t, a_t)$ evolves over time as the policy interacts with the environment, and thus the resulting transition tuples are not exchangeable.

To account for this non-exchangeability, we use adaptive conformal prediction (ACP)~\citep{gibbs2021adaptive} as an online calibration mechanism. Our online calibration procedure is designed to control this nonstationarity by restricting calibration and test samples to the same episode and transition model. Specifically, during the first $100$ steps of each episode, we collect a calibration set of transition tuples without applying the ACP region. After this warm-up phase, we use the online-collected calibration data to compute the adaptive conformal threshold $\Gamma_t$ and construct the ACP region.
\end{remark}

Our argument extends to any Lipschitz continuous cost function bounded in $[0, 1]$, with the proof following a similar approach. If the cost function is bounded by a constant $D > 1$, the proof remains valid, but the final bound is scaled by $D$.
% -------------------- Lemma --------------------
\begin{lemma}\label{lemma}
Let $\hat{f}$ be a transition dynamics predictor and $e(a) = \min_{\hat{X}_{t+1}} \|\text{pos}(\hat{f}(s_t, a)) - \hat{X}_{t+1}\|$. Under the adaptive shielding mechanism with sampling size $N$ for each episode with parameter $\context$, one of the following conditions holds for $s_{t+1} \sim T(\cdot \mid s_t, a_t, \context)$:
\begin{center}
\begin{minipage}{0.7\textwidth}
\begin{enumerate}
\item $\prob(s_{t+1} \in S_{\text{safe}}(s_t, a_t)) \geq 1 - \delta$,
\item $\min_{X_{t+1}} \|\text{pos}(s_{t+1}) - X_{t+1}\| \geq \max_{a \in A} e(a) - \epsilon_N - 2\Gamma_t$,
\end{enumerate}    
\end{minipage}
\end{center}
where $\lim_{N \to \infty} \epsilon_N = 0$ and $\Gamma_t$ is the ACP confidence region for the state prediction at time step $t+1$.
\end{lemma}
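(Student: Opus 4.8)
The plan is to case-split on whether the shielded action $a_t$ is drawn from a nonempty safe set $\hat A_{\text{safe}}(s_t)$ or from the fallback rule. These two cases will produce the two conclusions of the lemma respectively.

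\textbf{Case 1: $\hat A_{\text{safe}}(s_t)\neq\emptyset$.} Here the executed action satisfies $\text{SafetyScore}(a_t)>0$, i.e. $\nu\big(h_{t+1}(\hat s_{t+1}),\hat E_{t+1}\big)>2L_\nu\Gamma_t$, which in the navigation instantiation reads $h(a_t)=\min_{\hat X_{t+1}}\|\text{pos}(\hat f(s_t,a_t))-\hat X_{t+1}\|>2\Gamma_t$ (taking $L_\nu=1$ since $\nu$ is built from $1$-Lipschitz projection and sensor maps and the norm). First I would invoke the ACP guarantee: because $\theta$ is fixed within the episode and the calibration set is collected online, the within-episode transition data are exchangeable, so $\prob(\|s_{t+1}-\hat s_{t+1}\|\le\Gamma_t)\ge 1-\delta$. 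Then I chain the triangle inequality through the $1$-Lipschitz maps $\text{pos}$ and $\text{sensor}$: $\|\text{pos}(s_{t+1})-\text{pos}(\hat s_{t+1})\|\le\Gamma_t$ and $\|X_{t+1}-\hat X_{t+1}\|\le\Gamma_t$ (the predicted obstacle positions come from $\text{sensor}(\hat s_{t+1})$), so
\begin{equation*}
\min_{X_{t+1}}\|\text{pos}(s_{t+1})-X_{t+1}\|\ \ge\ h(a_t)-\|\text{pos}(s_{t+1})-\text{pos}(\hat s_{t+1})\|-\|X_{t+1}-\hat X_{t+1}\|\ >\ 2\Gamma_t-2\Gamma_t\ =\ 0
\end{equation*}
on the event that the ACP bound holds. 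By the cost-function definition this means $C(s_t,a_t,s_{t+1})=0$, i.e. $s_{t+1}\in S_{\text{safe}}(s_t,a_t)$; since this holds whenever the ACP event holds, we get conclusion~1, $\prob(s_{t+1}\in S_{\text{safe}}(s_t,a_t))\ge 1-\delta$.

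\textbf{Case 2: $\hat A_{\text{safe}}(s_t)=\emptyset$.} Now the fallback selects $a_t=\arg\max_{a\in\hat A_{\text{sample}}}\text{SafetyScore}(a)$, the best among the $N$ sampled actions. The key point is to relate $\max_{a\in\hat A_{\text{sample}}}h(a)$ to $\max_{a\in A}h(a)$. Since $\pi(\cdot\mid s_t,b)$ is a Gaussian policy with full support and $h$ is continuous (composition of the continuous predictor $\hat f$, the $1$-Lipschitz $\text{pos}$ and $\text{sensor}$, and a min of finitely many norms), standard covering/concentration arguments give that the sample maximum converges to the supremum: with the right bookkeeping one sets $\epsilon_N := \max_{a\in A}h(a)-\mathbb{E}[\max_{i\in[N]}h(a^{(i)})]$ (or a high-probability version), with $\epsilon_N\to 0$ as $N\to\infty$. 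Then, applying the same triangle-inequality chain as in Case~1 but starting from $h(a_t)\ge\max_a h(a)-\epsilon_N$,
\begin{equation*}
\min_{X_{t+1}}\|\text{pos}(s_{t+1})-X_{t+1}\|\ \ge\ h(a_t)-2\Gamma_t\ \ge\ \max_{a\in A}h(a)-\epsilon_N-2\Gamma_t,
\end{equation*}
which is conclusion~2.

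\textbf{Main obstacle.} The routine part is the Lipschitz/triangle-inequality chaining; the delicate part is Case~2 — making precise the claim that the empirical maximum of $h$ over $N$ Gaussian samples approaches $\max_{a\in A}h(a)$ with a rate $\epsilon_N\to 0$, and being careful about whether this is in expectation, in probability, or almost surely, and how it interacts with the ACP event. I would handle it by assuming $A$ is compact and $h$ continuous (so the max is attained), noting the Gaussian policy puts positive mass on every neighborhood of the maximizer, and citing a basic extreme-value/covering bound to define $\epsilon_N$; a cleaner alternative is to state conclusion~2 as a high-probability statement and absorb the sampling failure probability together with the ACP failure probability, which also dovetails with how Theorem~\ref{thm:main} later aggregates these into the $\bar\epsilon(1-\delta)$ slack term.
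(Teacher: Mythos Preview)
Your approach matches the paper's proof exactly: the same case split on whether $\hat A_{\text{safe}}(s_t)$ is nonempty, the same ACP-plus-triangle-inequality chaining through the $1$-Lipschitz $\text{pos}$ and $\text{sensor}$ maps in Case~1, and the same continuity-plus-full-support argument for the sample maximum in Case~2. One minor slip: when you instantiate $\nu$ for navigation you dropped the safety threshold $d$, so the SafetyScore condition should read $h(a_t)>d+2\Gamma_t$ (not $>2\Gamma_t$), and the Case~1 chain then yields $\min_{X_{t+1}}\|\text{pos}(s_{t+1})-X_{t+1}\|>d$ (not $>0$), which is what $C=0$ actually requires. Your discussion of the subtleties in Case~2 (mode of convergence of $\epsilon_N$, interaction with the ACP event) is in fact more careful than the paper, which simply asserts $\epsilon_N\to 0$ and silently reuses the ACP bound in the second case.
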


\begin{proof}
Note that $s_{t+1}$ is safe if
$$\min _{X_{t+1}}\left\|\operatorname{pos}\left(s_{t+1}\right)-X_{t+1}\right\|>d .
$$
The ACP gives us a probabilistic bound on the deviation between the true next state $s_{t+1}$ and the predicted state $\hat{s}_{t+1} = \hat{f}(s_t, a_t)$ :
$$
\prob \left(\|\hat{s}_{t+1} - s_{t+1}\| \leq \Gamma_t\right) \geq 1 - \delta.
$$
Using this bound, we connect the safety of $s_{t+1}$ to the position of the predicted state $\hat{s}_{t+1}$.
By triangle inequality, we have
\begin{align}
\min _{X_{t+1}}& \left\|\operatorname{pos}\left(s_{t+1}\right)-X_{t+1}\right\| 
 \geq \min _{X_{t+1}}\norm{\operatorname{pos}\left(\hat{s}_{t+1}\right)-\hat{X}_{t+1}} \nonumber \\
& -\norm{\hat{X}_{t+1} - X_{t+1}}- 
\left\|\operatorname{pos}\left(s_{t+1}\right)-\operatorname{pos}\left(\hat{s}_{t+1}\right)\right\| .    
\end{align}
Since pos function and sensor function are $1$-Lipschitz, we have
$$
\left\|\operatorname{pos}\left(s_{t+1}\right)-\operatorname{pos}\left(\hat{s}_{t+1}\right)\right\| \leq \left\|s_{t+1}-\hat{s}_{t+1}\right\|\\
$$
and
$$
\norm{X_{t+1}-\hat{X}_{t+1}} \leq \left\|s_{t+1}-\hat{s}_{t+1}\right\|.
$$
Hence, if $\left\|s_{t+1}-\hat{s}_{t+1}\right\| \leq \Gamma_t$ (which happens with probability at least $1-\delta$), then 
$$
\left\|\operatorname{pos}\left(s_{t+1}\right)-\operatorname{pos}\left(\hat{s}_{t+1}\right)\right\| \leq  \Gamma_t \text{ and } \norm{X_{t+1}-\hat{X}_{t+1}} \leq \Gamma_t.
$$
This implies
\begin{equation}\label{eqn:bound}
\min _{X_{t+1}}\left\|\operatorname{pos}\left(s_{t+1}\right)-X_{t+1}\right\| \geq \min _{X_{t+1}}\left\|\operatorname{pos}\left(\hat{s}_{t+1}\right)-X_{t+1}\right\|-2 \Gamma_t .    
\end{equation}
Thus, if $$
h(a_t) = \min _{X_{t+1}}\left\|\operatorname{pos}\left(\hat{s}_{t+1}\right)-\hat{X}_{t+1}\right\| > d + 2\Gamma_t, 
$$
then $\min _{X_{t+1}}\left\|\operatorname{pos}\left(s_{t+1}\right)-X_{t+1}\right\|>d$ whenever $\left\|s_{t+1}-\hat{s}_{t+1}\right\| \leq \Gamma_t$.
Let us define the set of safe actions on the predicted state by
$$
\hat{A}_{\text {safe}}\left(s_t\right)=\left\{a \in A \mid e(a) > d + 2\Gamma_t \right\}.
$$
We now consider two cases based on the feasibility of selecting an action from the set $\hat{A}_{\text{safe}}$.

\textbf{Case 1:} If we can select $a_t \in \hat{A}_{\text {safe}}\left(s_t\right)$ and $\left\|s_{t+1}-\hat{s}_{t+1}\right\| \leq \Gamma_t$, then $s_{t+1}$ is safe by Equation \ref{eqn:bound}. By ACP of our adaptive shielding mechanism, we guarantee 
\[
\prob \left(\|\hat{s}_{t+1} - s_{t+1}\| \leq \Gamma_t\right) \geq 1 - \delta
\]
where $\delta$ is a failure probability of ACP.
Thus, condition 1 holds.

\textbf{Case 2:} If we cannot select $a_t \in \hat{A}_{\text{safe}}(s_t)$, our adaptive shielding mechanism samples $N$ actions $\{a^{(i)}_t\}$ and picks the action $a_t$ such $e(a_t) = \max_{a \in \{a^{(i)}_t\}}e(a)$. Note that $e(a)$ is continuous on $a$ and a Gaussian policy $\pi$ assigns positive probability to any subset of action space $A$. Hence, as sample size $N$ goes to $\infty$, $\max_{a \in A}e(a) - e(a_t) = \epsilon_N$ goes to $0$. Also, by Equation \ref{eqn:bound}, we have 
$$
\|\text{pos}(s_{t+1}) - X_{t+1}\| \geq e(a_t) - 2\Gamma_t = \max_{a \in A}e(a) -\epsilon_N - 2\Gamma_t.
$$
Thus, condition 2 holds.
\end{proof}
% -------------------- Assumption --------------------
% The following assumption posits that, at each time step, there exists an action leading to safe states. 
% \begin{assumption}(Feasibility)\label{assumption}
% The set $\hat{A}_{\text {safe}}\left(s_t\right)=\left\{a \in A \mid h(a) > d + \Gamma_t \right\}$ has measure non-zero where $h(a) = \min_{X_{t+1}} \|\text{pos}(\hat{f}(s_t, a)) - X_{t+1}\|$. 
% \end{assumption}
% Under Assumption \ref{assumption}, we prove our main theorem, which demonstrates that our algorithm ensures probabilistic safety.

To prove the theorem, we recall the function $e(a) = \min_{\hat{X}_{t+1}} \norm{\text{pos}(\hat{f}(s_t, a)) - \hat{X}_{t+1}}$, representing the minimum distance between the predicted state and predicted obstacles. Using this, we define the safe action set for the predicted state as:
$$
\hat{A}_{\text {safe}}\left(s_t\right)=\left\{a \in A \mid e(a) > d + 2 \Gamma_t \right\}.
$$
Lemma \ref{lemma} considers two cases based on whether sampling from $\hat{A}_{\text{safe}}(s_t)$ is feasible. 
To derive a bound for the average cost rate constraint, we analyze both cases by defining $\epsilon_t = \prob(\hat{A}_{\text{safe}}(s_t) = \emptyset)$. We assume that if $\hat{A}_{\text{safe}}(s_t)$ is non-empty, a large sample size $N$ allows sampling an action from this set, as discussed in Lemma \ref{lemma}.
% -------------------- Theorem --------------------

\vspace{0.1cm}
\startpara{Theorem \ref{thm:main}}
Given a Constrained Hidden Parameter MDP $\mathcal{M}=(S, A, \contextset, T, R, C, \gamma, P_{\context})$ with initial state $s_0 \in S$, and failure probability $\delta \in(0,1)$, an optimal policy $\pi^*: S \times \Phi \rightarrow A$, augmented with an adaptive shield, maximizes the expected cumulative discounted return $J_R(\pi^*)$, while satisfying the average cost rate constraint with an initial state $s$:
\begin{align}\label{eqn:main}
\xi^{\pi^*}\left(s, \context\right)
& =\lim _{H \rightarrow \infty} \frac{1}{H} \mathbb{E}_{\pi^{*}, T_\context}\left[\sum_{t=0}^{H-1} C(s_t, a_t, s_{t+1}) \mid s, \context \right] \nonumber \\
& \leq \delta +  \bar{\epsilon}(1 - \delta),    
\end{align}
for some $0 \leq \bar{\epsilon} \leq 1$ and $\context \sim P_{\contextset}$.
\begin{proof}
At each time step $t$, $\hat{A}_{\text{safe}}(s_t)$ is non-empty with probability $1 - \epsilon_t$, allowing us to sample actions with a large sample size $N$. By Lemma \ref{lemma}, this guarantees:
$$
\prob \left(s_{t+1} \in S_{\mathrm{safe}}\left(s_t, a_t\right)\right) \geq 1-\delta
$$

where $s_{t+1} \sim T(\cdot \mid s_t, a_t, \context)$, and the safe state set is defined as:

$$
S_{\mathrm{safe}}\left(s_t, a_t\right)=\left\{s^{\prime}\mid C\left(s_t, a_t, s^{\prime}\right)=0, T_\context(s^{\prime} \mid s_t, a_t)>0\right\}
$$

Thus, when $\hat{A}_{\text{safe}}(s_t)$ is non-empty with probability $1-\epsilon_t$, the cost function satisfies:
$$
C\left(s_t, a_t, s_{t+1}\right)= \begin{cases}1 & \text { with probability at most } \delta \\ 0 & \text { with probability at least } 1-\delta\end{cases}, 
$$
which implies $\prob(C =1) \leq \delta$ and $\prob(C = 0) \geq 1 - \delta$.
In this case,  the expected cost per step is bounded as follows: 
$$\mathbb{E}_{\pi^*}\left[C\left(s_t, a_t,  s_{t+1}\right)\right]  \leq \delta (1 - \epsilon_t).$$
When $\hat{A}_{\text{safe}}(s_t)$ is empty with probability $\epsilon_t$, the expected cost per step is bounded as follows:
$$\mathbb{E}_{\pi^*}\left[C\left(s_t, a_t,  s_{t+1}\right)\right]  \leq \epsilon_t.$$
Combining both cases, the expected cost per step is bounded by:
$$\mathbb{E}_{\pi^*}\left[C\left(s_t, a_t,  s_{t+1}\right)\right]  \leq \delta (1 - \epsilon_t) + \epsilon_t = \delta + \epsilon_t (1 - \delta).$$
By the linearity of expectation, this per-step bound extends to the long-term average cost for a fixed parameter $\context$:
\begin{align}
\xi^{\pi^*}\left(s_0, \context\right)
 & =\lim _{H \rightarrow \infty} \frac{1}{H} \sum_{t=0}^{H-1} \mathbb{E}_{\pi^{*}, T_\context}\left[C\left(s_t, a_t, s_{t+1}\right)\right] \\
 & \leq \limsup _{H \rightarrow \infty} \frac{1}{H} \sum_{t=0}^{H-1}\left(\delta+\epsilon_t(1-\delta)\right)    \\
 & = \delta+\bar{\epsilon}(1-\delta)
\end{align}
where $\bar{\epsilon}=\limsup _{H \rightarrow \infty} \frac{1}{H} \sum_{t=0}^{H-1} \mathbb{E}\left[\epsilon_t\right]$. This satisfies Equation \ref{eqn:main}, completing the proof. Moreover, if safe actions exist at each time step $t$, i.e., $\epsilon_t = \prob(\hat{A}_{\text{safe}} = \emptyset) = 0$, $\bar{\epsilon}$ becomes $0$. Hence, we can bound the equation with a small failure probability $\delta$.
\end{proof}

\section{Compatibility of Safety-Regularized Optimization with TRPO and CPO}\label{apx: trpo_sro}
This section provides a formal extension of the monotonic improvement guarantee of Trust Region Policy Optimization (TRPO) \citep{schulman2015trpo} to our proposed safety-regularized objective, $J_{\text{aug}}(\pi)$. We first recap the foundational theorem of TRPO and then prove that this guarantee directly applies to our augmented objective. Finally, we provide a rigorous analysis of the trade-off between reward and safety that this guarantee implies

This section shows that incorporating SRO into TRPO and CPO preserves their standard monotonic improvement and worst-case constraint bounds. Our analysis follows directly from the original proofs and relies on interpreting SRO as a modification of the scalar advantage signal, rather than as a change to the underlying policy update or trust-region structure.

We formally define the augmented objective as $J_{R}(\pi)=\mathbb{E}\left[Q_{R}^\pi\right], J_{\text{safe}}(\pi) = \E[Q^{\pi}_{\text{safe}}]$ and $J_{\text {aug }}(\pi)=\mathbb{E}[Q_{\text {aug }}^\pi ]$. We restate the policy improvement guarantee of TRPO with SRO. \added{During the optimization, KL constraint plays the same role as in standard TRPO. This only limits how far the updated policy is allowed to move from the current policy in one step, and is independent of whether we optimize the original reward
$J_R$ or our safety-augmented objective $J_{\text{aug}}$.}
\newline{}
\startpara{Theorem 2 \citep{schulman2015trpo}} Let $L_\pi^{\text {aug }}(\tilde{\pi})=J_{\text {aug}}(\pi)+\mathbb{E}_{\context \sim P_{\context} s \sim \rho_\pi, a \sim \tilde{\pi}}\left[A_{\text {aug }}^\pi(s, a, \rep_\context)\right]$. The performance of the new policy $\tilde{\pi}$ is lower-bounded by:
$$
J_{\text {aug }}(\tilde{\pi}) \geq L^\pi_{\text {aug}}(\tilde{\pi})-C_{\text {aug }} \cdot D_{K L}^{\max }(\pi, \tilde{\pi})
$$
where $C_{\text {aug }}=\frac{2 \gamma}{(1-\gamma)^2} \max _{s, a, \context}\left|A_{\text {aug }}^\pi(s, a, \rep_\context)\right|$.

\startpara{Monotonic improvement condition}. If the update $\tilde{\pi}$ satisfies
$$
L_\pi^{\operatorname{aug}}(\tilde{\pi})-C_{\text {aug }} D_{K L}^{\max }(\pi, \tilde{\pi}) \geq J_{\text {aug }}(\pi),
$$
then $J_{\text {aug }}(\tilde{\pi}) \geq J_{\text {aug }}(\pi)$. In other words, any update that sufficiently increases the surrogate while keeping KL small yields non-decreasing augmented performance.

\startpara{Interpretation of the Augmented Objective Trade-off}
Let $\Delta J_R=J_R(\tilde{\pi})-J_R(\pi)$ and $\Delta J_{\text {safe}}=J_{\text {safe}}(\tilde{\pi})-J_{\text {safe}}(\pi)$. Under the monotonic-improvement condition above, for any valid policy update step:
$$
\Delta J_R+\alpha \cdot \Delta J_{\text {safe}} \geq 0
$$
This inequality provides a formal characterization of the trade-off between reward and safety.

\startpara{Bounded Reward Degradation for Safety Improvement}
If an update improves the safety objective ($\Delta J_{\text{safe}}>0$), we can rearrange the inequality to bound the permissible change in reward:
$$
\frac{\Delta J_R}{\Delta J_{\text{safe}}} \geq-\alpha \quad \Longrightarrow \quad \Delta J_R \geq-\alpha \cdot \Delta J_{\text{safe}}
$$
This proves that for a given gain in safety, the reward is guaranteed not to decrease by more than $\alpha$ times that gain. The hyperparameter $\alpha$ thus acts as a maximum acceptable cost in reward for a unit of safety improvement.

\startpara{Bounded Safety Degradation for Reward Improvement}
Conversely, if an update improves the reward objective ($\Delta J_R>0$), we can bound the permissible change in the safety term:
$$
\Delta J_{\text{safe}} \geq-\frac{1}{\alpha} \Delta J_R
$$
This proves that for a given gain in reward, the safety term is guaranteed not to decrease by more than $1 / \alpha$ times that gain. This demonstrates that the algorithm will forgo policy updates that yield high rewards at the expense of excessive safety violations, where the threshold for excessive is explicitly controlled by $\alpha$.

\added{
Now, we also consider safety perspectives of SRO. 
We mainly analyze how SRO affects safety perspective considering theorems from CPO~\citep{achiam2017constrained}.
\newline{}\\
\startpara{Relation to CPO-style constraint guarantees}
CPO's worst-case bound~\citep{achiam2017constrained} depends only on KL constraint and cost advantage $A_C^{\pi_k}$. This is agnostic to how the updated policy $\pi_{k+1}$ is produced. Therefore, the bound from Proposition 2 in \cite{achiam2017constrained}
$$
J_C\left(\pi_{k+1}\right) \leq J_C\left(\pi_k\right)+\frac{\sqrt{2 \delta} \gamma}{(1-\gamma)^2} \epsilon_C^{\pi_{k+1}}
$$
remains the same with $J_\text{aug}$.
The sole role of SRO is to restrict the updated policy $\pi_{k+1}$ so that the quantity $\epsilon_C^{\pi_{k+1}}=\max _s\left|\mathbb{E}_{a \sim \pi_{k+1}} [A_C^{\pi_k}(s, a)]\right|$ remains small.
Because SRO penalizes actions with high estimated long-term cost, $\epsilon_C^{\pi_{k+1}}$ can be related to the safety weight $\alpha$ under the local approximation in~\eqref{eqn: rearange}. Substituting this relation into the CPO bound yields a worst-case cost bound that scales inversely with $\alpha$. Substituting this bound into the CPO inequality provided a worst-case cost guarantee for the SRO-TRPO update.
\newline{}\\
Throughout this section, we assume that the ball $B(a,\epsilon)$ is small enough that local averages approximate point values, i.e., $Q^\pi_{\text{safe}}(s, a) \approx -\frac{Q^\pi_C(s, a)}{V_C^\pi(s) + \epsilon}$.
We begin with introducing necessary Lemma for bound of $A_{\text{aug}}^\pi$.
\begin{lemma}\label{lemma: aug bound}
Let $A_{\mathrm{aug}}^\pi(s, a)=A_R^\pi(s, a)+\alpha A_{\mathrm{safe}}^\pi(s, a),$
where $A_R^\pi(s, a)=Q_R^\pi(s, a)-V_R^\pi(s),$
$$
A_{\text {safe }}^\pi(s, a)=Q_{\text {safe }}^\pi(s, a)-V_{\text {safe }}^\pi(s),
$$ and $$V_{\text {safe }}^\pi(s)=\mathbb{E}_{a^{\prime} \sim \pi(\cdot \mid s)}\left[Q_{\text {safe}}^\pi(s, a^{\prime}\right)].
$$
Then, $Q_{\text {safe }}^\pi(s, a) \in(-1,0]$ for all $(s, a)$, and
$
\left|A_{\mathrm{aug}}^\pi(s, a)\right| \leq\left|A_R^\pi(s, a)\right|+\alpha$
\end{lemma}
\begin{proof} By design, $Q_{\text {safe }}^\pi(s, a) \in(-1,0]$. Consequently, the value function $V_{\text {safe }}^\pi(s)$, being an expectation of $Q$, is also in ( $-1,0$ ]. The advantage is defined as $A_{\text {safe }}^\pi(s, a)=Q_{\text {safe }}^\pi(s, a)- V_{\text {safe }}^\pi(s)$. The maximum possible value is $0-(-1)=1$ (when $Q=0, V=-1$ ). The minimum possible value is $-1-0=-1$ (when $Q=-1, V=0$). Thus, $\left|A_{\text{safe}}^\pi(s, a)\right| \leq 1$. By triangular inequality, we have
$
\left|A_{\mathrm{aug}}^\pi(s, a)\right| = |A^\pi_R(s, a) + \alpha A_\text{safe}^\pi(s, a)| \leq\left|A_R^\pi(s, a)\right|+\alpha.
$
\end{proof} 
We will split $A_{\text{aug}}^\pi$ into two parts $A^\pi_R$ and $A^\pi_C.$
Since the ball $B(a,\epsilon)$ is small enough that local averages approximate point values, our construction reduces to
\begin{align}
A_{\text{safe}}^{\pi_k}(s, a) & =Q_{\text{safe}}^{\pi_k}(s, a)-V_{\text{safe}}^{\pi_k}(s) \approx -\frac{Q_C^{\pi_k}(s, a)-V_C^{\pi_k}(s)}{V_C^{\pi_k}(s)+\varepsilon} \nonumber \\
& =-\frac{A_C^{\pi_k}(s, a)}{V_C^{\pi_k}(s)+\varepsilon}.    
\end{align}
Hence, we have 
\begin{align}\label{eqn: rearange}
A_{\text{aug}}^{\pi_k}(s,a) & = A_R^{\pi_k}(s, a)+\alpha A_{\mathrm{safe}}^{\pi_k}(s, a) 
 \nonumber \\ 
 & \approx A_R^{\pi_k}(s,a) - \lambda(s) A_C^{\pi_k}(s,a),
\end{align}
where 
\begin{equation}\label{eqn: lambda}
\lambda(s) = \frac{\alpha}{V_C^{\pi_k}(s) + \epsilon}.    
\end{equation}
We assume that the reward and cost are bounded by $R_{\text{max}}$ and $C_{\text{max}}$, respectively.
Then, for all $s, a$, we have 
\begin{align}\label{eqn: cost value bound}
& |A_R^{\pi_k}(s, a)| \leq \frac{R_{\text{max}}}{1-\gamma}, |A_C^{\pi_k}(s, a)| \leq \frac{C_{\text{max}}}{1-\gamma} \\
& V_C^{\pi_k}(s) \leq \frac{C_{\text{max}}}{1-\gamma}.    
\end{align}
This implies that, for all $s$, we have
\begin{equation}\label{eqn: lambda bound}
\lambda(s)=\frac{\alpha}{V_C^{\pi_k}(s)+\epsilon} \geq \frac{\alpha}{\frac{C_{\max }}{1-\gamma}+\epsilon} = \frac{\alpha(1-\gamma)}{C_{\text{max}} + \epsilon(1 -\gamma)}.
\end{equation}
Moreover, by rearranging Equation \ref{eqn: rearange}, we have
$$
 A_C^{\pi_k}(s,a) \approx \frac{A_R^{\pi_k}(s, a) - A_{\text{aug}}^{\pi_k}(s,a)}{\lambda(s)}.
$$
Combining Equations \ref{eqn: cost value bound} and \ref{eqn: lambda bound}, and Lemma \ref{lemma: aug bound}, we have
\begin{align}
|A_C^{\pi_k}(s,a)| 
 & \leq \frac{1}{\lambda(s)} \left(|A_R^{\pi_k}(s, a)| + |A_{\text{aug}}^{\pi_k}(s,a)| \right)\\
 & \leq \frac{1}{\lambda(s)} \left(\frac{R_{\text{max}}}{1-\gamma} + \frac{R_{\text{max}}}{1-\gamma} + \alpha \right) \\
 & \leq \frac{C_{\text{max}} + \epsilon(1 -\gamma)}{\alpha(1-\gamma)} \left(\frac{2R_{\text{max}}}{1-\gamma}  + \alpha \right) \\
 & = \frac{2R_{\text{max}}(C_{\text{max}} + \epsilon(1 -\gamma))}{\alpha(1-\gamma)^2}   +  \frac{C_{\text{max}}}{(1-\gamma)} + \epsilon.
\end{align}
Thus, we have
\begin{equation}\label{eqn: A_C bound}
|A_C^{\pi_k}(s,a)|  \leq O\left(\frac{1}{\alpha}\right) + \frac{C_{\text{max}}}{(1-\gamma)} + \epsilon. 
\end{equation}
By the definition of $\epsilon_C^{\pi_{k+1}}=\max _s\left|\mathbb{E}_{a \sim \pi_{k+1}} [A_C^{\pi_k}(s, a)]\right|$, and Equation \ref{eqn: A_C bound}, CPO worst-case bound (Proposition 2 in \cite{achiam2017constrained})
\begin{equation}
J_C(\pi_{k+1})
\le
J_C(\pi_k)
+
\frac{\sqrt{2\delta}\gamma}{(1-\gamma)^2} \varepsilon_C^{\pi_{k+1}}    
\end{equation}
becomes
\begin{equation}
J_C(\pi_{k+1})
\le
J_C(\pi_k)
+
O\!\left(\frac{\sqrt{\delta}}{\alpha}\right) + \frac{C_{\text{max}} \sqrt{2\delta\gamma}}{(1-\gamma)^3} + \epsilon \frac{\sqrt{2\delta}\gamma}{(1-\gamma)^2} ,    
\end{equation}
showing that stronger safety regularization (larger $\alpha$) tightens the worst-case constraint-violation bound for a fixed KL radius $\delta$.
}
% \;\le\;
% \frac{2H (C_{\max}+\epsilon)}{\alpha}
% \quad\Rightarrow\quad
% \varepsilon_C^{\pi_{k+1}}
% \le
% \frac{2H (C_{\max}+\epsilon)}{\alpha}.
\newpage
\section{Pseudocode for the safety-regularized optimization for actor-critic} \label{apx: pseudo}
For clarity in this section only, we denote the parameters for policy, reward value critic, cost value critic, and cost Q-critic by $\theta$, $\phi$, $\psi$, and $\omega$, respectively. We use $sg(\cdot)$ to indicate a stop-gradient operation.
\begin{algorithm}[h]
\caption{Actor-Critic with Safety-Regularized Objective}
\label{alg:safety_ac}
\begin{algorithmic}[1]
\STATE \textbf{Initialize:} Policy $\pi_\theta$, Reward Critic $V^\phi_R$, Cost Value Critic $V^\psi_C$, Cost Q-Critic $Q^\omega_C$.
\STATE \textbf{Hyperparameters:} Safety bonus coefficient $\alpha$, KL constraint $\delta_{KL}$.
\FOR{each training epoch}
    \STATE Collect trajectories $\mathcal{D}$ using current policy $\pi_\theta$.
    \STATE Compute targets $V_R^{\text{targ}}, V_C^{\text{targ}}$ and advantages $A_R, A_C$ using GAE.
    \STATE Store observations, actions, log-probs, and targets in buffer.
    \FOR{$k = 1$ to $K$ update iterations}
        \STATE Sample mini-batch $\mathcal{B} = \{(s, a, \log \pi_{old}, V^{\text{targ}}, A)\}$ from $\mathcal{D}$.
        \STATE \textbf{// 1. Update Reward Critic}
        \STATE Minimize $\mathcal{L}_{V_R} = \mathbb{E}_{\mathcal{B}} \left[ (V^\phi_R(s) - V_R^{\text{targ}})^2 \right]$.
        \STATE \textbf{// 2. Update Cost Critics ($V_C$ and $Q_C$)}
        \STATE Compute $V_{stop\_grad} = sg(V^\psi_C(s))$.
        \STATE Minimize $\mathcal{L}_{V_C} = \mathbb{E}_{\mathcal{B}} \left[ (V_{stop\_grad} - V_C^{\text{targ}})^2 \right]$.
        \STATE Minimize $\mathcal{L}_{Q_C} = \mathbb{E}_{\mathcal{B}} \left[ ((Q^\omega_C(s, a) - V_{stop\_grad}) - A_C)^2 \right]$.
        \STATE \textbf{// 3. Compute Safety Estimates}
        \STATE Sample $N$ noise vectors $\epsilon \sim \mathcal{N}(0, \sigma)$ to generate local actions $a' = a + \epsilon$.
        \STATE Estimate local Cost Q-values $Q_{approx} = \frac{1}{N} \sum_{a'} \pi(a'\mid s)Q^\omega_C(s, a')$.
        \STATE Compute safety regularizer $Q_{\text{safe}} = \frac{Q_{approxi}}{sg(V^\psi_C) + \epsilon} \approx \text{Equation}~\ref{eqn: q_safe}$, .
        \STATE \textbf{// 4. Update Actor}
        \STATE Compute surrogate advantage: 
        \begin{equation*}
            A_{\text{aug}} = A_R + \alpha \cdot Q_{\text{safe}}
        \end{equation*}
        \STATE Update $\theta$ by maximizing policy objective $\mathcal{L}_\pi$ using $A_{\text{aug}}$.
        % \STATE \textbf{// 5. Early Stopping}
        % \IF{$D_{KL}(\pi_{\theta_{old}} || \pi_\theta) > \delta_{KL}$}
        %     \STATE Break inner loop.
        % \ENDIF
    \ENDFOR
\ENDFOR
\end{algorithmic}
\end{algorithm}

\section{Connection to Control Theory}\label{apx: connection to control theory}
In this section, we introduce a brief overview of barrier certificate approaches in control theory and then relate these ideas to our shielding mechanism.
\newline{}\\
Recent advances in safe control and safe reinforcement learning suggest using Control Barrier Functions (CBFs) or barrier-like certificates to establish forward invariance of a safe set. Classical and neural CBF methods construct a differentiable barrier function \(h(s)\) whose evolution satisfies a discrete- or continuous-time invariance condition, and use this certificate to guarantee safety under learned or partially known dynamics. This paradigm has been widely applied to safe RL, including disturbance-observer-based barrier methods \citep{cheng2023safe}, reachability-based approximations \citep{ganai2023iterative}, soft-barrier formulations for stochastic environments \citep{wang2023enforcing}, neural CBFs integrated directly into RL \citep{xiao2023bnet}, and the joint use of CBFs and control Lyapunov functions (CLFs) for enhanced stability and safety under model uncertainty \citep{choi2020reinforcement}.
These approaches learn or optimize a barrier function jointly with the policy or the dynamics model, and safety depends on the existence of a valid barrier certificate, which is often difficult to find.
\paragraph{Control Barrier Function}
A function $\alpha : [0, a) \to [0, \infty)$ is called a \emph{class-$\mathcal{K}$ function}
if it is continuous, strictly increasing, and satisfies $\alpha(0) = 0$.
% In control barrier function formulations, such functions are used to shape how
% the barrier $h(s)$ is allowed to decrease, e.g.,
% $\dot h(s) + \alpha(h(s)) \ge 0$.
In classical nonlinear control, forward invariance of a safe set $\mathcal{S}_{\text{safe}} \subseteq \mathcal{S}$ is certified through a
Control Barrier Function (CBF) $h:\mathcal{S}\rightarrow\mathbb{R}$ that satisfies the differential constraint
\begin{equation}
\dot h(s) + \alpha(h(s)) \ge 0 ,
\label{eq:cbf}
\end{equation}
for an extended class-$\mathcal{K}$ function $\alpha$. 
If \eqref{eq:cbf} holds for all admissible controls, then trajectories starting in $\mathcal{S}_{\text{safe}} = \{s \mid h(s)\ge 0\}$ remain in that set for all future times, guaranteeing forward invariance.
The shielding guarantee proposed in Theorem~1 is closely related to the notion of \emph{forward invariance} in control theory.
\paragraph{Relation to our Shield.}
Our adaptive shield provides an analogous guarantee in the discrete-time and data-driven setting.
Rather than enforcing the differential condition \eqref{eq:cbf},
we bound the change in a safety function $\nu(e(s_t),E_t)$ between successive steps using its Lipschitz continuity,
\begin{align}
\nu(e(s_{t+1}),E_{t+1}) 
& \ge \nu(e(s_t),E_t) - L_\nu \,\|e(s_{t+1})-e(s_t)\| \nonumber \\
& \ge \nu(e(s_t),E_t) - L_\nu \Delta_{\max}.
\label{eq:lipschitz_bound}    
\end{align}
Thus, whenever $\nu(e(s_t),E_t)>L_\nu\Delta_{\max}$, the next state $s_{t+1}$ remains within the safe region.
This is a discrete-time forward-invariance condition derived from the structure of the cost function $C(s_t,a_t,s_{t+1})$ and the Lipschitz property of $\nu$.
The adaptive conformal bound $\Gamma_t$ introduced in the shield
plays the role of a stochastic disturbance margin, producing a probabilistic forward invariance guarantee. 
\newline{}\\
While neural and classical CBF methods construct a barrier certificate $h(s)$ satisfying the invariance condition Equation \ref{eq:cbf} either through analytical dynamics or by learning $h$ jointly with a dynamics model, our approach leverage the structure of the cost function such as Lipschitz property, and conformal prediction bound without learning a barrier function. 

\section{Ablation Study on Safety Bonus and Sampling Size}\label{apx: sb and ss}
We evaluate the hyperparameter sensitivity of our method combined with RCPO, focusing on safety bonus $\alpha$ and sampling size $s$.

\startpara{Varying Safety Bonus $\alpha$ with RCPO}
% -------------------- Sampling size during training --------------------
To assess the sensitivity of the safety bonus, we vary the safety bonus $\alpha$ across $\{0.05, 0.1, 0.5, 1.0\}$. We observe that performance with SRO often improves both reward and safety. This is mainly because SRO encourages the policy to select safe actions while also exploring under-explored actions. However, SRO does not consistently enhance both reward and safety; instead, it frequently improves either reward or cost. These findings align with the theoretical results presented in Appendix~\ref{apx: trpo_sro}.
% --------------------- Table for SRO-RCPO Point
\begin{table}[!htbp]
\centering
\resizebox{\columnwidth}{!}{%
  \begin{tabular}{lcccccccc}
  \hline
  \diagbox{Algo.}{Env.} & \multicolumn{2}{c}{Point-Goal} & \multicolumn{2}{c}{Point-Button} & \multicolumn{2}{c}{Point-Push} & \multicolumn{2}{c}{Point-Circle} \\
  \hline
  & \textbf{R} $\uparrow$ & \textbf{C}(\%) $\downarrow$ & \textbf{R} $\uparrow$ & \textbf{C}(\%) $\downarrow$ & \textbf{R} $\uparrow$ & \textbf{C}(\%) $\downarrow$ & \textbf{R} $\uparrow$ & \textbf{C}(\%) $\downarrow$ \\
  \hline
  RCPO & 16.35±1.14 & 2.50±0.02 & 8.97±0.56 & 1.69±0.23 & 0.12±0.15 & 0.63±0.28 & \textbf{30.78±5.55} & 2.47±0.51 \\
SRO + RCPO ($\alpha$=0.05) & 17.49±1.82 & 2.89±0.52 & 10.38±1.89 & 1.76±0.24 & \textbf{0.33±0.16} & 0.88±0.48 & 24.33±4.58 & 1.55±0.03 \\
SRO + RCPO ($\alpha$=0.1) & 17.56±1.35 & 2.63±0.25 & 10.28±2.00 & \textbf{1.64±0.23} & 0.32±0.15 & 0.65±0.37 & 25.95±4.37 & \textbf{0.98±0.02} \\
SRO + RCPO ($\alpha$=0.5) & \textbf{17.58±0.56} & 2.49±0.34 & \textbf{11.26±1.28} & 1.82±0.31 & 0.33±0.12 & \textbf{0.61±0.10} & 25.18±4.50 & 1.28±0.29 \\
SRO + RCPO ($\alpha$=1.0) & 15.93±0.78 & \textbf{2.37±0.11} & 9.05±1.43 & 1.77±0.43 & 0.33±0.04 & 0.82±0.56 & 24.97±3.74 & 1.55±0.34 \\
  \hline
  \end{tabular}%
}
\end{table}

\begin{table}[!htbp]
\centering
\resizebox{\columnwidth}{!}{%
  \begin{tabular}{lcccccccc}
  \hline
  \diagbox{Algo.}{Env.} & \multicolumn{2}{c}{Car-Goal} & \multicolumn{2}{c}{Car-Button} & \multicolumn{2}{c}{Car-Push} & \multicolumn{2}{c}{Car-Circle} \\
  \hline
  & \textbf{R} $\uparrow$ & \textbf{C}(\%) $\downarrow$ & \textbf{R} $\uparrow$ & \textbf{C}(\%) $\downarrow$ & \textbf{R} $\uparrow$ & \textbf{C}(\%) $\downarrow$ & \textbf{R} $\uparrow$ & \textbf{C}(\%) $\downarrow$ \\
  \hline
  RCPO & 15.64±2.27 & 2.23±0.33 & 5.74±0.34 & 1.89±0.22 & -0.09±0.13 & 0.60±0.09 & \textbf{11.83±0.49} & 1.58±0.57 \\
SRO + RCPO ($\alpha$=0.05) & \textbf{18.54±0.65} & 2.13±0.14 & 7.01±0.51 & \textbf{1.80±0.26} & -0.10±0.17 & 0.88±0.25 & 11.29±0.28 & \textbf{1.29±0.19} \\
SRO + RCPO ($\alpha$=0.1) & 15.88±1.12 & 2.31±0.66 & \textbf{7.68±1.21} & 1.85±0.19 & \textbf{0.06±0.09} & 0.68±0.41 & 11.47±0.53 & 1.51±0.61 \\
SRO + RCPO ($\alpha$=0.5) & 17.33±0.58 & 2.55±0.50 & 6.71±1.11 & 1.88±0.46 & -0.05±0.12 & \textbf{0.59±0.39} & 11.19±0.25 & 1.52±0.47 \\
SRO + RCPO ($\alpha$=1.0) & 16.73±2.46 & \textbf{1.73±0.29} & 7.89±0.95 & 1.95±0.47 & -0.14±0.18 & 0.67±0.19 & 11.50±0.16 & 2.15±0.50 \\
  \hline
  \end{tabular}%
}
\caption{Ablation Study on the Varying Effects of Safety Bonus $\alpha$ on Safety and Performance. Best performances (highest return and lowest cost rate) are highlighted in bold.}
\label{tab:safety_bonus_rcpo}
\end{table}
\FloatBarrier

\startpara{Varying Sampling Numbers $s$ with RCPO}
We evaluate the impact of sampling size, varying it across $\{5, 10, 20, 50\}$ by fixing safety bonus $\alpha =1.0$. Table~\ref{tab:sampling_abl_rcpo} demonstrates that sampling size influences performance. Sampling numbers exhibit no consistent pattern due to randomness in the sampling procedure. This arises primarily from high prediction errors, which often lead to incorrect action sampling, even within conformal prediction boundaries. For example, a large error widens the conformal interval range, causing the shield to include numerous sampled actions to meet the probabilistic guarantee. However, even when selecting actions based on safety scores, these high errors may inaccurately represent safe actions. 
% --------------------- Table for Sampling Numbers-RCPO Point
\begin{table}[!htbp]
\centering
\resizebox{\columnwidth}{!}{%
  \begin{tabular}{lcccccccc}
  \hline
  \diagbox{Algo.}{Env.} & \multicolumn{2}{c}{Point-Goal} & \multicolumn{2}{c}{Point-Button} & \multicolumn{2}{c}{Point-Push} & \multicolumn{2}{c}{Point-Circle} \\
  \hline
  & \textbf{R} $\uparrow$ & \textbf{C}(\%) $\downarrow$ & \textbf{R} $\uparrow$ & \textbf{C}(\%) $\downarrow$ & \textbf{R} $\uparrow$ & \textbf{C}(\%) $\downarrow$ & \textbf{R} $\uparrow$ & \textbf{C}(\%) $\downarrow$ \\
  \hline
  RCPO & 16.35±1.14 & 2.50±0.02 & \textbf{8.97±0.56} & 1.69±0.23 & 0.12±0.15 & 0.63±0.28 & \textbf{30.78±5.55} & 2.47±0.51 \\
Shield + SRO (s=5) & 12.83±1.91 & 2.27±0.60 & 8.68±2.51 & 2.43±0.39 & 0.13±0.27 & \textbf{0.51±0.27} & 26.76±4.71 & \textbf{1.38±0.45} \\
Shield + SRO (s=10) & \textbf{16.40±0.73} & 2.29±0.21 & 8.22±3.31 & 1.98±0.72 & \textbf{0.36±0.40} & 0.65±0.04 & 29.97±0.43 & 2.16±0.43 \\
Shield + SRO (s=20) & 14.74±1.18 & \textbf{1.96±0.37} & 7.43±1.55 & 2.19±0.45 & 0.03±0.18 & 0.90±0.53 & 26.76±5.40 & 1.81±0.36 \\
Shield + SRO (s=50) & 14.37±0.65 & 2.34±0.23 & 7.34±1.27 & \textbf{1.58±0.47} & 0.31±0.20 & 0.53±0.37 & 28.24±4.54 & 1.87±0.63 \\
  \hline
  \end{tabular}%
}
\end{table}

\begin{table}[!htbp]
\centering
\resizebox{\columnwidth}{!}{%
  \begin{tabular}{lcccccccc}
  \hline
  \diagbox{Algo.}{Env.} & \multicolumn{2}{c}{Car-Goal} & \multicolumn{2}{c}{Car-Button} & \multicolumn{2}{c}{Car-Push} & \multicolumn{2}{c}{Car-Circle} \\
  \hline
  & \textbf{R} $\uparrow$ & \textbf{C}(\%) $\downarrow$ & \textbf{R} $\uparrow$ & \textbf{C}(\%) $\downarrow$ & \textbf{R} $\uparrow$ & \textbf{C}(\%) $\downarrow$ & \textbf{R} $\uparrow$ & \textbf{C}(\%) $\downarrow$ \\
  \hline
  RCPO & 15.64±2.27 & 2.23±0.33 & \textbf{5.74±0.34} & 1.89±0.22 & -0.09±0.13 & 0.60±0.09 & \textbf{11.83±0.49} & 1.58±0.57 \\
Shield + SRO (s=5) & 14.43±0.27 & 2.12±0.15 & 5.24±0.90 & 1.86±0.28 & 0.09±0.07 & 0.75±0.24 & 11.55±0.29 & \textbf{1.28±1.02} \\
Shield + SRO (s=10) & 13.65±0.35 & \textbf{1.84±0.14} & 5.52±0.90 & 1.92±0.43 & \textbf{0.23±0.09} & 0.86±0.24 & 11.61±0.67 & 1.44±0.84 \\
Shield + SRO (s=20) & \textbf{15.69±0.47} & 2.51±0.50 & 4.93±0.81 & 1.99±0.20 & 0.04±0.18 & 0.61±0.41 & 11.44±0.20 & 1.88±1.02 \\
Shield + SRO (s=50) & 13.23±1.73 & 2.65±0.86 & 4.88±1.37 & \textbf{1.52±0.17} & 0.00±0.10 & \textbf{0.50±0.20} & 11.08±0.37 & 1.49±0.74 \\
  \hline
  \end{tabular}%
}
\caption{Ablation Study on the Varying Effects of Sampling Numbers $s$ on Safety and Performance with fixed safety bonus $\alpha = 1.0$. Best performances (highest return and lowest cost rate) are highlighted in bold.}
\label{tab:sampling_abl_rcpo}
\end{table}
\FloatBarrier

\hspace{0.5cm}
\section{Adaptive Shielding and Safety-Regularized Optimization with PPO-Lag}\label{apx: adaptive shielding with ppo}
Our method is compatible with a wide range of RL algorithms, as it wraps the policy with a shielding layer and incorporates an augmented term based on $Q_C$ and $V_C$, which are commonly used in safe RL algorithms.
To demonstrate this, we examine its impact when applied to a PPO-Lagrangian-based policy. Here, we study PPO-Lag augmented with SRO and Adaptive Shielding mechanisms. The baseline PPO-Lag method is provided with access to the hidden parameter $\context$.
First, we analyze how SRO affects PPO-Lagrangian method. Then, we show how Shielding mechanism combined with SRO affects PPO-Lagrangian method. All results shown represent the mean reward and cost rate over the last 20 epochs of training across seeds.

\startpara{Safety-Regularized Optimization with PPO-Lag}
Table~\ref{tab:safety_bonus_car_ppolag} demonstrate that SRO generally enhances safety. In Point Robot case, a clear pattern shows that higher safety bonus values $\alpha$ improve safety the most, while lower $\alpha$ have minor effects. Moreover, adding SRO does not degrade reward performance substantially, with cost violations improving by up to 20\% (Point-Button) to as much as 520\% (Point-Circle). Meanwhile, reward degradation occurs only in Point-Circle and Car-Button environments; in many other cases, SRO improves not only safety but also the reward signal. This is primarily because our augmented objective is bounded in $(-1, 0]$, mildly influencing the training objective to compensate for actions leading to zero long-term cost violations and under-explored actions, without causing significant shifts during training due to the bounded values.
% --------------------- Table for SRO-PPOLag Point
\begin{table}[!htbp]
\centering
\resizebox{\columnwidth}{!}{%
  \begin{tabular}{lcccccccc}
  \hline
  \diagbox{Algo.}{Env.} & \multicolumn{2}{c}{Point-Goal} & \multicolumn{2}{c}{Point-Button} & \multicolumn{2}{c}{Point-Push} & \multicolumn{2}{c}{Point-Circle} \\
  \hline
  & \textbf{R} $\uparrow$ & \textbf{C}(\%) $\downarrow$ & \textbf{R} $\uparrow$ & \textbf{C}(\%) $\downarrow$ & \textbf{R} $\uparrow$ & \textbf{C}(\%) $\downarrow$ & \textbf{R} $\uparrow$ & \textbf{C}(\%) $\downarrow$ \\
  \hline
  PPOLag & 18.20±0.78 & 3.35±0.20 & 10.83±1.56 & 2.94±0.22 & 0.24±0.19 & 1.15±0.27 & \textbf{30.90±3.52} & 13.38±6.74 \\
SRO + PPOLag ($\alpha$=0.05) & \textbf{19.41±0.45} & 2.97±0.25 & 10.92±1.65 & 3.16±0.54 & 0.19±0.08 & 1.16±0.10 & 25.20±2.31 & 4.56±1.87 \\
SRO + PPOLag ($\alpha$=0.1) & 17.65±0.26 & 2.52±0.07 & \textbf{11.63±0.80} & 3.04±0.58 & \textbf{0.53±0.27} & 1.32±0.36 & 27.79±1.04 & 2.93±1.49 \\
SRO + PPOLag ($\alpha$=0.5) & 17.12±0.97 & \textbf{2.51±0.50} & 10.49±0.68 & 2.88±0.32 & 0.32±0.03 & 0.85±0.24 & 26.50±3.41 & 3.58±1.07 \\
SRO + PPOLag ($\alpha$=1.0) & 16.98±1.36 & 2.79±0.36 & 9.69±0.55 & \textbf{2.44±0.35} & 0.36±0.20 & \textbf{0.59±0.16} & 26.07±4.88 & \textbf{2.15±1.27} \\
  \hline
  \end{tabular}%
}

% \label{tab:safety_bonus_point_ppolag}
\end{table}
\FloatBarrier
% --------------------- Table for SRO-PPOLag Car
\begin{table}[!htbp]
\centering
\resizebox{\columnwidth}{!}{%
  \begin{tabular}{lcccccccc}
  \hline
  \diagbox{Algo.}{Env.} & \multicolumn{2}{c}{Car-Goal} & \multicolumn{2}{c}{Car-Button} & \multicolumn{2}{c}{Car-Push} & \multicolumn{2}{c}{Car-Circle} \\
  \hline
  & \textbf{R} $\uparrow$ & \textbf{C}(\%) $\downarrow$ & \textbf{R} $\uparrow$ & \textbf{C}(\%) $\downarrow$ & \textbf{R} $\uparrow$ & \textbf{C}(\%) $\downarrow$ & \textbf{R} $\uparrow$ & \textbf{C}(\%) $\downarrow$ \\
  \hline
  PPOLag & 15.85±1.01 & 3.36±0.46 & \textbf{8.62±1.26} & 3.58±0.36 & \textbf{0.02±0.09} & 1.14±0.12 & 11.48±0.38 & 2.57±0.23 \\
  SRO + PPOLag ($\alpha$=0.05) & 16.10±1.07 & \textbf{3.08±0.11} & 8.37±1.69 & 3.31±1.13 & 0.02±0.07 & \textbf{0.95±0.20} & 11.24±0.13 & 2.32±0.36 \\
  SRO + PPOLag ($\alpha$=0.1) & \textbf{16.81±1.17} & 3.16±0.18 & 8.53±1.64 & 3.56±0.36 & 0.02±0.04 & 1.07±0.09 & 11.46±0.45 & 2.00±0.20 \\
  SRO + PPOLag ($\alpha$=0.5) & 15.50±1.11 & 3.18±0.34 & 8.24±0.54 & \textbf{3.14±0.43} & -0.02±0.10 & 1.14±0.29 & \textbf{11.52±0.47} & 2.02±0.88 \\
  SRO + PPOLag ($\alpha$=1.0) & 16.41±0.57 & 3.60±0.53 & 8.11±1.17 & 3.78±0.16 & -0.01±0.10 & 0.99±0.23 & 11.42±0.56 & \textbf{1.74±0.60} \\
  \hline
  \end{tabular}%
}
\caption{Ablation Study on the Varying Effects of Safety Bonus $\alpha$ on Safety and Performance. Best performances (highest return and lowest cost rate) are highlighted in bold.}
\label{tab:safety_bonus_car_ppolag}
\end{table}
\FloatBarrier

\startpara{Adaptive Shielding with PPO-Lag}
For sampling numbers, unlike the safety bonus $\alpha$, no consistent pattern emerges due to randomness in the sampling procedure. The same reasoning outlined in Appendix~\ref{apx: sb and ss} applies here. Thus, reducing prediction errors and mitigating the inherent randomness in the sampling process represent key areas for future research to enhance shielding-based approaches.

% --------------------- Table for Sampling PPOLag Point
\begin{table}[!htbp]
\centering
\resizebox{\columnwidth}{!}{%
  \begin{tabular}{lcccccccc}
  \hline
  \diagbox{Algo.}{Env.} & \multicolumn{2}{c}{Point-Goal} & \multicolumn{2}{c}{Point-Button} & \multicolumn{2}{c}{Point-Push} & \multicolumn{2}{c}{Point-Circle} \\
  \hline
  & \textbf{R} $\uparrow$ & \textbf{C}(\%) $\downarrow$ & \textbf{R} $\uparrow$ & \textbf{C}(\%) $\downarrow$ & \textbf{R} $\uparrow$ & \textbf{C}(\%) $\downarrow$ & \textbf{R} $\uparrow$ & \textbf{C}(\%) $\downarrow$ \\
  \hline
  PPOLag & 18.20±0.78 & 3.35±0.20 & 10.83±1.56 & 2.94±0.22 & 0.24±0.19 & 1.15±0.27 & \textbf{30.90±3.52} & 13.38±6.74 \\
Shield + SRO (s=5) & 17.41±1.96 & 2.84±0.36 & 10.27±1.30 & 2.87±0.36 & \textbf{0.38±0.11} & 1.01±0.12 & 25.74±3.26 & 4.25±1.30 \\
Shield + SRO (s=10) & \textbf{18.38±0.84} & 2.86±0.25 & 10.18±0.86 & 2.87±0.23 & 0.20±0.06 & 0.92±0.36 & 27.34±8.09 & 2.42±2.48 \\
Shield + SRO (s=20) & 17.10±2.28 & 2.75±0.32 & 8.84±0.05 & \textbf{2.67±0.27} & 0.15±0.12 & \textbf{0.82±0.10} & 28.83±3.36 & \textbf{2.03±0.87} \\
Shield + SRO (s=50) & 17.44±0.90 & \textbf{2.60±0.26} & \textbf{11.23±3.43} & 3.12±1.07 & 0.37±0.10 & 1.27±0.64 & 27.55±4.32 & 2.43±1.54 \\
  \hline
  \end{tabular}%
}
% \label{tab:sampling_point_ppolag}
\end{table}
\FloatBarrier

% --------------------- Table for Sampling PPOLag Car
\begin{table}[!htbp]
\centering
\resizebox{\columnwidth}{!}{%
  \begin{tabular}{lcccccccc}
  \hline
  \diagbox{Algo.}{Env.} & \multicolumn{2}{c}{Car-Goal} & \multicolumn{2}{c}{Car-Button} & \multicolumn{2}{c}{Car-Push} & \multicolumn{2}{c}{Car-Circle} \\
  \hline
  & \textbf{R} $\uparrow$ & \textbf{C}(\%) $\downarrow$ & \textbf{R} $\uparrow$ & \textbf{C}(\%) $\downarrow$ & \textbf{R} $\uparrow$ & \textbf{C}(\%) $\downarrow$ & \textbf{R} $\uparrow$ & \textbf{C}(\%) $\downarrow$ \\
  \hline
  PPOLag & 15.85±1.01 & \textbf{3.36±0.46} & \textbf{8.62±1.26} & 3.58±0.36 & 0.02±0.09 & 1.14±0.12 & \textbf{11.48±0.38} & 2.57±0.23 \\
Shield + SRO (s=5) & 15.56±0.47 & 3.66±0.64 & 7.57±0.39 & 3.28±0.12 & 0.03±0.11 & 1.48±0.46 & 10.38±0.39 & 1.43±0.70 \\
Shield + SRO (s=10) & \textbf{16.87±5.59} & 3.79±0.24 & 7.62±0.91 & 3.35±0.57 & 0.05±0.10 & 1.12±0.30 & 10.15±0.67 & 1.01±0.27 \\
Shield + SRO (s=20) & 15.97±1.36 & 3.43±0.83 & 7.56±1.01 & \textbf{2.64±0.47} & \textbf{0.09±0.14} & 1.22±0.36 & 10.33±0.15 & \textbf{0.93±0.09} \\
Shield + SRO (s=50) & 13.93±2.09 & 3.48±0.50 & 6.25±1.47 & 3.22±0.41 & 0.05±0.10 & \textbf{1.01±0.55} & 11.38±0.54 & 1.49±0.07 \\
  \hline
  \end{tabular}%
}
\caption{Ablation Study on the Varying Effects of Sampling Numbers $s$ on Safety and Performance with fixed safety bonus $\alpha = 1.0$. Best performances (highest return and lowest cost rate) are highlighted in bold.}
\vspace{0.5cm}
\label{tab:sampling_car_ppolag}
\end{table}
\FloatBarrier

\vspace{0.5cm}
\begin{figure}[!t]
    \centering
    \includegraphics[width=1.0\columnwidth]{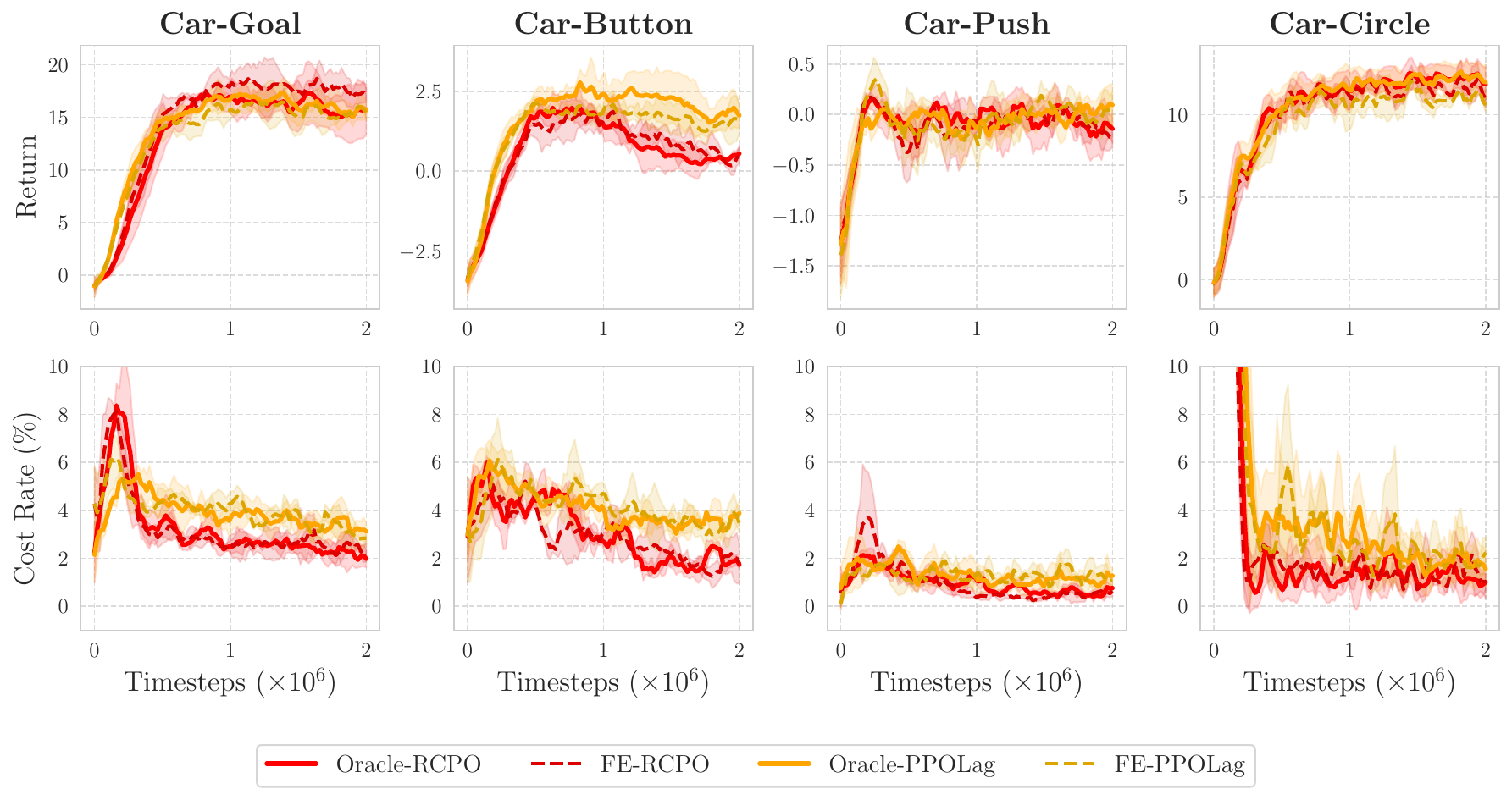}
    % \caption{Ablation study on Representation (Car robot). "Oracle-" refers to a policy directly informed of hidden parameters, while "FE-" denotes the function encoder's representation derived from observations.}
    % \label{fig:rep_car}
% \end{figure}

% \begin{figure}[!t]
    \centering
    \includegraphics[width=1.0\columnwidth]{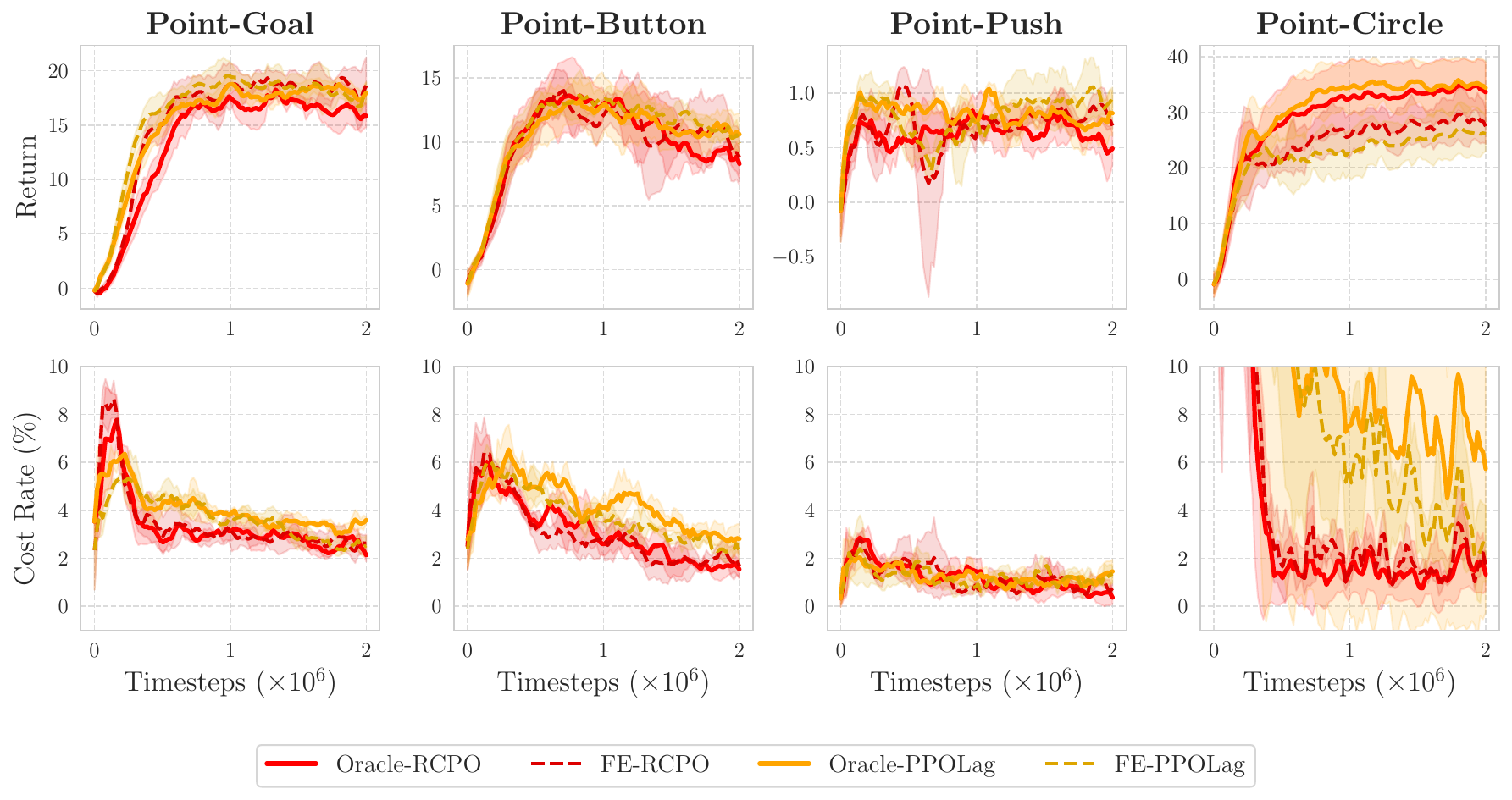}
    \caption{Ablation study on Representation. "Oracle-" refers to a policy directly informed of hidden parameters, while "FE-" denotes the function encoder's representation derived from observations.}
    \vspace{0.5cm}
    \label{fig:rep_point}
\end{figure}
\section{Function Encoders and Its Representation}\label{apx: representation}
For completeness, we present the necessary background on function encoders. For more information, see \citep{ingebrand2024zeroshot_ode, ingebrand2025functionencodersprincipledapproach}.

\startpara{Overview of Function Encoders}
Function encoders, as introduced by \cite{ingebrand2025functionencodersprincipledapproach}, provide a principled framework for representing tasks in a Hilbert space $\mathcal{H}$ through a finite set of neural network-based basis functions $\{g_1, \ldots, g_k\}$, each parameterized by $\theta_j$. A task function $f \in \mathcal{H}$ is approximated as a linear combination of these basis functions:
\[
f(x) = \sum_{j=1}^k \rep_j g_j(x \mid \theta_j),
\]
where $\rep_j$ are coefficients tailored to the specific task. This approach enables efficient representation of complex functions, such as those encountered in reinforcement learning or classification, by learning a versatile basis that spans the function space. By defining appropriate inner products, function encoders can generalize to various function spaces, including probability distributions for classification tasks. \added{
During supervised training, the neural basis functions $\{g_j\}$ are optimized such that $\sum_j b_j g_j(x_i) \approx f(x_i)$ for all training points $(x_i, f(x_i))$. 
As expected from functional analysis, using more basis functions increases expressive power and allows modeling more complex dynamics. Empirically, we demonstrate that a larger number of basis functions yields richer representations showing faster convergence of the dynamics prediction loss. Models with fewer basis functions eventually reach similar final performance but require more training epochs (Appendix~\ref{apx:alternative dynamics predictors}).
}

The training process consists of two phases: offline training of the basis functions and online inference to compute task-specific coefficients. The offline phase optimizes the basis to minimize reconstruction error across a set of source datasets, while the online phase efficiently computes coefficients for new tasks using the learned basis.

% Explaining the training procedure
\startpara{Training Function Encoders via Least Squares}
Function encoder is trained by using a least-squares optimization approach~\citep{ingebrand2025functionencodersprincipledapproach}. Given a set of task functions $\{f_1, \ldots, f_n\}$, the goal is to learn a set of basis functions $\{g_1, \ldots, g_k\}$ parameterized by $\theta$ and these basis functions represent the task functions with varying coefficients $\rep$. In our implementation, each task function $f_\ell$ is defined by fixing one set of hidden parameters: gravity, mass, damping, density, friction. 

The training procedure iteratively minimizes a loss function comprising two components: a reconstruction loss and a regularization term. For each task function $f_\ell$, we compute coefficients $b^\ell = [\rep_1^\ell,   \ldots, \rep_k^\ell]^T$ that best approximate the target function $f_\ell$ as:
\[
\rep^\ell = \left[ \begin{array}{ccc}
\langle g_1, g_1 \rangle_{\mathcal{H}} & \cdots & \langle g_1, g_k \rangle_{\mathcal{H}} \\
\vdots & \ddots & \vdots \\
\langle g_k, g_1 \rangle_{\mathcal{H}} & \cdots & \langle g_k, g_k \rangle_{\mathcal{H}}
\end{array} \right]^{-1} \left[ \begin{array}{c}
\langle f_{\ell}, g_1 \rangle_{\mathcal{H}} \\
\vdots \\
\langle f_{\ell}, g_k \rangle_{\mathcal{H}}
\end{array} \right],
\]
where $\langle \cdot, \cdot \rangle_{\mathcal{H}}$ denotes the inner product in the Hilbert space, estimated via Monte Carlo integration over collected data points $\{(x_1, f_\ell(x_1)), ((x_2, f_\ell(x_2)), \cdots, ((x_N, f_\ell(x_N))\}$. The reconstructed function is then $\hat{f}_{\ell} = \sum_{j=1}^k \rep_j^\ell g_j$. The reconstruction loss is defined as:
\[
L = \frac{1}{n} \sum_{i=1}^n \| f_{i} - \hat{f}_{i} \|_{\mathcal{H}}^2,
\]
which measures the average squared error between the true and approximated functions. To ensure the basis functions remain well-conditioned, a regularization term is added to the loss function:
\[
L_{\text{reg}} = \sum_{i=1}^k \left( \| g_i \|_{\mathcal{H}}^2 - 1 \right)^2,
\]
which encourages the basis functions to have unit norm. For a learning rate $\alpha$, the parameters $\theta$ are updated via gradient descent: $\theta \leftarrow \theta - \alpha \nabla_\theta (L + L_{\text{reg}})$, until convergence.

% Highlighting benefits and theoretical grounding
\startpara{Empirical Evaluation of the Representation}
We investigate the function encoder’s representation of varying underlying dynamics $T_\context$. We evaluate two representations for handling hidden parameters in our safe RL framework: Oracle representation, where the hidden parameter $\context$ (a scaling factor for environmental dynamics such as density and damping) is directly provided to the policy by concatenating it with the state input, and Function Encoder (FE) representation, which uses a function encoder $\hat{f}_{\text{FE}}$ to infer the underlying dynamics $T_\context$, with coefficients of pretrained basis functions serving as the representation. These representations are tested to assess the function encoder’s ability to adapt to varying dynamics in Safety Gymnasium tasks. Regarding training hyperparameters, we employed three number of basis functions for the function encoder, trained over 1000 epochs on a dataset of 1000 episodes. Batch size was set to 256. Figure \ref{fig:rep_point} show that the function encoder’s representation is often comparable to the oracle representation and, in some cases, outperforms it. The function encoder leverages neural basis functions to represent the space of varying dynamics $\left\{T_\context\right\}_{\context \in \contextset}$. For instance, just as the $\mathbb{R}^2$ plane is spanned by linear combinations of basis vectors $(0,1)$ and $(1,0)$, the dynamics space is captured by neural basis functions, making their coefficients highly informative. This representation often transitions smoothly, as shown in \citep{tyler2024zero_shot}, promoting policy effective adaptation to dynamic changes. Consequently, our function encoder's representation frequently matches or surpasses oracle representation performance.

\begin{figure}[!t]
    \centering
    \includegraphics[width=0.9\columnwidth]{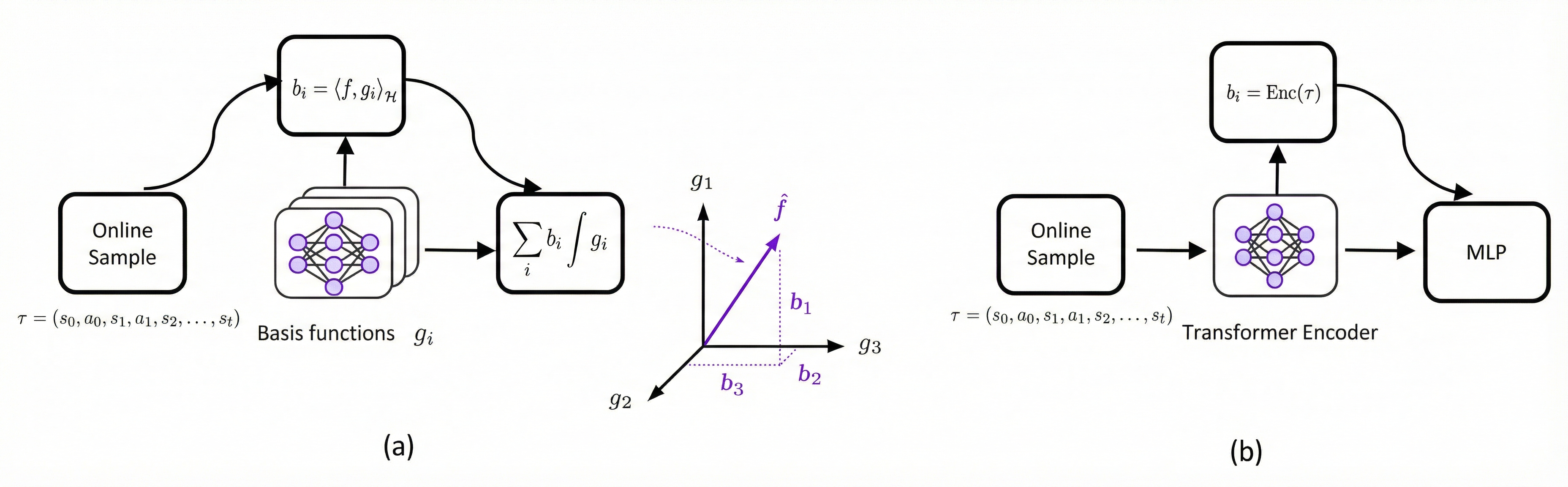}
    \caption{\added{\textbf{(a)} Illustration of how function encoders obtain proxy representations of the underlying hidden parameters using online samples.
\textbf{(b)} A naive approach using transformer encoder to infer hidden parameters from a sequence of online samples. MLP stands for multi-layer perceptron in the Figure.}}
    \label{fig:structure}
\end{figure}
\FloatBarrier

\added{
\section{Shielding with Alternative Dynamics Predictors}\label{apx:alternative dynamics predictors}
In this section, we present ablation studies on different dynamics predictors by replacing the function encoder with alternative prediction models. We consider three predictors: a naive transformer-based dynamics model, a probabilistic ensemble model (PEM), and a multilayer perceptron (MLP). Since PEM and MLP are not designed to infer hidden environment parameters $\phi$ from context alone, we provide them with the true $\phi$ as additional input. Following prior baselines that assume access to the environment parameter~$\phi$, we refer to these models as Oracle-PEM and Oracle-MLP.
\newline{}\\
As another baseline described in Figure \ref{fig:structure}, we use a transformer encoder to process the current episode's trajectory
$$
\tau_n=\left(s_0, a_0, s_1, a_1, \ldots, s_n\right)
$$
and extract a latent representation $b_\phi=\operatorname{Enc}\left(\tau_n\right)$ as a proxy for the hidden parameters $\phi$. For a fair comparison to function encoder, the transformer encoder is given $100$ samples to infer $b_\phi$, which is then concatenated to the state $s_t$ and fed into the predictor to generate $\hat{s}_{t} = (s_t, b_\phi)$. All dynamics predictor are designed to hold a comparable parameter numbers (270k-280k). All dynamics predictors are trained on $1,000$ in-distribution episodes and evaluated on $200$ out-of-distribution test episodes, which are not used during training. 
\newline{}\\
We first report next state prediction performance on evaluation OOD dataset across all dynamics predictors. We then compare the naive transformer combined with Shield and Shield + SRO against the function encoder combined with the same shielding mechanisms. Note that SRO alone does not use dynamics prediction at deployment.
\newline{}\\
Figure \ref{fig:dyna_all_compare} shows that the function encoder's next-state prediction accuracy closely matches that of Oracle-PEM and Oracle-MLP. This observation is consistent with prior findings \cite{tyler2024zero_shot, ingebrand2024zeroshot_ode}, which also report that function-encoder-based models can approximate the performance of oracle-informed predictors.
\newline{}
Instead of inferring the hidden parameters $\phi$ through neural basis functions, the Transformer encodes the observation sequence $\tau=( s_0, a_0, s_1, \ldots, s_n )$ and projects the final embedding into a low-dimensional parameter estimate $b_\phi$. Since the true hidden parameters $\phi$'s dimension in safe-navigation domain is 4 (damping, mass, inertia, friction), we evaluate projection dimensions of $3,6,9$, and $12$.
\newline{}
Figure \ref{fig:rep_hidden_param} shows an interesting pattern. For the function encoder, increasing the number of basis functions shows early-epoch convergence because the representation has higher capacity, but all configurations eventually converge to similar accuracy, which aligns with the supervised nature of the objective and dataset limitations. In contrast, the Transformer-based encoder does not exhibit a consistent relationship between projection dimension and prediction quality, and its overall performance is less stable.
\newline{}\\
Finally, Figure \ref{fig:rep_compare_with_tf} shows that the function encoder consistently yields superior performance, achieving higher rewards and fewer constraint violations. This improvement originates from the combination of $(i)$ a more stable and expressive learned representation and $(ii)$ higher next-state prediction accuracy, both of which enhance shielding framework.
}
\begin{figure}[!t]
    \centering
    \includegraphics[width=1.0\columnwidth]{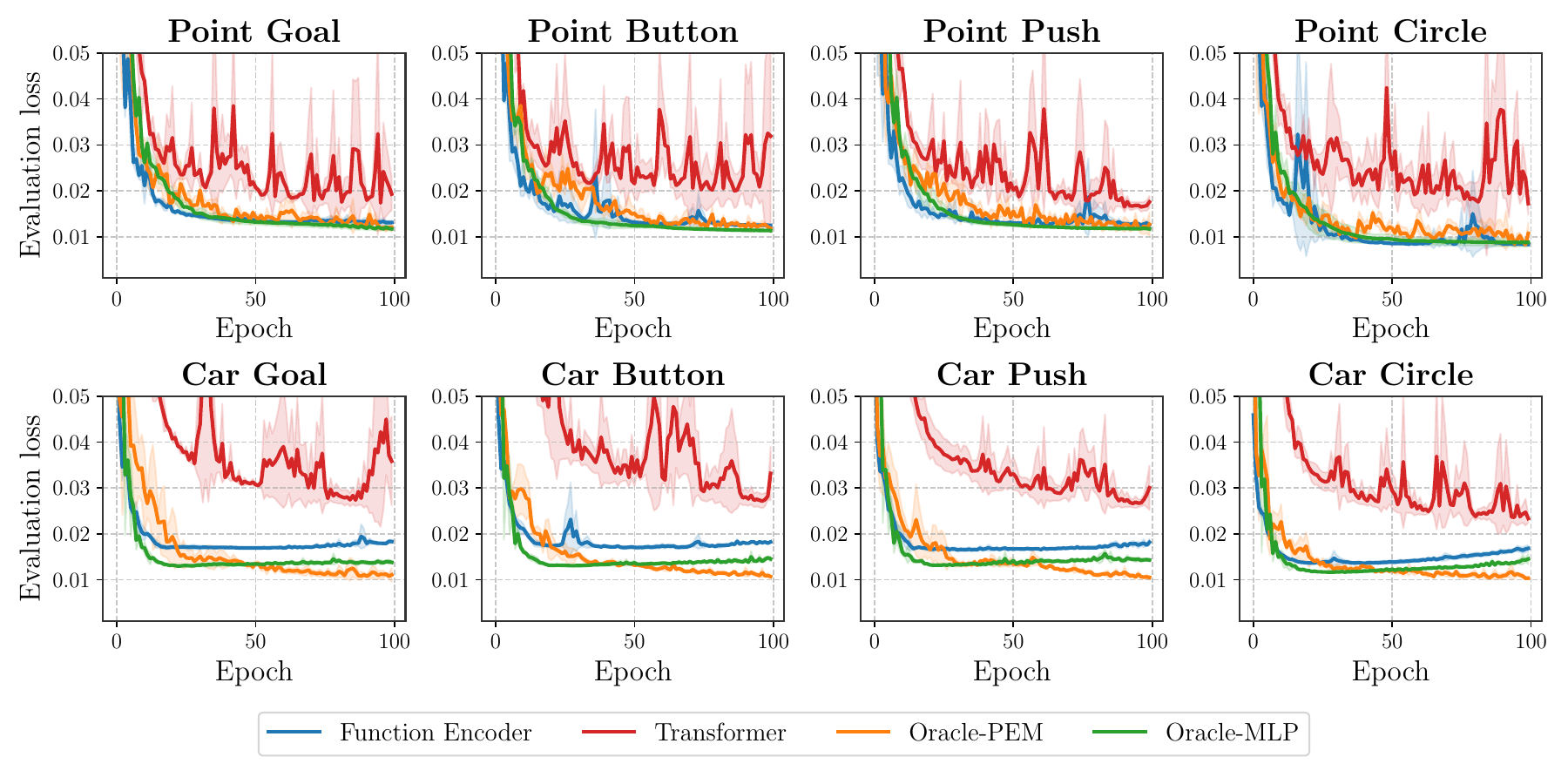}
    \caption{\added{Ablation study evaluating the performance of various dynamics predictors in forecasting the next state. The $y$-axis denotes the average per-sample $\ell_1$-norm error between the true and predicted next states on the test dataset.}}
    \label{fig:dyna_all_compare}
\end{figure}

\begin{figure}[!t]
    \centering
    \includegraphics[width=1.0\columnwidth]{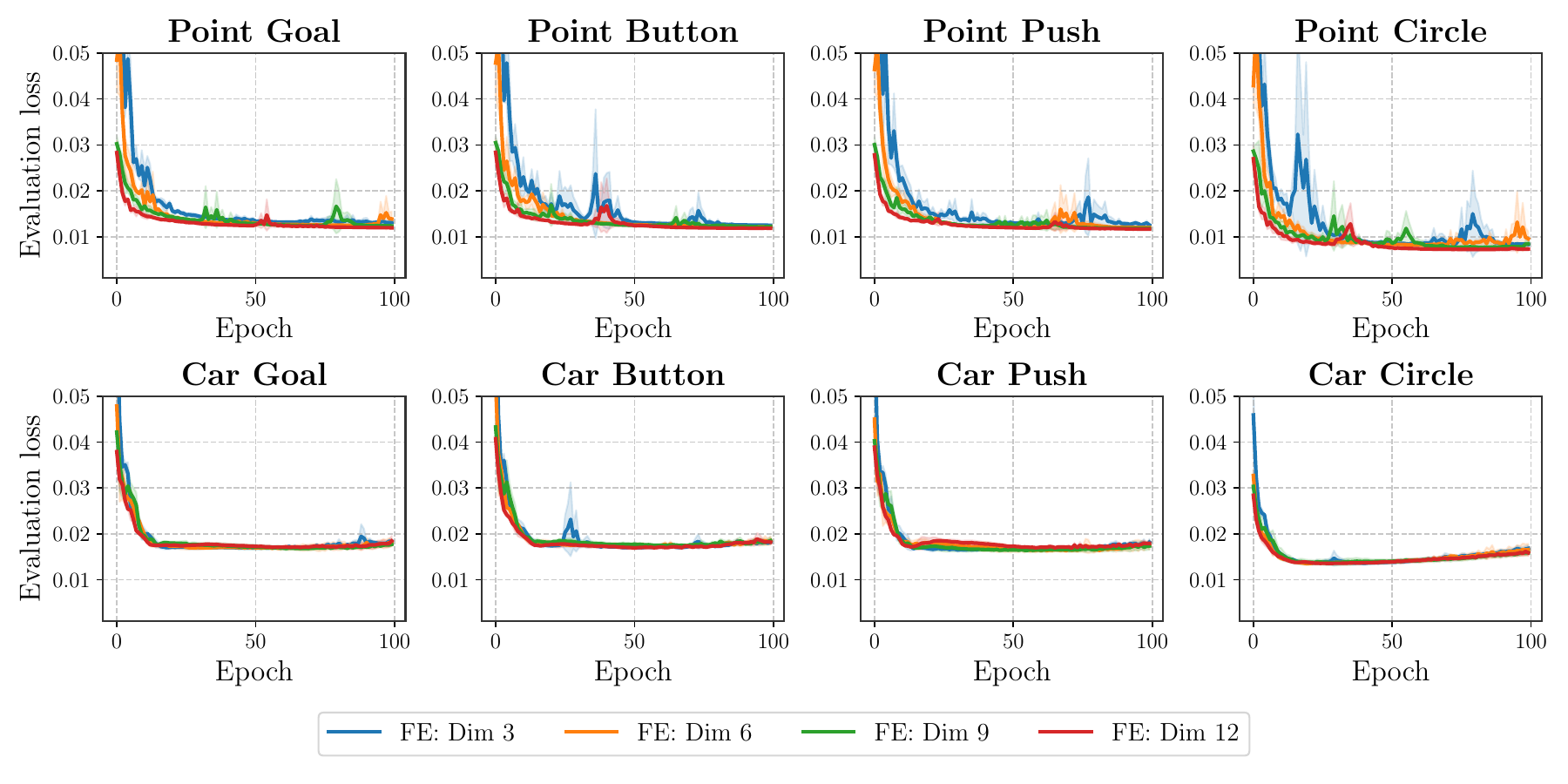}
    
    \centering
    \includegraphics[width=1.0\columnwidth]{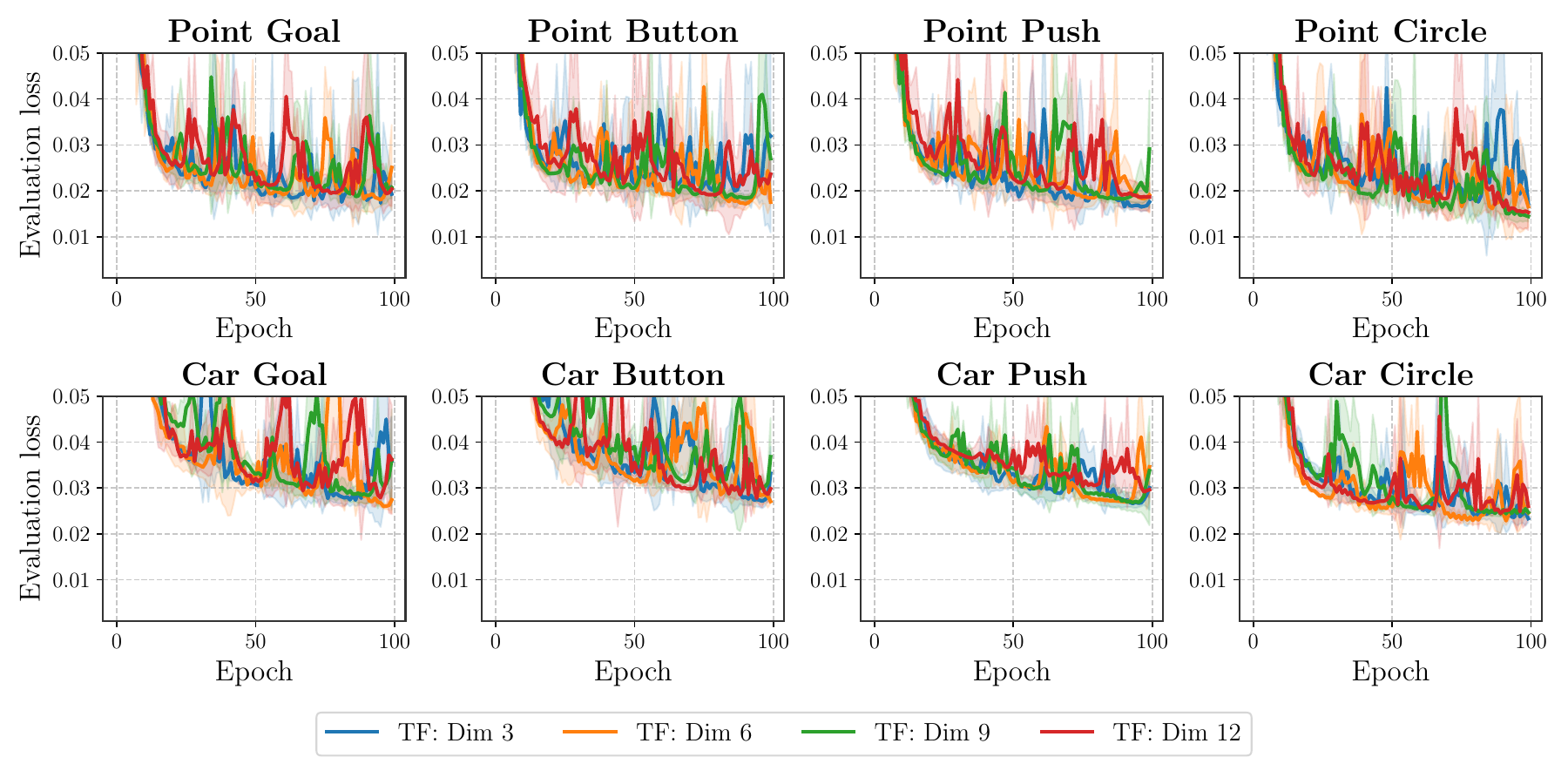}
    \caption{\added{We evaluate how the dimension of the inferred representation $b_\phi$, which is a proxy to hidden parameters $\phi$. We vary the dimension across $\{3,6,9,12\}$, motivated by the ground-truth hidden parameter dimension of $4$ (capturing variations in damping, mass, inertia, and friction). The plot reports the average per-sample $\ell_1$ prediction error between the true and predicted next states on the test set.}}
    \vspace{0.5cm}
    \label{fig:rep_hidden_param}
\end{figure}

\begin{figure}[!t]
\centering

% --- Panel (a): Training ---
\includegraphics[width=1.0\columnwidth]{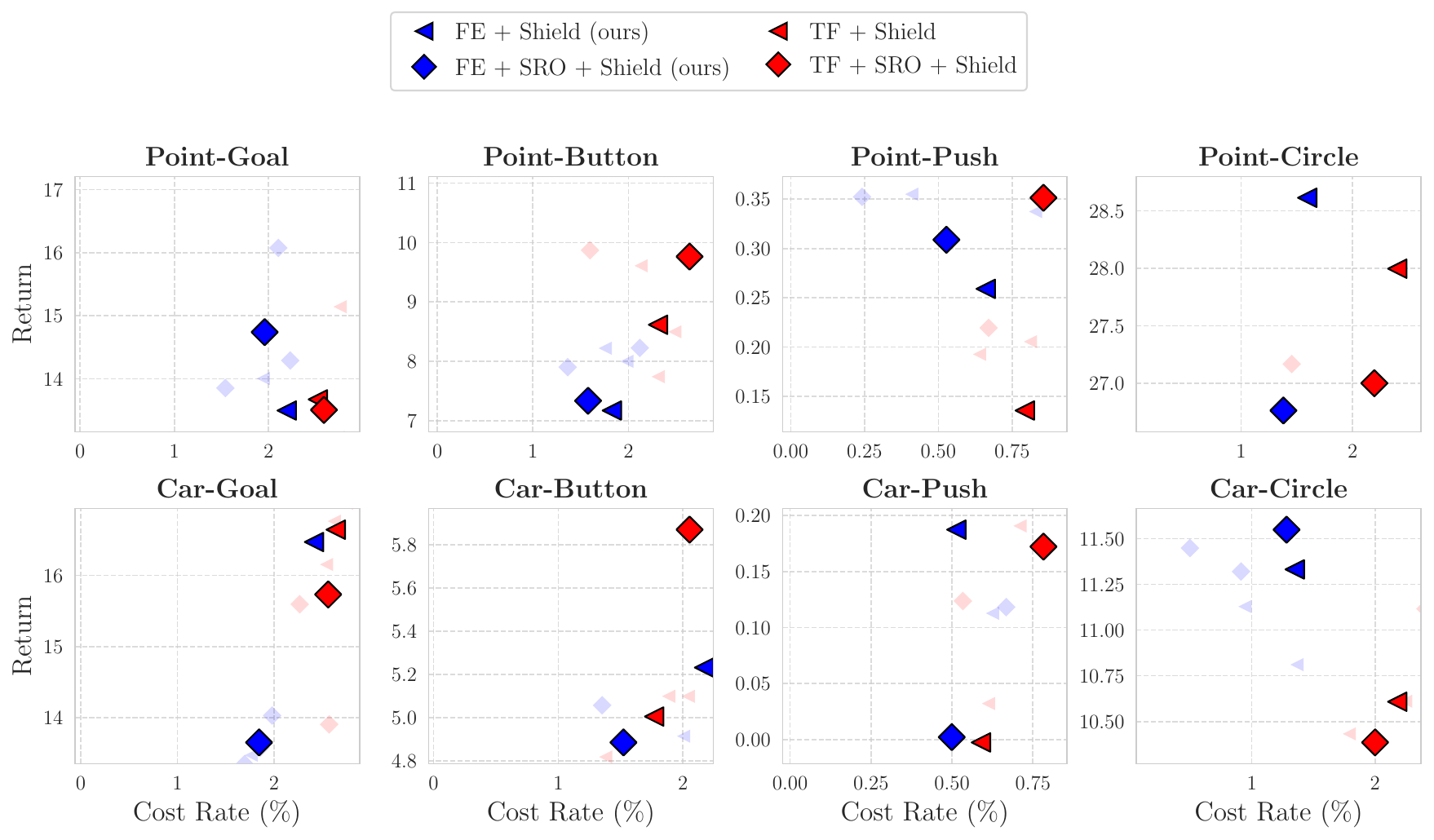}
\raisebox{1cm}{(a) Training performance} 
\includegraphics[width=1.0\columnwidth]{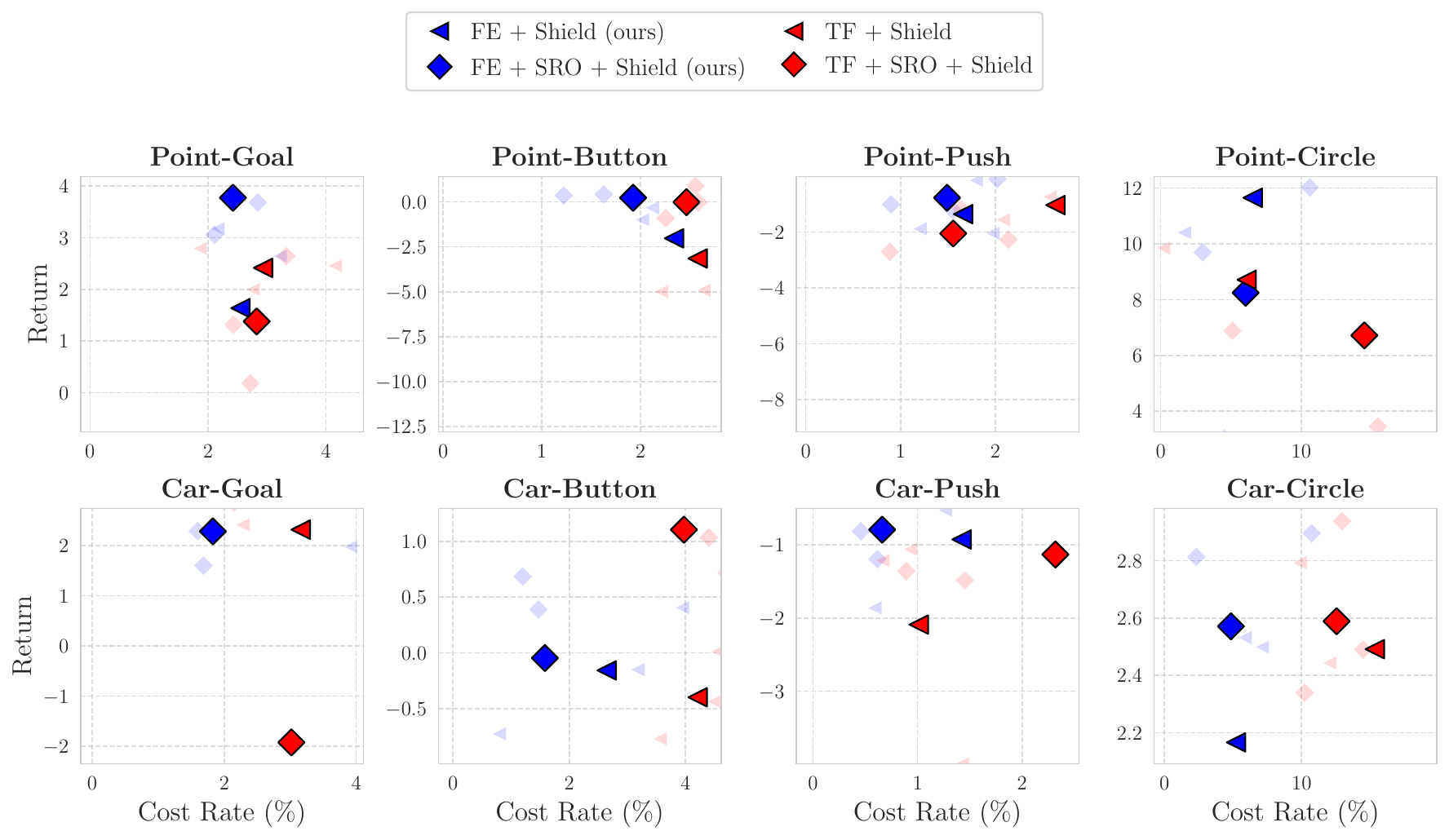}
\vspace{-3cm}
\raisebox{3.9cm}{(b) OOD generalization performance}
\caption{\added{Comparison of our function encoder (FE) representation to a naive Transformer-based (TF) representation when both are used with the proposed shielding framework (all other hyperparameters and conditions remain the same). 
\textbf{(a)} In-distribution training (return vs. cost rate). 
\textbf{(b)} Out-of-distribution (OOD) evaluation (return vs. cost rate). 
In both settings, our method attains higher returns and fewer constraint violations. This improvement arises from the combination of a more informative latent representation and higher next-state prediction accuracy, both of which strengthen the effectiveness of the shielding mechanism.}}
\label{fig:rep_compare_with_tf}
\vspace{0.5cm}
\end{figure}

\added{
\section{Additional Experiments: Safe Velocity Control in HalfCheetah}\label{apx: half cheetah}
\newpage
\begin{wrapfigure}{r}{0.4\columnwidth}
    \centering
    % \vspace{-10pt}
    \includegraphics[width=0.3\columnwidth]{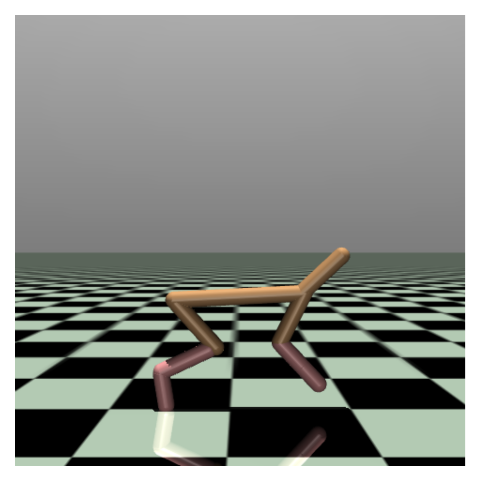}
    % \vspace{-8pt}
    \caption{HalfCheetah-Velocity environment.}
    \label{fig:halfcheetah_env}
\end{wrapfigure}
In this section, we evaluate whether our mechanism generalizes to different robot morphologies and task type by conducting shielding experiments in the HalfCheetah-Velocity task. 
We follow the safetyconstrained velocity environment from Safety Gymnasium~\citep{ji2023safety}, but strengthen the safety requirement by tightening the velocity limit from 3.0 to 2.0. As in our earlier settings, the hidden environment parameters vary each episode.
To induce richer dynamics variability, we modify the HalfCheetah default parameters, including friction, body segment lengths, and gear ratios with total $14$ different hidden parameters:
$$\{friction, torso\_length, bthigh\_length, \cdots , foot\_gear\},$$ and resample them at the beginning of each episode. During training, each parameter is scaled uniformly within $[0.7,1.3]$. For out-of-distribution (OOD) evaluation, we use the disjoint ranges $[0.4,0.7] \cup [1.3, 1.6]$.
\newline{}\\
We use 5 neural basis functions in the function encoder, resulting in a 5-dimensional representation that serves as a proxy $b_\phi$ for the hidden parameters $\phi$. As a baseline, RCPO is run with oracle access to the true hidden parameters $\phi$, whereas our method uses the same RCPO implementation without oracle information and instead augment the state with the learned basis coefficients $b_\phi$ as its parameter estimate. All models share the same RL hyperparameters. For hyperparameters for shielding, we use 10 samples of actions, and a safety bonus $\alpha = 1$.
\newline{}\\
Across both training and OOD settings, Figure \ref{fig:half_cheetah} shows consistent improvements in safety and overall performance. Our method achieves a favorable reward-cost Pareto frontier compared to the oracle informed baseline, demonstrating that the learned representation and shielding mechanism transfer effectively to more complex robot dynamics.
\begin{figure}[!t]
    \centering
    \includegraphics[width=0.6\columnwidth]{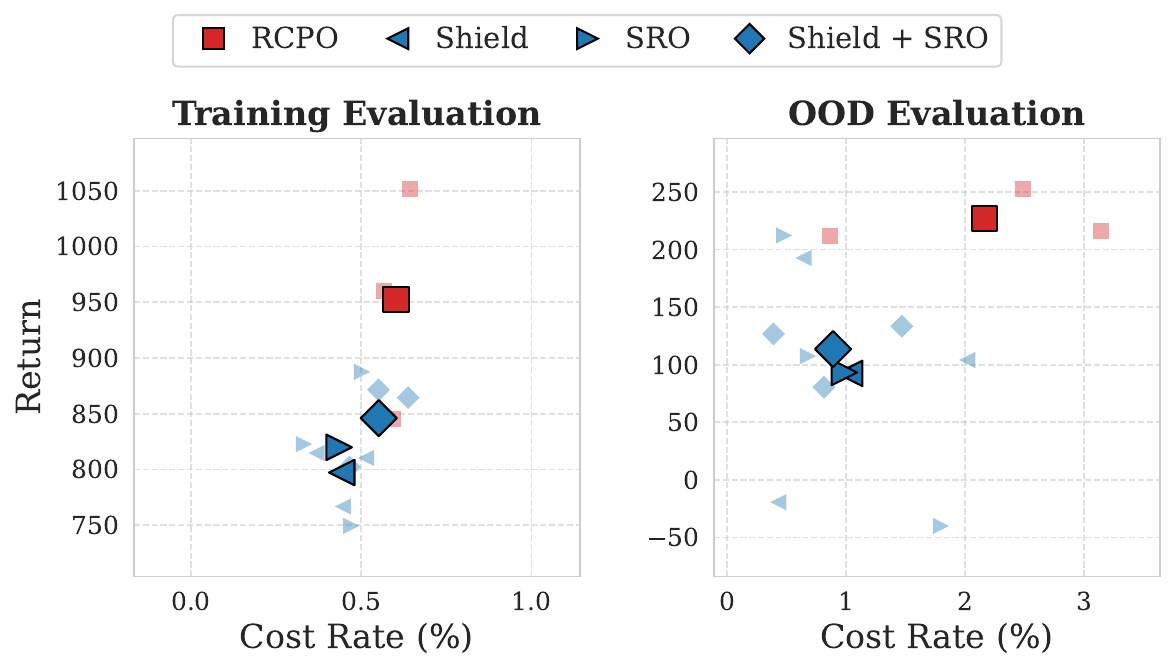}
    \caption{\added{(Left) Return vs. cost rate during the last 20 training epochs, showing the 
reward–safety tradeoff achieved by each method. 
(Right) Out-of-distribution (OOD) evaluation performance under shifted hidden parameters.}}
    \label{fig:half_cheetah}
\end{figure}
}

\section{Why Not $Q_C$, But $Q_{\text{safe}}$?}\label{apx:design choice}
To effectively guide the policy toward safe behavior, we propose a safety-regularized optimization enhanced with $Q_{\text{safe}}$. A natural alternative is to augment the reward value function $Q_R$ with the cost value function $Q_C$, which estimates the expected cost of violating safety constraints, forming $Q_{\text{aug}} = Q_R - \alpha Q_C$. However, this formulation can be transformed into Lagrangian-based safe RL methods, optimizing policies with $Q_R - \lambda Q_C$, where $\lambda$ is a Lagrangian multiplier dynamically adjusted during training. In particular, the Lagrangian multiplier $\lambda$ is updated using a learning rate $lr$. A higher learning rate accelerates the increase of $\lambda$, assigning stronger penalties on the policy for cost violations. $\lambda$ is adjusted by $Q_C \times lr$; larger $Q_C$ or learning rate values lead to faster $\lambda$ growth, which increases the penalty term in the optimization objective ($Q_R - \lambda Q_C$).

However, Primal-Dual methods such as PPOLag or RCPO are sensitive to the choice of $lr$, often leading to unstable optimization or suboptimal safety-performance trade-offs as shown in Figures \ref{fig:lr_rcpo} and \ref{fig:lr_ppolag}. This is because the value of $Q_C$ is highly environment-dependent, varying with the magnitude of costs and the dynamics induced by hidden parameters. In contrast, our safety-regularized objective $Q_{\text{safe}}$ incorporates a normalized term, constrained to $(-1, 0]$. This normalization simplifies controlling the safety bonus by ensuring it remains bounded. 
% Figures \ref{fig:bonus_car} and \ref{fig:bonus_point} demonstrate that $Q_{\text{safe}}$ maintains consistent performance regardless of $Q_C$’s scale, achieving reliable policy optimization in hidden-parameter settings.
\begin{figure}[!t]
    \centering
    \includegraphics[width=1.0\columnwidth]{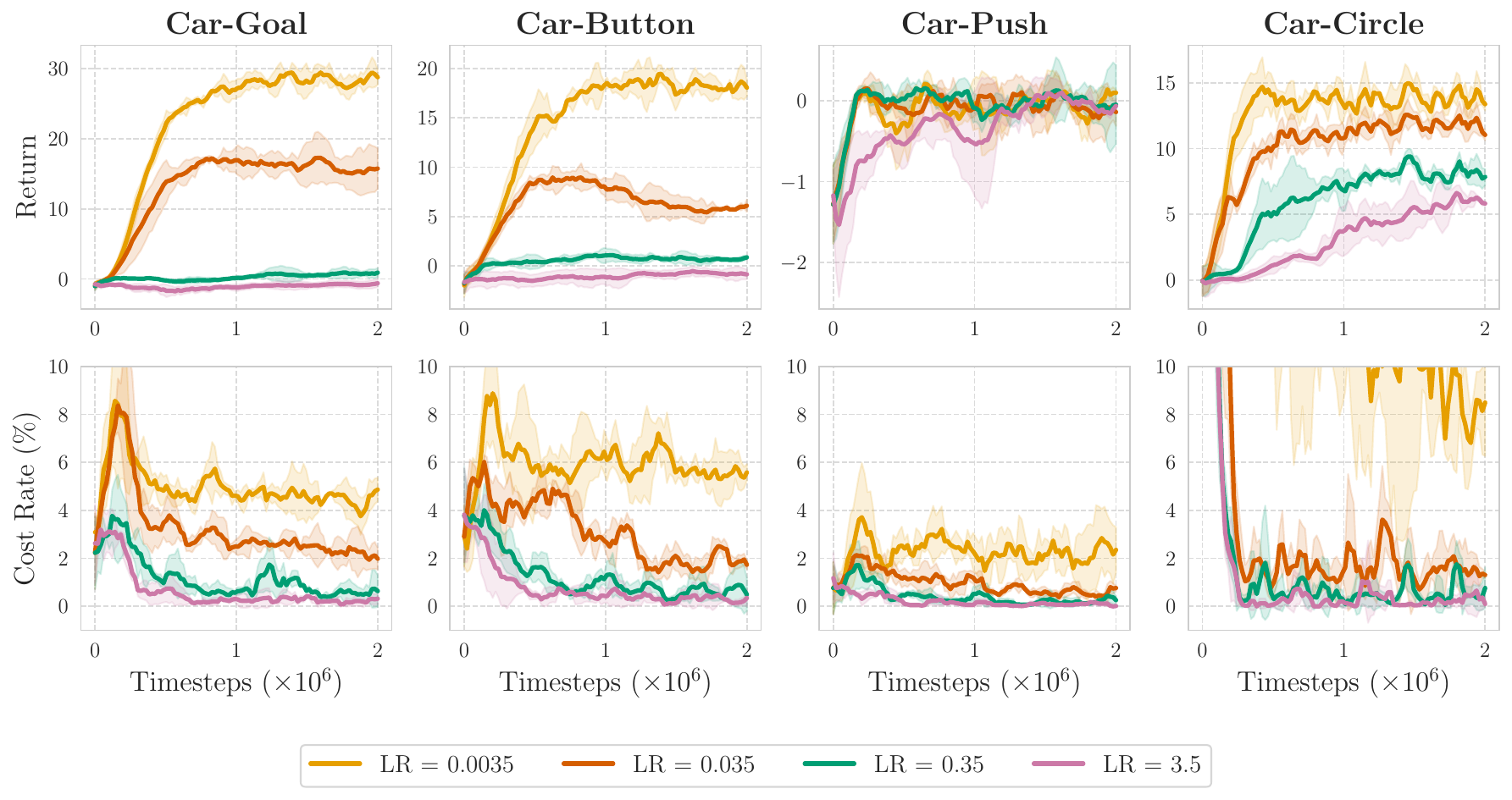}
    \includegraphics[width=1.0\columnwidth]{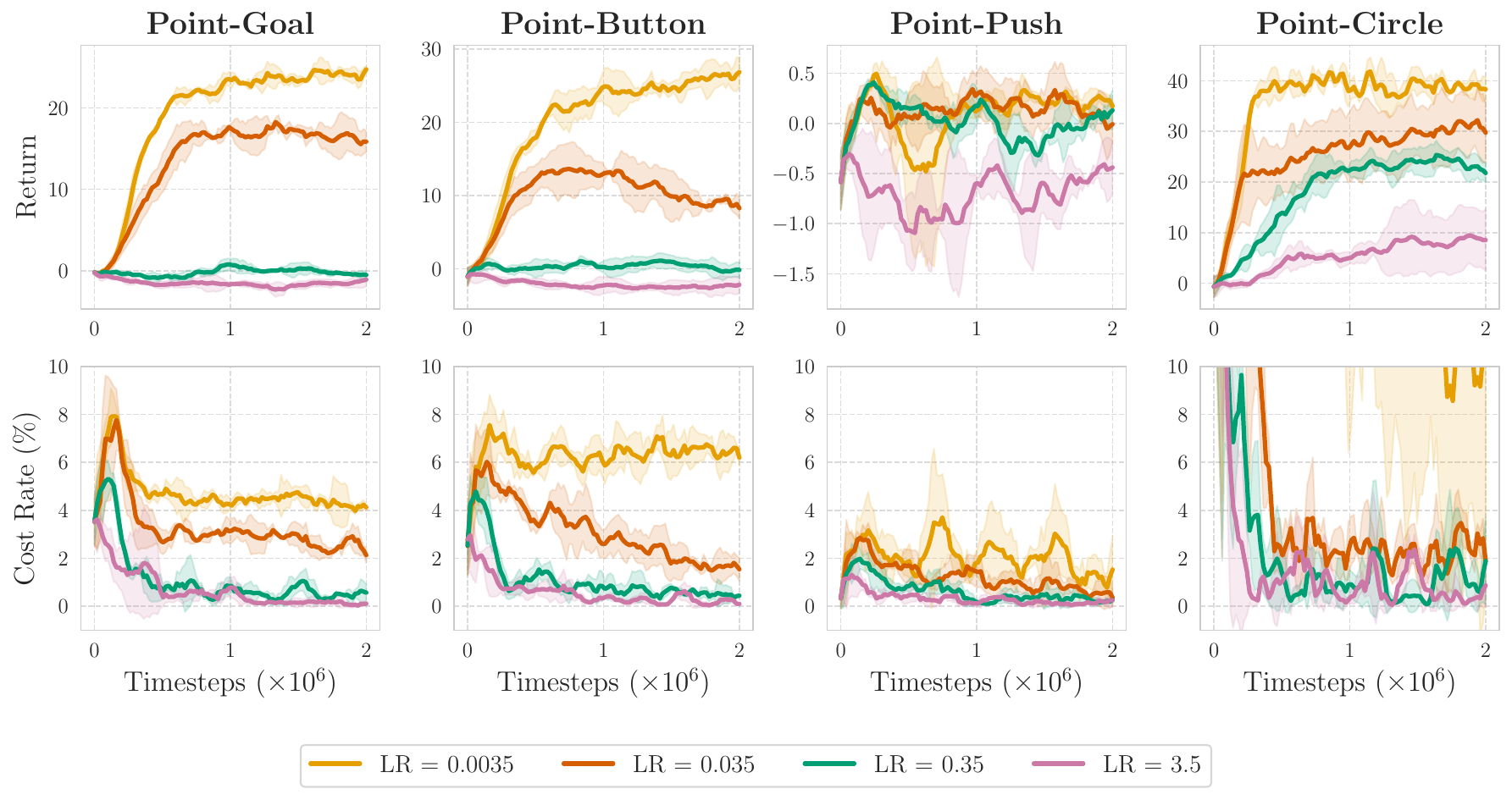}
\caption{Training curves for RCPO algorithm under varying Lagrangian learning rates. The plots illustrate significant performance variations depending on the learning rate. For our main comparisons, we selected a learning rate of 0.035, which achieves the best trade-off between reward maximization and constraint satisfaction.}
    \label{fig:lr_rcpo}
\end{figure}

\begin{figure}[!t]
    \centering
    \includegraphics[width=1.0\columnwidth]{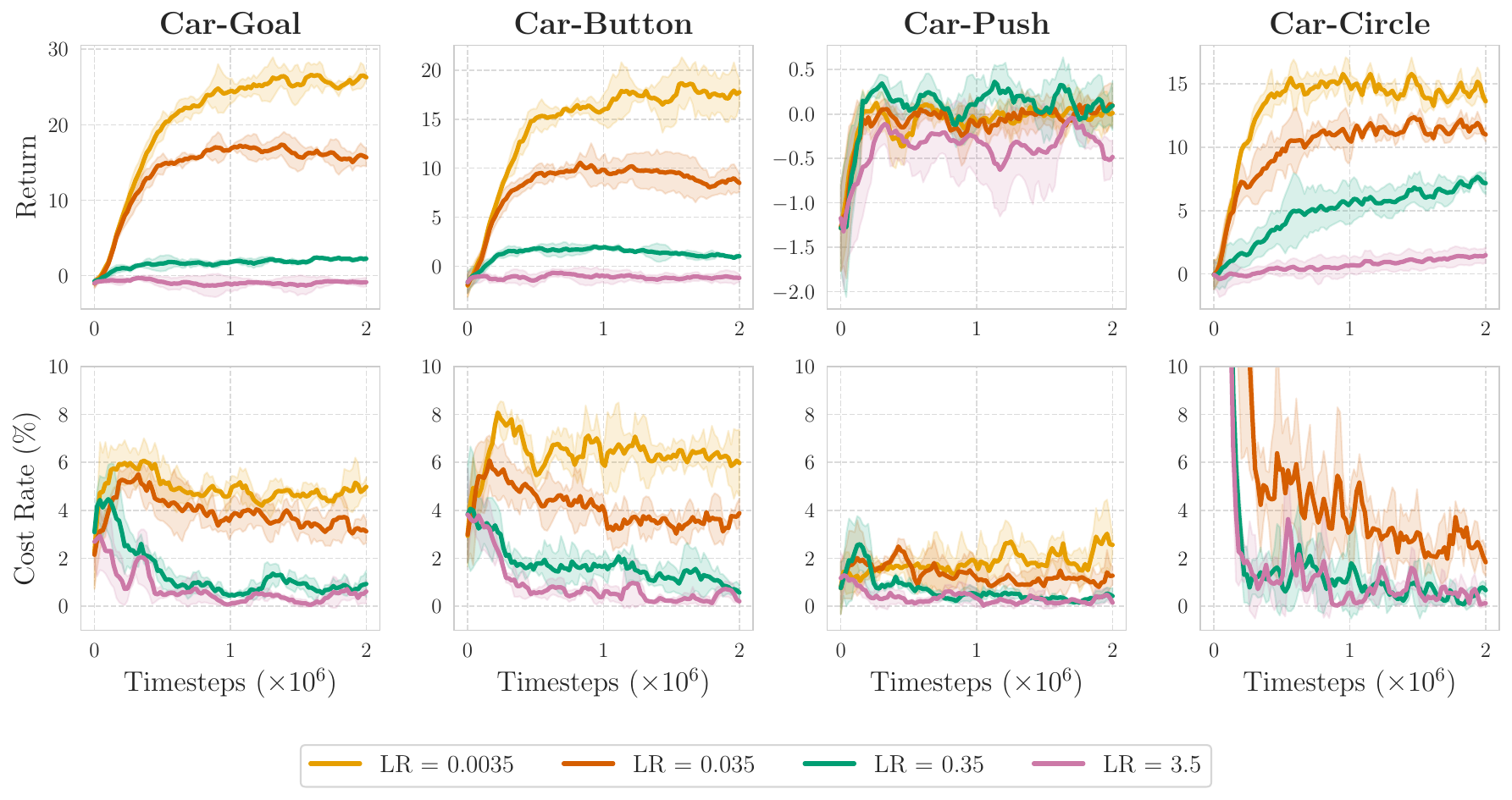}
    \centering
    \includegraphics[width=1.0\columnwidth]{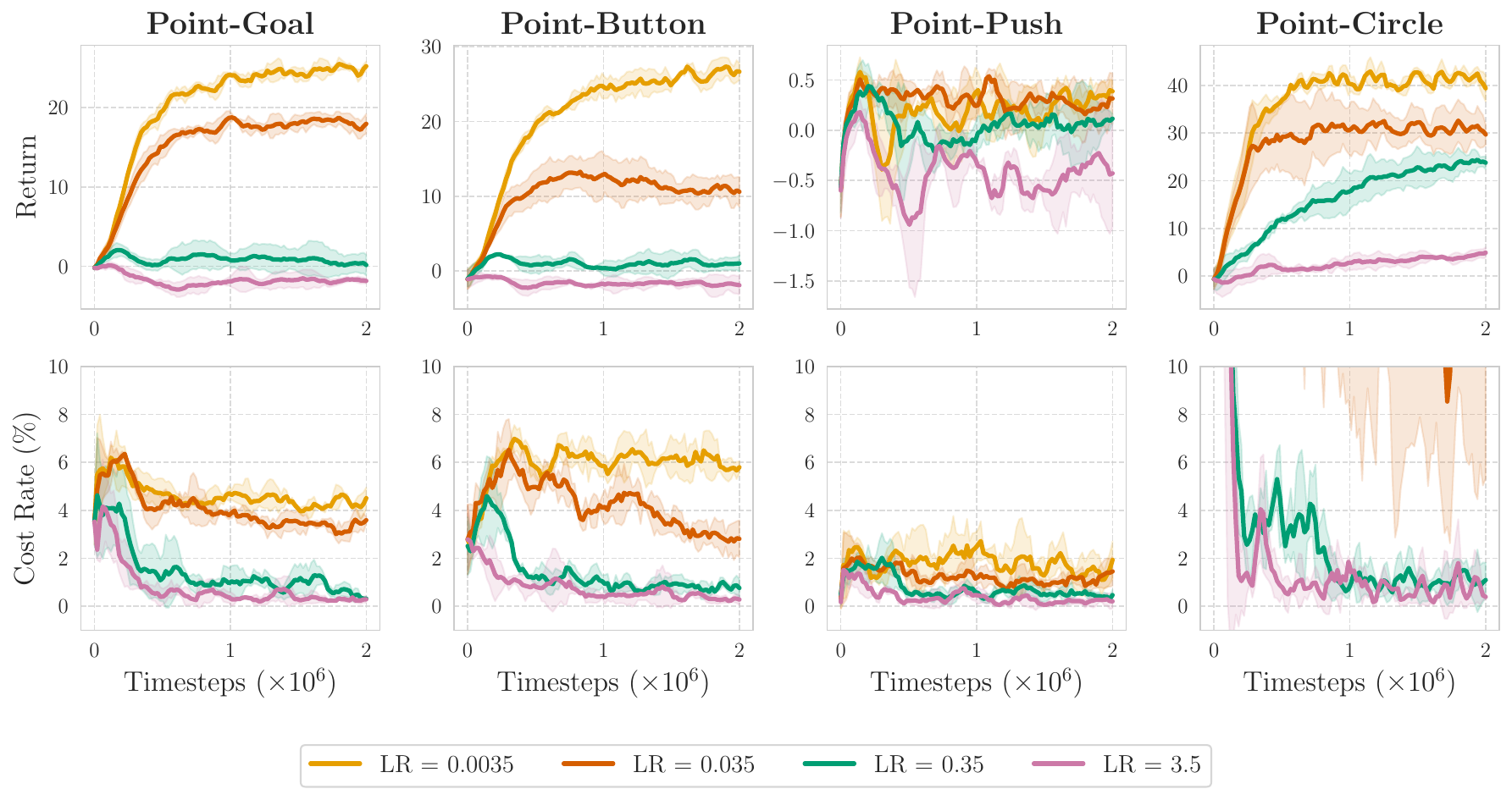}
\caption{Training curves for PPOLag algorithm under varying Lagrangian learning rates. The plots illustrate significant performance variations depending on the learning rate. For our main comparisons, we selected a learning rate of 0.035, which achieves the best trade-off between reward maximization and constraint satisfaction.}
\label{fig:lr_ppolag}
\end{figure}

\section{Fix Parameters vs. Varying Hidden Parameters}\label{apx: hidden_vs_fixed}
\begin{figure}[!t]
\includegraphics[width=1\columnwidth]{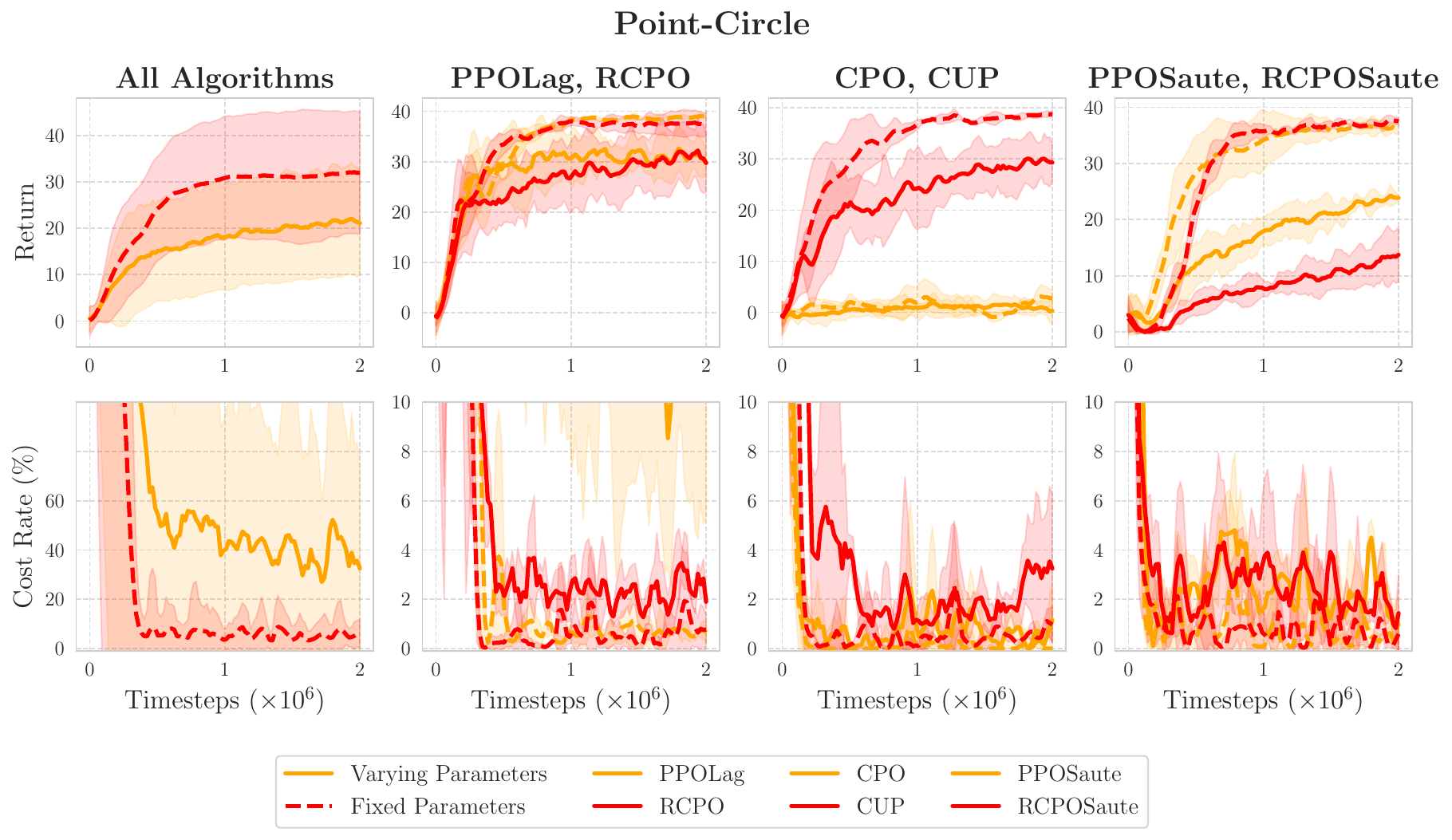}

\includegraphics[width=1\columnwidth]{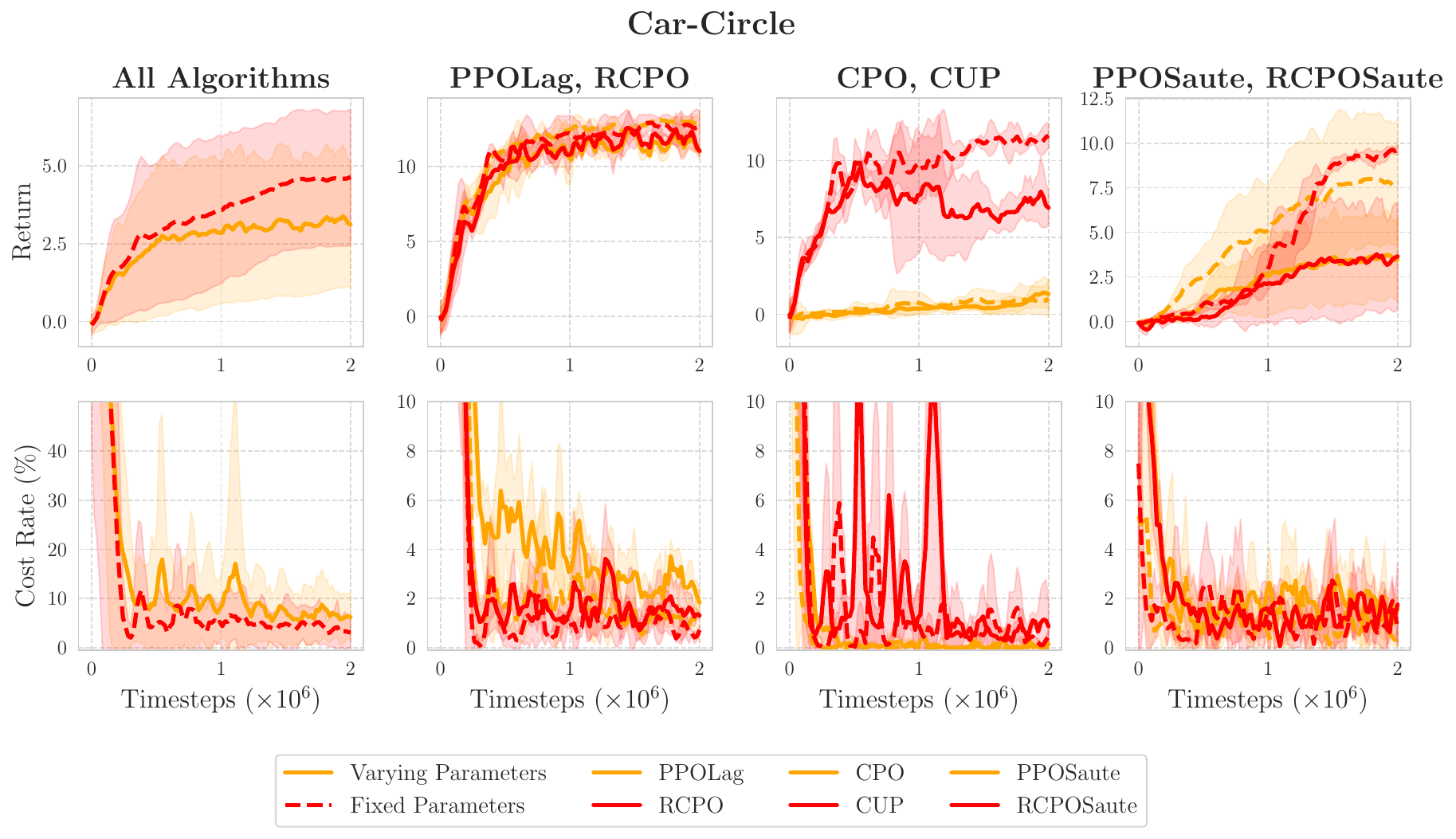}
\caption{Varying hidden parameters pose significant challenges for safe RL algorithms, even when hidden parameter values are provided as input. The first column plot aggregates results across six algorithms: vanilla Lagrangian methods (RCPO and PPO-Lag), trust region/projection methods (CPO and CUP), and safety augmentation techniques (PPO-Saute and TRPO-Saute). Results with varying hidden parameters are shown as solid lines, while those with fixed parameters are depicted as dotted lines. Columns 1, 2, and 3 present comparative results across these algorithm groups.}
\vspace{0.5cm}
\label{fig:hidden-params-challenge}
\end{figure}

We evaluate the algorithms under two distinct experimental settings to test performance difference when environment dynamics shifts:
\begin{tightitems}
    \item \textbf{Fixed Parameters}: In this setting, each algorithm is trained and evaluated in an environment with a single, constant set of parameters $\context$ (gravity, damping, density, mass, friction) for the entire duration of training.
    \item \textbf{Varying Hidden Parameters}: In contrast, for this setting, the underlying physical parameters $\context$ of the environment such as gravity, damping, mass, inertia, and friction are randomized at the start of each new episode. To demonstrate the challenge of adapting to varying hidden parameters, we explicitly inform the algorithms of these changes via their input. For example, if gravity is halved from 9.8 to 4.9, a factor of 0.5 is provided as input to the policy.
\end{tightitems}
As demonstrated in Figure~\ref{fig:hidden-params-challenge}, we observe a noticeable degradation in the performance of all algorithms under the varying parameter setting, despite being explicitly informed of the magnitude of the changes. More precisely, for Point environment, the total aggregated return across training differs significantly: 25.87 for fixed parameters versus 16.49 for varying parameters. Similarly, for the cost, the values are 18.0 for fixed parameters and 38.41 for varying parameters, reflecting more than double the total cost violations during training.

\section{Cost Functions}\label{apx: cost}
In this section, we present two cost functions used in our experiments. 
Each cost function conforms to the form:
\[
C(s_t, a_t, s_{t+1}) = \mathbb{I}\left\{ \nu(e(s_{t+1}), E_{t+1}) \leq 0 \right\},
\]
as defined in Section \ref{subsec: adaptive shielding}, where:
\begin{itemize}
    \item \( e: S \to \mathbb{R}^{n_1}\) extracts agent-centered safety features from the next state \( s_{t+1} \),
    \item \( E_{t+1} \in \mathbb{R}^{n_2} \) captures environment features (e.g., obstacle positions, safe region boundaries),
    \item \( \nu: \mathbb{R}^{n_1} \times \mathbb{R}^{n_2} \to \mathbb{R} \) is a Lipschitz continuous function, with \( \nu > 0 \) indicating safety and \( \nu \leq 0 \) indicating a violation.
\end{itemize}

\begin{table}[h]
    \centering
    % Ensure you have \usepackage{graphicx} and \usepackage{booktabs} in your preamble
    \resizebox{\columnwidth}{!}{%
        \renewcommand{\arraystretch}{1.3}
        \begin{tabular}{lccc}
        \toprule
        \textbf{Task} & $e(s)$ & $E_t$ & $\nu\!\bigl(e(s),E\bigr)$ \\
        \midrule
        Collision avoidance &
        $\operatorname{pos}(s)\!\in\!\mathbb{R}^3$ &
        $\{X_i\}_{i=1}^M\!\subset\!\mathbb{R}^3$ &
        $\displaystyle\min_i\|e(s)-X_i\|_2-d_{\text{safe}}$ \\
        Safety-region compliance &
        $\operatorname{pos}(s)\!\in\!\mathbb{R}^2$ &
        $\mathcal{S}_{\mathrm{safe}}\!\subset\!\mathbb{R}^2$ &
        $\displaystyle\operatorname{dist}
           \!\bigl(e(s),\mathbb{R}^2\!\setminus\!\mathcal{S}_{\mathrm{safe}}\bigr)-\varepsilon$ \\
        \bottomrule
        \end{tabular}%
    }
    \caption{Examples of function $\nu$ for different safety tasks.}
    \label{tab:margin-examples}
\end{table}

\startpara{Collision Avoidance} Given the robot's position $\operatorname{pos}(s) \in \mathbb{R}^3$ and the set of obstacle positions $\left\{X_i\right\}_{i=1}^M \subset \mathbb{R}^3$ encoded in the state $s$, we mark a transition unsafe whenever the robot comes closer than a safety margin $d>0$ to any obstacle:
$$
C_d\left(s, a, s^{\prime}\right)=\mathbb{I}\left[\min _i\left\|\operatorname{pos}(s)-X_i\right\|<d\right]
$$
Thus $C_d=1$ whenever the robot violates the distance constraint, encouraging policies that keep a safe distance to obstacles. 

\paragraph{Safety Region Compliance}
To ensure the robot remains within a designated safety region, we evaluate its position in the next state, $\operatorname{pos}(s') = (x, y) \in \mathbb{R}^2$, against a predefined safe region $\operatorname{safe\_region} \subseteq \mathbb{R}^2$. A penalty is incurred if the position lies outside this region:
\[
C_d(s, a, s') = \mathbb{I}[\operatorname{pos}(s') \notin \operatorname{safe\_region}].
\]
This cost function assigns a value of 1 when the robot deviates from the safety region, indicating a safety violation.

\section{Average Cost Minimization and Cost Value Function}\label{apx: avg cost}
Our problem formulation targets minimizing the average cost per time step, distinct from the cumulative discounted cost over an infinite horizon typically addressed by Lagrangian-based methods like TRPO-Lag and PPO-Lag. The connection between cumulative discounted cost and average cost is well-established~\citep{puterman2014mdp}:
$$
\lim_{\gamma \rightarrow 1^{-}}(1 - \gamma)V_\cost^\pi(s_0, \context_0) = \xi^\pi(s_0, \context_0),
$$
where $V_\cost^\pi(s_0, \context_0)$ denotes the value function for the cost under policy $\pi$ starting from state $s$, parameter $\context$, and $\xi^\pi(s, \context) = 
\lim _{H \rightarrow \infty} \frac{1}{H} \mathbb{E}_{\pi^{*}, T_\context}\left[\sum_{t=0}^{H-1} C_d(s_t, a_t, s_{t+1}) \mid s_0 = s, \context \right]$ represents the expected average cost for parameter $\context$. 
% Thus, by setting the constraint limit to zero in Lagrangian optimization, we ensure the average cost approaches zero, aligning with our safety objectives.
\section{Experimental Details}\label{apx: exp}
\begin{figure}[!t]
    \centering
    \includegraphics[width=1.0\columnwidth]{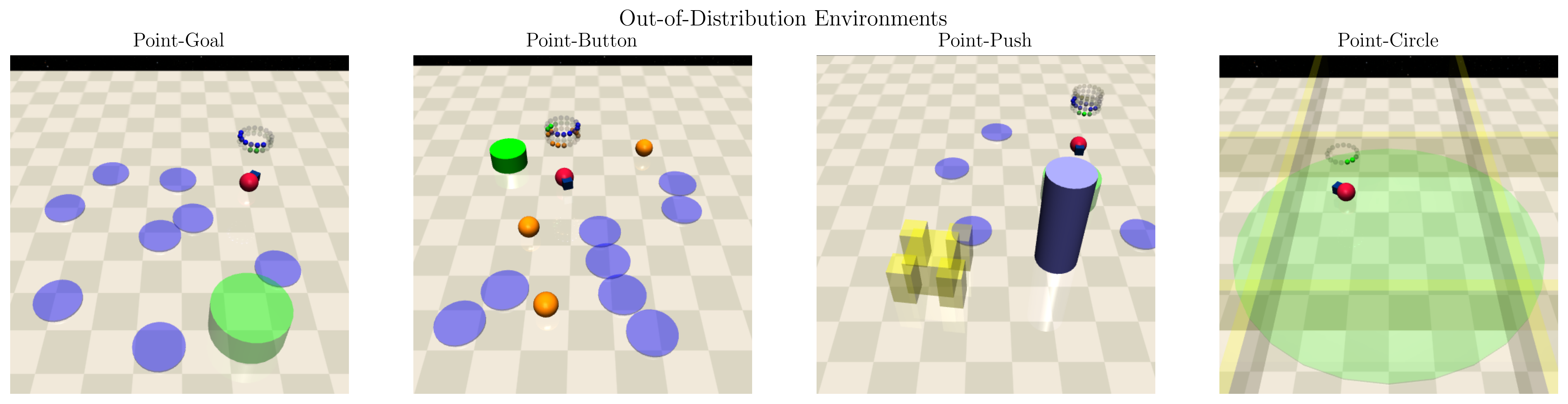}
    \caption{Four out-of-distribution environments for evaluation.}
    \label{fig:envs}
    \vspace{0.5cm}
\end{figure}

For out-of-distribution (OOD) evaluation, we modify Safety Gymnasium task environments: Goal, Button, and Push, by adding two additional hazard locations to increase complexity. For Circle, we keep the same layout since the wall already blokcs four sides.

To introduce varying hidden parameters, each episode independently samples gravity, damping, mass, inertia, friction multipliers by randomly selecting one of two intervals, $[0.15, 0.3]$ or $[1.7, 2.5]$, with equal probability and uniformly sampling a value from the chosen interval, ensuring diverse environmental conditions. For Circle task, all settings remain the same except for damping, which is sampled from $[1.7, 2.5]$. This adjustment addresses instability in MuJoCo simulator when combined with Circle task, where agents are expected to learn circling behavior. Lower damping factors render the simulator unstable, necessitating this range.

Training is conducted on an Ubuntu 22.04 server using a Slurm job scheduler, which dynamically allocates computational resources. As resource allocations vary across runs, we do not report runtime comparisons for training.

% \newpage
% \input{checklist}
\end{document}